\documentclass[accepted]{uai2026} 
 
\usepackage[american]{babel}

\usepackage{natbib}
\usepackage{mathtools} 
\usepackage{amsthm, amssymb}
\usepackage{bm}
\usepackage{xparse}    

\usepackage{graphicx}
\usepackage{subcaption}
\usepackage{standalone}
\usepackage{tikz}
\usepackage{pgfplots}
\pgfplotsset{compat=1.18}

\usepackage{booktabs}
\usepackage{tabularx}
\usepackage{array}
\usepackage{adjustbox}

\usepackage[disable]{todonotes}

\usepackage{xurl} 
\usepackage[nameinlink]{cleveref}
 
\usepackage{algorithm}
\usepackage{algpseudocode}
\algrenewcommand\algorithmicrequire{\textbf{Input:}}
\algrenewcommand\algorithmicensure{\textbf{Output:}}
 
\newtheorem{theorem}{Theorem}
\newtheorem{lemma}[theorem]{Lemma}
\newtheorem{proposition}[theorem]{Proposition}
\newtheorem{corollary}[theorem]{Corollary}

\newtheorem{remark}[theorem]{Remark}
  
\newcommand{\ent}[2][]{%
  \mathbb{H}%
  \ifblank{#1}{\left[#2\right]}{\csname #1l\endcsname[#2\csname #1r\endcsname]}%
}

\newcommand{\hent}[2][]{%
  \mathrm{h} \!%
  \ifblank{#1}{\left[#2\right]}{\csname #1l\endcsname[#2\csname #1r\endcsname]}%
}

\newcommand{\xent}[3][]{%
  \mathbb{H}%
  \ifblank{#1}{\left[#2, #3\right]}{\csname #1l\endcsname[#2, #3\csname #1r\endcsname]}%
}
 
\newcommand{\KL}[3][]{%
  \mathbb{D}_{\mathrm{KL}}%
  \ifblank{#1}{\left[#2 \| #3 \right]}{\csname #1l\endcsname[#2 \| #3 \csname #1r\endcsname]}%
}
\newcommand{\Ex}[3][]{%
  \mathbb{E}_{#2}%
  \ifblank{#1}{\left[#3\right]}{\csname #1l\endcsname[#3\csname #1r\endcsname]}%
}

\newcommand{\dif}{\mathrm{d}}

\newcommand\lref[1]{\hyperref[#1]{Lemma~\ref*{#1}}}

\newcommand{\capsecref}[1]{\hyperref[#1]{Section~\ref*{#1}}}
\newcommand{\refappx}[1]{\hyperref[#1]{Appendix~\ref*{#1}}}
\newcommand{\refthm}[1]{\hyperref[#1]{Theorem~\ref*{#1}}}
\newcommand{\refprop}[1]{\hyperref[#1]{Proposition~\ref*{#1}}}

\title{What Type of Inference is Active Inference?}

\author[1,2]{\href{mailto:w.w.l.nuijten@tue.nl?Subject=Your UAI 2026 paper}{Wouter~W.~L.~Nuijten}{}}
\author[1]{Mykola~Lukashchuk}
\author[1]{Thijs~van~de~Laar}
\author[1,2]{Bert~de~Vries}

\affil[1]{%
    Department of Electrical Engineering\\
    Eindhoven University of Technology\\
    Eindhoven, the Netherlands
}
\affil[2]{%
    Lazy Dynamics\\
    Utrecht, the Netherlands
}

\begin{document}
\maketitle

\begin{abstract}
Active inference casts decision-making as inference, with the Expected Free Energy (EFE) unifying goal-directed and information-seeking behavior.
Recent work showed that EFE minimization can be written as Variational Free Energy (VFE) minimization on a generative model augmented with epistemic priors.
We prove that the VFE of the augmented model can be rewritten as the VFE of the predictive model plus explicit entropy-correction terms, making the EFE contribution transparent.
We then show that proper EFE-based planning requires combining these epistemic corrections with a planning correction that turns marginal inference into policy optimization, yielding a full variational characterization of EFE-based planning.
This clarifies which corrections are needed for cross-entropy planning and for full EFE-based planning.
The same entropy-corrected formulation leads to a detailed message-passing scheme for EFE-based planning together with simpler ablations.
Experiments on three grid-world environments show that full EFE-based planning outperforms ablations that omit either the planning correction or the epistemic corrections.
\end{abstract}


\section{Introduction}
\label{sec:introduction}

Sequential decision-making under uncertainty requires balancing exploitation of current knowledge against exploration to reduce uncertainty.
Classical reinforcement learning and optimal control address this through value functions or policy optimization~\citep{sutton_reinforcement_2018,bertsekas_dynamic_2012}, but typically treat reward maximization and uncertainty reduction as separate objectives.

Planning-as-Inference (PAI) offers an alternative by casting control as probabilistic inference~\citep{attias_planning_2003,toussaint_robot_2009}, connecting control to variational inference and message passing~\citep{levine_reinforcement_2018}.
Standard PAI methods optimize objectives such as expected utility or cross-entropy to preferences, but do not include an explicit epistemic drive to reduce environmental uncertainty.

Active inference (AIF) addresses this by minimizing the Expected Free Energy (EFE), unifying instrumental and epistemic objectives~\citep{friston_active_2015,dacosta_active_2020}.
\citet{nuijten_expected_2026} showed that EFE minimization can be reformulated as Variational Free Energy (VFE) minimization on a model augmented with \emph{epistemic priors}.
This brings active inference into the variational framework, but leaves open a key distinction: obtaining EFE inside a marginal variational objective is not yet the same as planning over policies.
Proper planning additionally requires the planning correction of \citet{lazaro-gredilla_what_2024}.
This paper makes that separation explicit and derives a message-passing scheme for the combined objective.
Our contributions are:
\begin{itemize}
  \item We show that proper EFE-based planning requires combining two entropy corrections: the planning correction of \citet{lazaro-gredilla_what_2024}, which turns the expected-utility variational objective into policy optimization, and the epistemic corrections of \citet{nuijten_expected_2026}, which turn marginal VFE minimization into EFE minimization. Together they yield a full variational characterization of EFE-based planning.
  \item We derive a principled message-passing family for these entropy-corrected objectives. Each added entropy term induces a corresponding channel reparameterization that resolves the circularity of posterior-dependent epistemic priors, and recovers both belief propagation and full active-inference planning within the same derivation.
  \item We validate the framework on three grid-world environments. They differ in where the value of information gathering lies: in some, sensing actions change the observation model itself; in others, a sensing action's only value is \emph{novelty}, the expected information gain about latent parameters. The alternating heuristic of \citet{nuijten_message_2026} finds epistemic actions only in the first case; only the current joint scheme captures novelty.
\end{itemize}
\section{Background}
\label{sec:background}

\subsection{Generative Model for Sequential Decision-Making}
\label{subsec:generative_model}

We consider an agent that maintains a generative model predicting future observations, states, and the consequences of actions.
Following standard conventions~\citep{levine_reinforcement_2018,lazaro-gredilla_what_2024}, we write this as a rollout model:
\begin{align}
  p(\bm{y}, \bm{x}, \bm{u}, \theta) = {} & p(\theta) p(x_0) \prod_{t=1}^T p(y_t | x_t, \theta) \notag                   \\
                                         & \cdot p(x_t | x_{t-1}, u_t, \theta) \, p(u_t)\,, \label{eq:generative_model}
\end{align}
where $\bm{x} = (x_0, \ldots, x_T)$ are latent states, $\bm{y} = (y_1, \ldots, y_T)$ are observations, $\bm{u} = (u_1, \ldots, u_T)$ are actions, and $\theta$ are unknown model parameters.
Here $t = 0$ denotes the current time, and the model predicts a rollout into the future over horizon $T$.
The dynamics $p(x_t | x_{t-1}, u_t, \theta)$ may depend on parameters $\theta$, capturing model uncertainty.
Throughout this paper we work in the discrete regime, so all integrals over $(y_t, x_t, \theta)$ in what follows reduce to finite sums.

To encode goals, we augment the model with \emph{preference priors} $\hat{p}(x_t)$ and $\hat{p}(y_t)$ over desired states and observations~\citep{levine_reinforcement_2018}:
\begin{align}
  \hat{p}(\bm{y}, \bm{x}, \bm{u}, \theta) \propto {} & p(\theta) p(x_0) \prod_{t=1}^T p(y_t | x_t, \theta) \, p(x_t | x_{t-1}, u_t, \theta) \notag \\
                                                     & \cdot p(u_t) \, \hat{p}(x_t) \, \hat{p}(y_t)\,. \label{eq:biased_model}
\end{align}
These preference priors can be understood as proportional to exponentiated rewards: $\hat{p}(x) \propto \exp(R(x))$, connecting planning-as-inference to reward maximization~\citep{todorov_general_2008}.
Together, the rollout model~\eqref{eq:biased_model} with preferences $\hat{p}(x_t)$ and $\hat{p}(y_t)$ defines our planning problem over horizon $T$: find a policy $q(u_t | x_{t-1})$ whose induced predicted trajectory agrees with the preferences.

\subsection{Variational Free Energy}
\label{subsec:vfe}

Given a generative model, variational inference approximates the posterior by minimizing the Variational Free Energy over a family of tractable distributions $q$~\citep{blei_variational_2017}:
\begin{equation} \label{eq:vfe_def}
  F_{\hat{p}}[q] = \KL{q(\bm{y}, \bm{x}, \bm{u}, \theta)}{\hat{p}(\bm{y}, \bm{x}, \bm{u}, \theta)}\,.
\end{equation}
Since all variables are unobserved in the planning setting (they represent future quantities), minimizing $F_{\hat{p}}[q]$ yields beliefs about future trajectories that are consistent with both the dynamics and the preference priors.

\subsection{Factor Graphs and the Bethe Approximation}
\label{subsec:bethe}

The generative model~\eqref{eq:biased_model} factorizes into local terms, which can be represented as a Forney-style factor graph (FFG)~\citep{forney_codes_2001,loeliger_factor_2007}.
In an FFG, nodes represent factors (probability distributions) and edges represent variables; an edge connects to a node when the variable appears in that factor's scope.
We write $\mathcal{E}(a)$ for the set of edges (variables) adjacent to factor node $a$, and $\mathcal{V}(i)$ for the set of factor nodes adjacent to edge $i$. The variables in the scope of factor $a$ are denoted $\bm{s}_a$.

The \emph{Bethe approximation}~\citep{yedidia_constructing_2005} exploits this structure by constraining the variational distribution to respect the factorization induced by the graph.
Each node $a$ maintains a local belief $q_a(\bm{s}_a)$ over its adjacent variables $\bm{s}_a$, and each edge $i$ maintains a singleton belief $q_i(s_i)$.
These beliefs must satisfy local consistency constraints:
\begin{equation}\label{eq:local_consistency}
  \int q_a(\bm{s}_a) \, \dif \bm{s}_{a \setminus i} = q_i(s_i) \quad \text{for all } i \in \mathcal{E}(a)\,.
\end{equation}
Under these constraints, with entropy corrections that prevent double-counting of shared variables, the VFE reduces to the \emph{Bethe Free Energy}:
\begin{align}
  F_{\text{Bethe}}[q] = {} & \sum_{a \in \mathcal{V}} \KL{q_a(\bm{s}_a)}{f_a(\bm{s}_a)} \notag                      \\
                           & + \sum_{i \in \mathcal{E}} (d_i - 1) \, \ent{q_i(s_i)}\,, \label{eq:bethe_free_energy}
\end{align}
where $\mathcal{V}$ is the set of nodes, $\mathcal{E}$ is the set of edges, $f_a$ is the factor at node $a$, and $d_i$ is the degree (number of connected nodes) of edge $i$.
Minimizing the Bethe Free Energy via message passing yields the belief propagation algorithm; on tree-structured graphs, this recovers exact marginals~\citep{pearl_reverend_1982}.
Details are provided in \refappx{appx:bethe}.

\subsection{Epistemic Priors}
\label{subsec:epistemic_priors}

Standard variational inference does not distinguish between variable types: actions, states, observations, and parameters all enter the VFE symmetrically.
\citet{nuijten_expected_2026} clarified the \emph{epistemic priors} $\tilde{p}(u_t)$, $\tilde{p}(x_t)$, and $\tilde{p}(y_t, x_t)$ that encode which variables are controlled, inferred, or observed.
These priors augment the generative model:
\begin{align}
  \tilde{p}(\bm{y}, \bm{x}, \bm{u}, \theta) \propto {} & \hat{p}(\bm{y}, \bm{x}, \bm{u}, \theta) \notag                                                      \\
                                                       & \prod_{t=1}^T \tilde{p}(u_t) \, \tilde{p}(x_t) \, \tilde{p}(y_t, x_t)\,. \label{eq:augmented_model}
\end{align}

Each prior is defined in terms of entropies of conditionals\footnote{We write $\hent{q(y|x)}$ for the \emph{entropy of the conditional} $q(y|x)$, a function of $x$, and $\ent{q(y|x)}$ for the \emph{conditional entropy}, a scalar: $\ent{q(y|x)} = \Ex{q(x)}{\hent{q(y|x)}}$.} of the variational distribution $q$:
\begin{subequations}\label{eq:epistemic_priors}
  \begin{gather}
    \tilde{p}(u_t) \propto \exp\bigl(\hent{q(x_t, x_{t-1} | u_t)} - \hent{q(x_{t-1} | u_t)}\bigr)\,, \label{eq:epistemic_u}  \\
    \tilde{p}(x_t) \propto \exp \bigl(\Ex{q(\theta | x_t)}{-\hent{q(y_t | x_t, \theta)}}\bigr)\,, \label{eq:epistemic_x} \\
    \tilde{p}(y_t, x_t) \propto \exp\bigl(\KL{q(\theta | y_t, x_t)}{q(\theta | x_t)}\bigr)\,. \label{eq:epistemic_xy}
  \end{gather}
\end{subequations}
\citet{nuijten_expected_2026} showed that the VFE of the augmented model $F_{\tilde{p}}[q]$ is an upper bound on the expected EFE.
A notable feature is that the epistemic priors depend on the variational distribution $q$ itself, creating a circular dependency that complicates optimization.
A central contribution of this paper is to make that circularity explicit as entropy corrections in the objective, rather than leaving it implicit in posterior-dependent priors.
\section{Related Work}
\label{sec:related_work}

\paragraph{Planning-as-Inference.}
The PAI framework casts optimal control as inference in graphical models~\citep{attias_planning_2003,toussaint_robot_2009}, connecting control to variational methods and message passing~\citep{levine_reinforcement_2018}.
Closely related formulations include linearly-solvable MDPs~\citep{todorov_linearlysolvable_2006}, path-integral control~\citep{kappen_path_2005}, KL control~\citep{kappen_optimal_2012}, and stochastic optimal control~\citep{rawlik_stochastic_2012}.
A known challenge is \emph{optimistic inference}: conditioning on goals biases posteriors toward trajectories assuming favorable outcomes~\citep{levine_reinforcement_2018}.
This issue was addressed by \citet{lazaro-gredilla_what_2024} with an entropy correction that turns the expected-utility variational objective into a proper control objective by penalizing plans that rely on fortuitous state realizations.

\paragraph{Active inference.}
Active inference minimizes the Expected Free Energy, combining instrumental and epistemic value~\citep{friston_active_2015,dacosta_active_2020,parr_active_2022}.
Existing methods employ specialized procedures: tree search~\citep{friston_sophisticated_2021}, branching~\citep{champion_branching_2022}, or dynamic programming~\citep{paul_efficient_2024}.
The status of the EFE as a variational objective has itself been scrutinized~\citep{millidge_whence_2021}, motivating several works that seek to unify EFE with variational inference.
In a related direction, \citet{palmieri_unifying_2022} combined estimation and control via belief propagation.
Building on the Generalized Free Energy~\citep{parr_generalised_2019}, \citet{koudahl_realising_2023} and \citet{vandelaar_realizing_2024} modified the VFE to include epistemic terms.
Most recently, \citet{nuijten_expected_2026} showed that EFE minimization can be formulated as VFE minimization with epistemic priors, refining the preliminary construction of \citet{devries_expected_2025}, and \citet{nuijten_message_2026} implemented this via message passing with alternating updates between the posterior and epistemic priors.
A separate, complementary line of work~\citep{odonoghue_making_2020,tarbouriech_probabilistic_2023} casts exploration as posterior inference over value functions, targeting uncertainty in the value function rather than in the model parameters~$\theta$.
Our contribution is to connect these lines: the epistemic-prior construction provides the EFE correction to a marginal objective, the L\'azaro-Gredilla construction provides the planning correction, and their combination yields a principled message-passing formulation of EFE-based planning.
\section{Entropy Corrections for EFE-Based Planning}
\label{sec:entropic_inference}

We now show that the epistemic priors from \Cref{subsec:epistemic_priors} and the planning correction of \citet{lazaro-gredilla_what_2024} play different roles.
The epistemic priors identify the corrections that transform marginal VFE minimization into EFE minimization.
The planning correction turns an expected-utility variational objective into a planning objective over policies.
Proper active-inference planning requires both.
More broadly, specifying a planning method is a three-way modeling choice: the generative model, the variable-role assignment among controlled, state, parameter, and observed quantities, and the entropy correction selecting the objective.
The AIF-specific commitment lives entirely in the last.
With no entropy corrections, minimizing the VFE $F_{\hat{p}}[q]$ of the preference-augmented model is simply marginal inference, or in the control setting, KL control~\citep{kappen_optimal_2012}\footnote{Kappen-style tempering $\hat{p}(x) \propto \exp(R(x)/\lambda)$~\citep{kappen_path_2005,kappen_optimal_2012} parameterizes the \emph{generative model} (the preference prior), whereas \Cref{tab:method_comparison} parameterizes the \emph{objective} via entropy corrections; the two axes are orthogonal.}; different objectives arise by adding entropy corrections to this same baseline.
The key question is which corrections are needed for proper EFE-based planning.

\subsection{Cross-Entropy Planning}
\label{subsec:cross_entropy_planning}

Marginal variational inference minimizes a cost over the full $q$, which lets the joint commit to favorable state realizations that the policy alone cannot produce.
\citet{lazaro-gredilla_what_2024} showed that turning this into planning, where the extracted policy $q(u_t|x_{t-1})$ actually attains the cost it appears to minimize, requires an entropy correction that penalizes action uncertainty:
\begin{equation} \label{eq:cross_entropy_correction}
  \sum_{t=1}^T \ent{q(x_{t-1}, u_t)} - \ent{q(x_{t-1})} = \sum_{t=1}^T \ent{q(u_t | x_{t-1})}\,.
\end{equation}
See \refappx{appx:planning_entropy_correction} for the derivation.

Following the control-as-inference framework~\citep{levine_reinforcement_2018}, rewards can be encoded as preference distributions via $\hat{p}(x) \propto \exp(R(x))$.
As shown in \refappx{appx:planning_cross_entropy}, adding the entropy correction \eqref{eq:cross_entropy_correction} to the VFE transforms the objective into minimizing the \emph{cross-entropy} between the state marginals and the preference distribution:
\begin{equation} \label{eq:cross_entropy_objective}
  \min_{q} \sum_{t=1}^T \xent{q(x_t)}{\hat{p}(x_t)} + \text{const}\,,
\end{equation}
where $\xent{q}{\hat{p}} = -\Ex{q}{\log \hat{p}}$ is the cross-entropy.
Since $\xent{q}{\hat{p}} = -\Ex{q}{R(x)} + \text{const}$, minimizing cross-entropy is equivalent to maximizing expected reward.

We call this \textbf{cross-entropy planning}: the agent maximizes expected reward (equivalently, minimizes cross-entropy to preferences) while committing to a policy.

\subsection{EFE as Entropy Corrections}
\label{subsec:active_inference}

The epistemic priors introduced in \Cref{sec:background} augment the generative model with terms that encode variable roles.
The VFE of this augmented model can be expressed as the original VFE plus entropy corrections.
This rewriting is an exact algebraic identity.

\begin{theorem}[Entropy-corrected form of active inference] \label{thm:entropy_decomposition}
  The variational objective of \citet{nuijten_expected_2026} can be written as:
  \begin{multline}\label{eq:aif_correction}
    F_{\tilde{p}}[q] = F_{\hat{p}}[q] + \sum_{t=1}^{T} 2\ent{q(y_t | x_t, \theta)} \\
    - \ent{q(x_t | x_{t-1}, u_t)} - \ent{q(y_t | x_t)}\,.
  \end{multline}
\end{theorem}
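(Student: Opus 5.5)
The proof expands the KL divergence defining $F_{\tilde{p}}[q]$ and computes the expectation of each epistemic log-prior in closed form. By the definition of the augmented model \eqref{eq:augmented_model}, $\log \tilde{p} = \log \hat{p} + \sum_{t}\bigl(\log\tilde{p}(u_t) + \log\tilde{p}(x_t) + \log\tilde{p}(y_t,x_t)\bigr)$, up to normalization. Substituting this into \eqref{eq:vfe_def} gives
\begin{equation*}
F_{\tilde{p}}[q] = F_{\hat{p}}[q] - \sum_{t=1}^T \Bigl( \Ex{q}{\log \tilde{p}(u_t)} + \Ex{q}{\log \tilde{p}(x_t)} + \Ex{q}{\log \tilde{p}(y_t,x_t)} \Bigr).
\end{equation*}
In this step the epistemic priors are treated as fixed functions of their arguments, with $q$ frozen inside them. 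The normalizers are dropped, following the convention of \citet{nuijten_expected_2026} that the epistemic factors enter the model as unnormalized exponentials; alternatively they are constants independent of the variables being averaged. Under that convention each expectation reduces to an information-theoretic quantity of $q$, so the three terms can be handled separately.

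\emph{Action prior.} Averaging \eqref{eq:epistemic_u} over $q(u_t)$ turns the entropies of conditionals into conditional entropies:
\begin{equation*}
\Ex{q}{\log\tilde{p}(u_t)} = \ent{q(x_t,x_{t-1}|u_t)} - \ent{q(x_{t-1}|u_t)} = \ent{q(x_t|x_{t-1},u_t)},
\end{equation*}
where the last equality is the chain rule.

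\emph{State prior.} Averaging \eqref{eq:epistemic_x} over $q(x_t)$ and then over $q(\theta|x_t)$ gives $-\ent{q(y_t|x_t,\theta)}$.

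\emph{Joint prior.} Averaging \eqref{eq:epistemic_xy} over $q(y_t,x_t)$ gives the conditional mutual information $I_q(\theta;y_t|x_t)$. Expanding it through the symmetry of mutual information yields $\ent{q(y_t|x_t)} - \ent{q(y_t|x_t,\theta)}$.

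Combining the three terms with the overall minus sign produces
\begin{equation*}
-\ent{q(x_t|x_{t-1},u_t)} + \ent{q(y_t|x_t,\theta)} - \ent{q(y_t|x_t)} + \ent{q(y_t|x_t,\theta)}
\end{equation*}
for each $t$. Summing over $t$ gives exactly the correction in \eqref{eq:aif_correction}, including the factor $2$ on $\ent{q(y_t|x_t,\theta)}$.

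The main obstacle is justifying the first step rigorously, i.e.\ that $F_{\tilde{p}}[q]$ means the KL divergence to the model with the priors evaluated at the current $q$ and held fixed, and that no normalization constants survive. I will state this convention explicitly, matching how \citet{nuijten_expected_2026} define the objective; if normalized priors are intended instead, the identity holds up to $q$-independent constants. A smaller point is the mutual-information manipulation for the joint prior. There I would write $\KL{q(\theta|y_t,x_t)}{q(\theta|x_t)}$ as $\log q(y_t|x_t,\theta) - \log q(y_t|x_t)$ averaged under $q(\theta|y_t,x_t)$, using Bayes' rule within $q$, and then take the outer expectation over $q(y_t,x_t)$.
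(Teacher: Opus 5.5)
Your proposal is correct and follows the paper's proof essentially step for step. You split $\log\tilde p$ into $\log\hat p$ plus the three epistemic log-priors, taken as unnormalized exponentials exactly as in the paper's three lemmas, and evaluate each expectation separately. The only cosmetic difference is in the novelty term: you obtain it via the conditional mutual information $I_q(\theta;y_t\,|\,x_t)$, whereas the paper expands it into four joint entropies and regroups. One small correction to your hedge: if the epistemic priors were normalized, their log-normalizers would depend on $q$, since the priors are built from $q$. The identity would then hold only up to $q$-dependent terms, not $q$-independent constants, so the unnormalized convention that you and the paper adopt is genuinely needed.
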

\begin{proof}
  See \refappx{appx:entropy_decomposition_proof}.
\end{proof}

Each prior contributes a specific correction: $\tilde{p}(u_t)$ produces $-\ent{q(x_t | x_{t-1}, u_t)}$, $\tilde{p}(x_t)$ produces $+\ent{q(y_t | x_t, \theta)}$, and $\tilde{p}(y_t, x_t)$ contributes a further $+\ent{q(y_t | x_t, \theta)} - \ent{q(y_t | x_t)}$ via the identity $\Ex{q}{\KL{q(\theta|y_t,x_t)}{q(\theta|x_t)}} = \ent{q(y_t|x_t)} - \ent{q(y_t|x_t,\theta)}$, producing the factor of two.
Together these corrections produce EFE minimization inside a marginal variational objective: the $+2\,\ent{q(y_t|x_t,\theta)}$ term pushes beliefs toward state-parameter configurations whose observations are informative, which is what \emph{epistemic} means in AIF.
The signs pull in opposite directions: the negative terms favor spreading belief over reachable states and predicted observations, while the positive term favors concentrating it on informative configurations.
This tension returns as a min-max structure in the message-passing scheme (\Cref{subsec:convergence}).
Grouped differently, the observation-side corrections match standard EFE terminology~\citep{dacosta_active_2020}: one factor of $\ent{q(y_t|x_t,\theta)}$ penalizes \emph{ambiguity}, and the remaining $\ent{q(y_t|x_t)} - \ent{q(y_t|x_t,\theta)}$ enters negatively and rewards \emph{novelty}, the expected information gain about~$\theta$.
Ambiguity depends only on the observation kernel at a given state; novelty depends on the joint posterior over observations, states, and parameters.
This distinction drives the empirical separation in \Cref{sec:experiments}.
By itself, however, \eqref{eq:aif_correction} does not yet yield EFE-based planning, because it lacks the planning correction \eqref{eq:cross_entropy_correction}.
\subsection{EFE-Based Planning}
\label{subsec:efe_based_planning}

The missing step is to combine the marginal-EFE corrections of \Cref{thm:entropy_decomposition} with the planning correction of \Cref{subsec:cross_entropy_planning}.
Adding only the dynamics-side term $-\ent{q(x_t | x_{t-1}, u_t)}$ to cross-entropy planning yields \textbf{risk-minimizing planning}, which minimizes what \citet{friston_active_2015} call \emph{risk}; it is a useful intermediate ablation, but not yet full EFE-based planning because it omits the observation-side epistemic corrections.

\refappx{appx:efe_planning_inference} proves that the resulting EFE-based planning objective is
\begin{align}
  \min_q\; & F_{\hat{p}}[q]
  + \sum_{t=1}^T \ent{q(u_t | x_{t-1})}
  + \sum_{t=1}^T \Bigl(
  2\ent{q(y_t | x_t, \theta)} \notag \\
           & \qquad\quad
  - \ent{q(x_t | x_{t-1}, u_t)}
  - \ent{q(y_t | x_t)}
  \Bigr)\,. \label{eq:combined_main_text}
\end{align}
The first sum is the planning correction; the second sum is the EFE correction.
Only their combination yields proper \textbf{EFE-based planning}.

\subsection{Comparison of Objectives}
\label{subsec:landscape}

\begin{table*}[t]
  \centering
  \caption{Entropy corrections needed to move from baseline variational inference to proper EFE-based planning.}
  \label{tab:method_comparison}
  \begin{tabular}{l | l}
    \toprule
    \textbf{Objective}                                       & \textbf{Added entropy correction}                                                            \\
    \midrule
    Baseline VFE / Marginal inference                        & $0$                                                                                          \\
    Cross-entropy planning~\citep{lazaro-gredilla_what_2024} & $+\sum_t \ent{q(u_t | x_{t-1})}$                                                             \\
    Risk-minimizing planning                                 & CE $- \sum_t \ent{q(x_t | x_{t-1}, u_t)}$                                                    \\
    EFE-based planning                                       & CE $+ \sum_t 2\ent{q(y_t | x_t, \theta)} - \ent{q(x_t | x_{t-1}, u_t)} - \ent{q(y_t | x_t)}$ \\
    \bottomrule
  \end{tabular}
\end{table*}
\Cref{tab:method_comparison} summarizes the progression: the planning correction changes \emph{how} control is posed (cross-entropy planning), the EFE correction changes \emph{what} objective is optimized (marginal EFE), and only their combination yields proper EFE-based planning, the objective implemented in our experiments.
The channel reparameterizations required for message passing follow directly from these correction terms (\Cref{subsec:channel_reparam}); we turn to that next.
\section{Message Passing for EFE-Based Planning}
\label{sec:message_passing}

The full EFE-based planning objective \eqref{eq:combined_main_text} adds four conditional entropy terms to the VFE: one for the policy, one for the dynamics, and two for the observation model.
In the Bethe framework a conditional distribution is a ratio of region beliefs, so these terms are not functions of any single coordinate of the optimization problem.
We resolve this by introducing auxiliary conditional distributions (\emph{channels}) that promote the corrected conditionals to free variational parameters, yielding a message-passing family that generalizes standard belief propagation.
\subsection{Channel Reparameterization}
\label{subsec:channel_reparam}

\begin{figure}[t]
    \centering
    \includegraphics[width=\columnwidth]{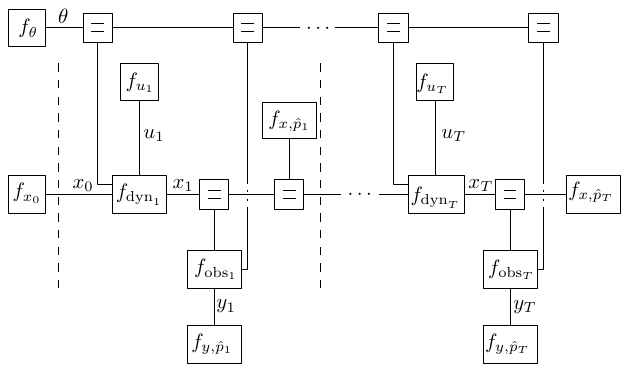}
    \caption{Forney factor graph for the generative model~\eqref{eq:generative_model}. Square nodes are factors; edges are variables. The time slice between the dashed lines is repeated for $T$ timesteps.}
    \label{fig:factor_graph}
\end{figure}

The FFG representation (\Cref{subsec:bethe}) makes the locality of channel reparameterization explicit: each correction acts on a single factor node, while the remainder of the graph is unchanged from standard sum-product (\Cref{fig:factor_graph}).
The key identity is the variational characterization of conditional entropy (Gibbs' inequality):
\begin{equation}\label{eq:gibbs_identity}
    \ent{q(y | x)} = \min_{r} \Ex{q(y, x)}{-\log r(y | x)}\,,
\end{equation}
with equality when $r(y|x) = q(y|x)$, so the substitution is exact rather than a bound.
We introduce four normalized conditional distributions as \emph{channels}: $r_{u|x,t}(u_t | x_{t-1})$, $r_{x|xu,t}(x_t | x_{t-1}, u_t)$, $r_{y|x\theta,t}(y_t | x_t, \theta)$, and $r_{y|x,t}(y_t | x_t)$, as free variational parameters for each time step~$t$ (see \refappx{appx:combined_detailed_derivation} for formal definitions).
Substituting \eqref{eq:gibbs_identity} into the corrections of \eqref{eq:combined_main_text} yields a well-posed optimization in which each channel enters the factor it corrects according to the sign of its entropy term: the positive corrections place $r_{u|x,t}$ and $r_{y|x\theta,t}$ (squared) in numerators, while the negative corrections place $r_{x|xu,t}$ and $r_{y|x,t}$ in denominators.
This yields the \emph{kernels}:
\begin{subequations}\label{eq:modified_kernels}
    \begin{equation}
        \tilde{f}_{\mathrm{obs}_t}(y_t, x_t, \theta) = \frac{p(y_t | x_t, \theta)\, r_{y|x\theta,t}^2(y_t | x_t, \theta)}{r_{y|x,t}(y_t | x_t)}\,, \label{eq:modified_obs}
    \end{equation}
    \begin{equation}
        \tilde{f}_{\mathrm{dyn}_t}(x_t, x_{t-1}, \theta, u_t) = \frac{p(x_t | x_{t-1}, \theta, u_t)\, r_{u|x,t}(u_t | x_{t-1})}{r_{x|xu,t}(x_t | x_{t-1}, u_t)}. \label{eq:modified_dyn}
    \end{equation}
\end{subequations}
With these substitutions, the EFE-based planning objective becomes a standard Bethe free energy over the modified factor graph, jointly optimized over beliefs and channels.
The kernels~\eqref{eq:modified_kernels} replace the original factor functions in the message-passing equations, making the procedure iterative: the channel beliefs $r$ depend on variational beliefs $q$ and vice versa. The proof for $T=1$ is given in \refappx{appx:combined_detailed_derivation}. The full scheme comes from the additivity of the Lagrangian and the entropic corrections, see \refappx{appx:combined_detailed_generic} for details.

\subsection{Message-Passing Equations}
\label{subsec:mp_equations}

\begin{figure}[t]
    \centering
    \includegraphics[width=0.85\columnwidth]{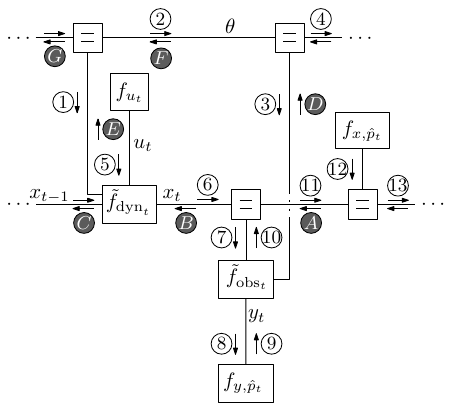}
    \caption{Factor graph for a single time slice of the generative model~\eqref{eq:generative_model}. The observation factor $f_{\mathrm{obs}_t}$ and dynamics factor $f_{\mathrm{dyn}_t}$ receive kernels $\tilde{f}_{\mathrm{obs}_t}$ and $\tilde{f}_{\mathrm{dyn}_t}$ from \eqref{eq:modified_kernels} under the EFE-based planning objective. Numbered messages are computed in a forward pass over all time slices, lettered messages in a subsequent backward pass.}
    \label{fig:factor_graph_labeled_messages}
\end{figure}

Since the modified objective has a Bethe form, the stationarity conditions yield sum-product-style message updates. The only difference from standard belief propagation is that each factor uses its kernel \eqref{eq:modified_kernels} in place of the original.

Each factor~$a$ sends to a neighboring factor $b$ the integral of its kernel over incoming messages on adjacent edges:
\begin{equation}\label{eq:factor_to_var}
    \mu_{jb}(s_j) \propto \!\int\! \tilde{f}_a(\bm{s}_a)\!\! \prod_{i \in \mathcal{E}(a) \setminus j}\!\! \mu_{ia}(s_i) \, \dif \bm{s}_{a \setminus j}\,.
\end{equation}
For unmodified factors (priors, data likelihoods), $\tilde{f}_a = f_a$.
\paragraph{Singleton beliefs.}
Singleton beliefs are computed by normalizing the product of colliding messages on an edge,
\begin{equation}\label{eq:singleton_beliefs}
    q^*(s_i) \propto \mu_{ia}(s_i)\mu_{ib}(s_i)\,,
\end{equation}
with $\{a,b\} = \mathcal{V}(i)$ the nodes adjacent to edge $i$. The full forward-backward schedule is shown in \Cref{fig:factor_graph_labeled_messages}.

\paragraph{Region beliefs.}
Region beliefs are computed by multiplying the factor function with all inbound messages and normalizing,
\begin{equation} \label{eq:region-belief-update}
    q^*(\bm{s}_a) \propto \tilde{f}_a(\bm{s}_a) \prod_{i \in \mathcal{E}(a)} \mu_{ia}(s_i)\,.
\end{equation}

\paragraph{Channel updates.}
At the fixed point, each channel recovers the true conditional under its factor belief:
\begin{subequations}\label{eq:channel_updates}
    \begin{align}
        r_{u|x,t}^*(u_t | x_{t-1})           & = q_{u|x,t}^*(u_t | x_{t-1})\,, \label{eq:channel_policy}        \\
        r_{y|x\theta,t}^*(y_t | x_t, \theta) & = q_{y|x\theta,t}^*(y_t | x_t, \theta)\,, \label{eq:channel_obs} \\
        r_{y|x,t}^*(y_t | x_t)               & = q_{y|x,t}^*(y_t | x_t)\,, \label{eq:channel_obs_marginal}      \\
        r_{x|xu,t}^*(x_t | x_{t-1}, u_t)     & = q_{x|xu,t}^*(x_t | x_{t-1}, u_t)\,, \label{eq:channel_dyn}
    \end{align}
\end{subequations}
where the beliefs $q$ are conditionals derived from the respective region beliefs around factors $f_{\mathrm{obs}_t}$ and $f_{\mathrm{dyn}_t}$ at time~$t$. The marginal observation channel $r_{y|x,t}^*$ is obtained by marginalizing $\theta$ from the observation factor belief (see \refappx{appx:combined_detailed_derivation}).

\paragraph{Learning vs. planning.}
Learning and planning are kept separate. The parameters $\theta$ are updated by Bayesian filtering on real observations as they arrive, whereas during planning $\theta$ is held fixed at the current belief $q(\theta)$: the backward messages into $\theta$ from the (simulated) observation and dynamics factors are not sent. This is what lets the global parameter information gain decompose into the per-step novelty factors $\tilde{p}(y_t, x_t)$ of \eqref{eq:epistemic_xy}. Each step's novelty is then scored against a common $\theta$ baseline, so it is additive across the horizon. The omitted messages are well-defined; holding $\theta$ fixed during planning keeps per-step novelty well-posed.

\begin{algorithm}[t]
    \caption{EFE-Based Planning Message Passing}\label{alg:aif_mp}
    \begin{algorithmic}[1]
        \Require Generative model factors $\{f_{\mathrm{obs}_t}, f_{\mathrm{dyn}_t}\}_{t=1}^T$, priors $p(\theta), p(x_0), p(u_t)$, goal priors $\hat{p}_x(x_t), \hat{p}_y(y_t)$
        \Ensure Action beliefs $\{q_{u_t}^*(u_t)\}_{t=1}^T$
        \State Initialize all messages $\mu \gets 1$, channels $r_{u|x}, r_{y|x\theta}, r_{y|x}, r_{x|xu} \gets$ uniform
        \Repeat
        \For{$t = 1, \ldots, T$}
        \State Compute sum-product messages~\eqref{eq:factor_to_var}
        \State Update region beliefs~\eqref{eq:region-belief-update}
        \State Update channels~\eqref{eq:channel_updates}, \eqref{eq:damping}
        \State Update kernels~\eqref{eq:modified_kernels}
        \EndFor
        \State Update singleton beliefs~\eqref{eq:singleton_beliefs}
        \Until{convergence}
        \State \Return $\{q_{u_t}^*(u_t)\}_{t=1}^T$
    \end{algorithmic}
\end{algorithm}
The same construction yields a family of algorithms: Value Belief Propagation (VBP)~\citep{lazaro-gredilla_what_2024} uses only the policy reparameterization, risk-minimizing planning (\Cref{tab:method_comparison}) additionally uses the dynamics-side reparameterization, and full EFE-based planning further adds the observation-side reparameterizations. The corresponding VBP derivation is given in \refappx{appx:vbp_derivation}.

\subsection{Convergence}
\label{subsec:convergence}
The kernels~\eqref{eq:modified_kernels} contain opposing channel corrections (\Cref{sec:entropic_inference}): $r_{u|x}$ and $r_{y|x\theta}$ appear in numerators, while $r_{x|xu}$ and $r_{y|x}$ appear in denominators, inducing a min-max structure in the joint optimization.
Because each channel reparameterization is a local rewrite of a single kernel, the per-update cost matches standard loopy belief propagation up to the channel updates.
Standard BP convergence guarantees, however, do not transfer to this min-max setting, and we apply geometric damping to channels for each update $n$:
\begin{equation}\label{eq:damping}
    r_c^{n} \propto (r_c^{n-1})^{1 - \lambda}\, (r_c^*)^{\lambda}\,,
\end{equation}
for each channel $c \in \{u|x, x|xu, y|x\theta, y|x\}$. Here $\lambda \in [0, 1]$ is the damping parameter and $r_c^*$ denotes the newly computed channel from \eqref{eq:channel_updates}. We select $\lambda$ per method and environment from a convergence sweep; at the selected value the channel-based methods reach a stationary VFE plateau, typically within $150$ iterations (see \refappx{appx:convergence}).
\section{Experiments}
\label{sec:experiments}
We design experiments to test the behavioral effect of progressively adding the entropy corrections in \Cref{tab:method_comparison}.
Full experimental details are deferred to \refappx{appx:experiments}\footnote{Code available at \url{https://github.com/biaslab/UAI-MP-AIF-JAX}}.
All environments use discrete state spaces with exact factor evaluations, isolating the effect of the entropy corrections and channel-augmented schemes from errors introduced by approximate message computation.

\subsection{Setup}
\label{subsec:setup}

\paragraph{Environments.}
We adapt three classic grid-world environments into epistemic planning benchmarks by treating the environment layout as an unknown parameter~$\theta$ in the generative model~\eqref{eq:generative_model}.
All environments support cardinal movement but differ in how epistemic value arises, which determines which observation-side corrections a method must capture.
In Frozen Lake, information gathering changes the observation model itself, so its value is visible as reduced ambiguity.
In canonical RockSample and Wumpus World, sensing actions affect only a single observation, so their value is pure novelty: it lies entirely in what the resulting reading reveals about~$\theta$.

\emph{Frozen Lake}~\citep{brockman_openai_2016,towers_gymnasium_2025}:
The agent observes binary ``hole/safe'' sensors for every cell on the grid, with noise that increases with distance from the agent.
A SCAN action switches the agent into a persistent scan mode in which all observations become near-deterministic, at the cost of one time step and a lower prior preference.

\emph{RockSample~$(4,3)$}~\citep{smith_heuristic_2004}:
The canonical RockSample benchmark on a $4{\times}4$ grid with $3$ rocks at known positions and unknown quality (good or bad), defining~$\theta$ ($8$ configurations).
The agent has one CHECK action per rock, returning a noisy quality reading whose accuracy degrades with distance to that rock, and receives no rock information otherwise.
It can SAMPLE a rock for a reward or penalty depending on quality, or EXIT for a fixed reward.

\emph{Wumpus World}~\citep{russell_artificial_1995}:
Pit, wumpus, and gold positions define~$\theta$ ($25$ configurations).
The classic dynamics are simplified to isolate the epistemic challenge: the agent has no orientation or inventory and navigates by cardinal movement.
The agent observes noisy breeze, stench, and glitter adjacency signals and has uncertain position.
A SCAN action sharpens the adjacency signals for a single step, after which the noise reverts; position channels are unaffected.
Even a precise reading only narrows~$\theta$: a breeze indicates a nearby pit but not which neighbor, so the agent must triangulate across multiple positions.

\paragraph{Methods.}
We compare five methods. The first four correspond to message-passing implementations of the entropy-corrected objectives in \Cref{subsec:landscape}, with channel configurations as specified in \Cref{alg:aif_mp}:
\begin{enumerate}
  \item \textbf{BP}: standard belief propagation, no entropy correction.
  \item \textbf{VBP}: cross-entropy planning, implemented as the principled channelized scheme from \refappx{appx:vbp_derivation}.
  \item \textbf{RM-MP}: risk-minimizing planning (\Cref{tab:method_comparison}), using the planning channel together with the dynamics channel; reduces to VBP under deterministic dynamics.
  \item \textbf{AIF-MP}: full EFE-based planning, using the planning, dynamics, and observation channels (\Cref{alg:aif_mp}).
  \item \textbf{Nuijten-MP} \citep{nuijten_message_2026}: an alternating heuristic that treats the epistemic priors as literal prior factors, recomputed from the current posterior between belief-propagation sweeps, and does not incorporate the novelty prior~\eqref{eq:epistemic_xy}.
\end{enumerate}
All methods except BP include the planning correction, so the experiments ablate the EFE-side corrections on top of a fixed planning baseline rather than the planning correction itself.

\subsection{Results and Discussion}
\label{subsec:results}

\Cref{tab:results} reports performance for all methods across three environments.
The results show where each correction matters and where accounting for novelty becomes necessary.

\begin{table*}[t]
  \centering
  \caption{Performance across three environments with 95\% confidence intervals, averaged over 1000 episodes. Best per metric (non-overlapping CIs) in bold.}
  \label{tab:results}
  \setlength{\tabcolsep}{4pt}
  \begin{tabular}{l c c c c}
    \toprule
                    & \textbf{Frozen Lake}      & \multicolumn{2}{c}{\textbf{RockSample}} & \textbf{Wumpus World}                             \\
    \cmidrule(lr){3-4}
    \textbf{Method} & Success (\%)              & Avg.\ reward                            & Retrieval (\%)        & Success (\%)              \\
    \midrule
    BP              & $51.9\;[48.8, 55.0]$      & $1.00\;[1.00, 1.00]$                    & $0.0$                 & $1.2\;[0.5, 1.9]$         \\
    VBP             & $54.5\;[51.4, 57.6]$      & $1.00\;[1.00, 1.00]$                    & $0.0$                 & $5.5\;[4.1, 6.9]$         \\
    RM-MP           & $50.0\;[46.9, 53.1]$      & $1.00\;[1.00, 1.00]$                    & $0.0$                 & $24.0\;[21.4, 26.6]$      \\
    Nuijten-MP      & $\bm{95.6}\;[94.3, 96.9]$ & $1.00\;[1.00, 1.00]$                    & $0.0$                 & $5.0\;[3.6, 6.4]$         \\
    AIF-MP          & $\bm{95.9}\;[94.7, 97.1]$ & $\bm{4.01}\;[3.90, 4.12]$               & $\bm{98.7}$           & $\bm{40.7}\;[37.7, 43.7]$ \\
    \bottomrule
  \end{tabular}
\end{table*}

\paragraph{Epistemic actions that change the observation model (Frozen Lake).}
Both active inference methods dominate (${\sim}96\%$ success, overlapping confidence intervals), substantially outperforming all baselines.
Both learn to SCAN.
Because scan mode persistently changes the observation kernel, reaching scan-mode states already pays off through reduced ambiguity, so the alternating heuristic finds the epistemic action as readily as the joint scheme.
RM-MP performs comparably to BP and VBP (overlapping confidence intervals): the dynamics correction is neither beneficial nor harmful in this regime.

\paragraph{Novelty-driven sensing (RockSample and Wumpus World).}
In canonical RockSample, no action changes the observation model: a CHECK buys a single noisy reading whose only value is the information it carries about rock quality.
AIF-MP is the only method that checks and samples, retrieving $98.7\%$ of good rocks for an average reward of $4.01$.
Every baseline, including Nuijten-MP, walks straight to the exit (reward $1.00$, zero retrieval): under the uniform quality prior, sampling an unchecked rock has negative expected reward, so without the novelty term CHECK has no value and EXIT is optimal.
Wumpus World repeats this pattern under local readings: even a precise one-step scan does not reveal the global layout, since multiple hazard configurations produce the same breeze and stench patterns, so a scan only narrows~$\theta$.
AIF-MP again clearly leads ($40.7\%$), while Nuijten-MP performs at the level of VBP ($5.0\%$ vs.\ $5.5\%$, overlapping confidence intervals): without an exploitable change in the observation model, the alternating heuristic reduces to its planning-only core.
RM-MP is the strongest baseline ($24.0\%$): the slip-perturbed dynamics activate its dynamics channel, but without the observation-side corrections it cannot value what a scan reveals about~$\theta$.
The remaining gap is an objective mismatch, not a scheduling artifact: \citet{nuijten_message_2026} treat the epistemic priors as literal prior factors, recomputed \emph{outside} the variational objective between belief-propagation sweeps, and do not incorporate the novelty prior~\eqref{eq:epistemic_xy}, whereas AIF-MP treats all four channels as variational parameters of a single joint objective with closed-form stationary conditions~\eqref{eq:channel_updates}, through which the expected information gain about~$\theta$ propagates into the plan.
Representative trajectories are shown in \refappx{appx:wumpus_trajectories}.
\paragraph{Synthesis.}
The alternating heuristic captures half of the observation-side story: it responds to ambiguity, the precision of the observation kernel at reachable states, but not to novelty, the information observations carry about~$\theta$.
Ambiguity stands in for novelty when an epistemic action persistently changes the observation model (Frozen Lake); when sensing actions affect only a single observation, the heuristic collapses to planning-only performance while AIF-MP does not degrade.
Across all environments, the planning correction yields modest gains and the dynamics correction helps only under stochastic dynamics (Wumpus World); most of the gap to AIF-MP is explained by the observation-level corrections.
Accounting for novelty requires the joint variational treatment; treating the epistemic priors as literal priors and leaving out the novelty prior does not suffice.
\section{Conclusion}
\label{sec:conclusion}

This paper clarifies the variational structure of active inference planning.
\Cref{thm:entropy_decomposition} shows that the epistemic-prior construction of \citet{nuijten_expected_2026} admits an explicit entropy-corrected reformulation: relative to baseline VFE minimization, it adds a specific set of entropy corrections that yields marginal EFE minimization, making explicit which terms contribute the epistemic part of the objective.
Proper EFE-based planning additionally requires the planning correction of \citet{lazaro-gredilla_what_2024}, and the combined objective leads directly to a message-passing construction via channel reparameterization.
That construction recovers a family of algorithms, including VBP, risk-minimizing planning, and full EFE-based planning (\Cref{alg:aif_mp}).

Empirically, the planning and dynamics corrections account for only modest gains, while the observation-side corrections separate along the ambiguity/novelty split: only the joint channelized scheme captures novelty, the expected information gain about model parameters, and sustains performance when sensing actions affect only a single observation.

\paragraph{Limitations and future work.}
The opposing signs of the entropy corrections induce a min-max structure in the joint optimization over beliefs and channels, which in practice requires damped channel updates for stable convergence (\Cref{subsec:convergence}).
Standard belief propagation convergence guarantees do not transfer to this setting, and developing convergence theory for the channel-augmented scheme is an open problem; the damping parameter $\lambda$ also currently requires manual per-environment adjustment.
We restrict to discrete state spaces where exact factor evaluations are available; understanding how the channel reparameterization interacts with further factorization constraints on the variational posterior (e.g., mean-field or structured approximations) is an important direction.


\begin{contributions}
  W.W.L.~Nuijten and M.~Lukashchuk contributed equally to this work.
  W.W.L.~Nuijten developed the entropy decomposition framework.
  M.~Lukashchuk derived the message-passing scheme.
  Both authors contributed to writing and experiments.
  T.~van~de~Laar contributed to the conceptualization of the method and supervision.
  B.~de~Vries has a supervisory and editorial role.
\end{contributions}

\begin{acknowledgements}
  This publication is part of the project ROBUST: Trustworthy AI-based Systems for Sustainable Growth with project number KICH3.LTP.20.006, which is (partly) financed by the Dutch Research Council (NWO), GN Hearing, and the Dutch Ministry of Economic Affairs and Climate Policy (EZK) under the program LTP KIC 2020-2023.
\end{acknowledgements}

\bibliography{references}

@inproceedings{attias_planning_2003,
  title = {Planning by Probabilistic Inference},
  booktitle = {International Workshop on Artificial Intelligence and Statistics},
  author = {Attias, Hagai},
  year = 2003,
  pages = {9--16},
  publisher = {PMLR},
  url = {https://proceedings.mlr.press/r4/attias03a.html},
  urldate = {2025-05-04}
}

@book{bertsekas_dynamic_2012,
  title = {Dynamic Programming and Optimal Control: {{Volume I}}},
  shorttitle = {Dynamic Programming and Optimal Control},
  author = {Bertsekas, Dimitri},
  year = 2012,
  volume = {4},
  publisher = {Athena scientific},
  url = {https://books.google.com/books?hl=en&lr=&id=qVBEEAAAQBAJ&oi=fnd&pg=PR1&dq=Dynamic+Programming+and+Optimal+Control&ots=x0bAav0O5n&sig=s3UxthkdnzR2UpqCUsUsQ7zKgLc},
  urldate = {2025-05-04}
}

@article{blei_variational_2017,
  title = {Variational {{Inference}}: {{A Review}} for {{Statisticians}}},
  shorttitle = {Variational {{Inference}}},
  author = {Blei, David M. and Kucukelbir, Alp and McAuliffe, Jon D.},
  year = 2017,
  month = apr,
  journal = {Journal of the American Statistical Association},
  volume = {112},
  number = {518},
  pages = {859--877},
  publisher = {Taylor \& Francis},
  issn = {0162-1459},
  doi = {10.1080/01621459.2017.1285773},
  url = {https://doi.org/10.1080/01621459.2017.1285773},
  urldate = {2023-07-25}
}

@misc{bradbury_jax_2018,
  title = {{{JAX}}: Composable Transformations of {{Python}}+{{NumPy}} Programs},
  author = {Bradbury, James and Frostig, Roy and Hawkins, Peter and Johnson, Matthew James and Leary, Chris and Maclaurin, Dougal and Necula, George and Paszke, Adam and VanderPlas, Jake and {Wanderman-Milne}, Skye and Zhang, Qiao},
  year = 2018,
  url = {http://github.com/jax-ml/jax}
}

@misc{brockman_openai_2016,
  title = {{{OpenAI Gym}}},
  author = {Brockman, Greg and Cheung, Vicki and Pettersson, Ludwig and Schneider, Jonas and Schulman, John and Tang, Jie and Zaremba, Wojciech},
  year = 2016,
  month = jun,
  number = {arXiv:1606.01540},
  eprint = {1606.01540},
  primaryclass = {cs},
  publisher = {arXiv},
  doi = {10.48550/arXiv.1606.01540},
  url = {http://arxiv.org/abs/1606.01540},
  urldate = {2024-04-08},
  archiveprefix = {arXiv}
}

@article{champion_branching_2022,
  title = {Branching {{Time Active Inference}}: {{The}} Theory and Its Generality},
  shorttitle = {Branching {{Time Active Inference}}},
  author = {Champion, Th{\'e}ophile and Da Costa, Lancelot and Bowman, Howard and Grze{\'s}, Marek},
  year = 2022,
  month = jul,
  journal = {Neural Networks},
  volume = {151},
  pages = {295--316},
  issn = {0893-6080},
  doi = {10.1016/j.neunet.2022.03.036},
  url = {https://www.sciencedirect.com/science/article/pii/S0893608022001149},
  urldate = {2025-05-06}
}

@article{dacosta_active_2020,
  title = {Active Inference on Discrete State-Spaces: {{A}} Synthesis},
  shorttitle = {Active Inference on Discrete State-Spaces},
  author = {Da Costa, Lancelot and Parr, Thomas and Sajid, Noor and Veselic, Sebastijan and Neacsu, Victorita and Friston, Karl},
  year = 2020,
  month = dec,
  journal = {Journal of Mathematical Psychology},
  volume = {99},
  pages = {102447},
  issn = {0022-2496},
  doi = {10.1016/j.jmp.2020.102447},
  url = {https://www.sciencedirect.com/science/article/pii/S0022249620300857},
  urldate = {2023-08-07},
  langid = {english}
}

@misc{devries_expected_2025,
  title = {Expected {{Free Energy-based Planning}} as {{Variational Inference}}},
  author = {De Vries, Bert and Nuijten, Wouter and {van de Laar}, Thijs and Kouw, Wouter and Adamiat, Sepideh and Nisslbeck, Tim and Lukashchuk, Mykola and Nguyen, Hoang Minh Huu and Araya, Marco Hidalgo and Tresor, Raphael and Jenneskens, Thijs and Nikoloska, Ivana and Subramanian, Raaja Ganapathy and van Erp, Bart and Bagaev, Dmitry and Podusenko, Albert},
  year = 2025,
  month = apr,
  number = {arXiv:2504.14898},
  eprint = {2504.14898},
  primaryclass = {stat},
  publisher = {arXiv},
  doi = {10.48550/arXiv.2504.14898},
  url = {http://arxiv.org/abs/2504.14898},
  urldate = {2025-05-01},
  archiveprefix = {arXiv}
}

@article{forney_codes_2001,
  title = {Codes on Graphs: {{Normal}} Realizations},
  shorttitle = {Codes on Graphs},
  author = {Forney, G. David},
  year = 2001,
  journal = {IEEE Transactions on Information Theory},
  volume = {47},
  number = {2},
  pages = {520--548},
  publisher = {IEEE},
  url = {https://ieeexplore.ieee.org/abstract/document/910573/},
  urldate = {2025-05-05}
}

@article{friston_active_2015,
  title = {Active Inference and Epistemic Value},
  author = {Friston, Karl and Rigoli, Francesco and Ognibene, Dimitri and Mathys, Christoph and Fitzgerald, Thomas and Pezzulo, Giovanni},
  year = 2015,
  month = oct,
  journal = {Cognitive Neuroscience},
  volume = {6},
  number = {4},
  pages = {187--214},
  issn = {1758-8928, 1758-8936},
  doi = {10.1080/17588928.2015.1020053},
  url = {http://www.tandfonline.com/doi/full/10.1080/17588928.2015.1020053},
  urldate = {2025-05-04},
  langid = {english}
}

@article{friston_sophisticated_2021,
  title = {Sophisticated {{Inference}}},
  author = {Friston, Karl and Da Costa, Lancelot and Hafner, Danijar and Hesp, Casper and Parr, Thomas},
  year = 2021,
  month = mar,
  journal = {Neural Computation},
  volume = {33},
  number = {3},
  pages = {713--763},
  issn = {0899-7667},
  doi = {10.1162/neco_a_01351},
  url = {https://doi.org/10.1162/neco_a_01351},
  urldate = {2021-12-22}
}

@article{heskes_convexity_2006,
  title = {Convexity {{Arguments}} for {{Efficient Minimization}} of the {{Bethe}} and {{Kikuchi Free Energies}}},
  author = {Heskes, T.},
  year = 2006,
  month = jun,
  journal = {Journal of Artificial Intelligence Research},
  volume = {26},
  pages = {153--190},
  issn = {1076-9757},
  doi = {10.1613/jair.1933},
  url = {https://www.jair.org/index.php/jair/article/view/10456},
  urldate = {2023-05-01},
  copyright = {Copyright (c)},
  langid = {english}
}

@article{kappen_optimal_2012,
  title = {Optimal Control as a Graphical Model Inference Problem},
  author = {Kappen, B. and Gomez, V. and Opper, M.},
  year = 2012,
  month = may,
  journal = {Machine Learning},
  volume = {87},
  number = {2},
  eprint = {0901.0633},
  primaryclass = {cs, math},
  pages = {159--182},
  issn = {0885-6125, 1573-0565},
  doi = {10.1007/s10994-012-5278-7},
  url = {http://arxiv.org/abs/0901.0633},
  urldate = {2024-07-31},
  archiveprefix = {arXiv}
}

@article{kappen_path_2005,
  title = {Path Integrals and Symmetry Breaking for Optimal Control Theory},
  author = {Kappen, H J},
  year = 2005,
  month = nov,
  journal = {Journal of Statistical Mechanics: Theory and Experiment},
  volume = {2005},
  number = {11},
  pages = {P11011},
  issn = {1742-5468},
  doi = {10.1088/1742-5468/2005/11/P11011},
  url = {https://doi.org/10.1088/1742-5468/2005/11/P11011},
  urldate = {2026-05-20},
  langid = {english}
}

@misc{koudahl_realising_2023,
  title = {Realising {{Synthetic Active Inference Agents}}, {{Part I}}: {{Epistemic Objectives}} and {{Graphical Specification Language}}},
  shorttitle = {Realising {{Synthetic Active Inference Agents}}, {{Part I}}},
  author = {Koudahl, Magnus and {van de Laar}, Thijs and {de Vries}, Bert},
  year = 2023,
  month = jun,
  number = {arXiv:2306.08014},
  eprint = {2306.08014},
  primaryclass = {cs},
  publisher = {arXiv},
  doi = {10.48550/arXiv.2306.08014},
  url = {http://arxiv.org/abs/2306.08014},
  urldate = {2023-09-19},
  archiveprefix = {arXiv}
}

@inproceedings{lazaro-gredilla_what_2024,
  title = {What Type of Inference Is Planning?},
  booktitle = {Advances in {{Neural Information Processing Systems}}},
  author = {{L{\'a}zaro-Gredilla}, Miguel and Ku, Li Yang and Murphy, Kevin P. and George, Dileep},
  editor = {Globerson, A. and Mackey, L. and Belgrave, D. and Fan, A. and Paquet, U. and Tomczak, J. and Zhang, C.},
  year = 2024,
  volume = {37},
  pages = {116705--116742},
  publisher = {Curran Associates, Inc.},
  doi = {10.52202/079017-3705},
  url = {https://proceedings.neurips.cc/paper_files/paper/2024/file/d39e3ae9a11b79691709a7a6e06a63d9-Paper-Conference.pdf}
}

@misc{levine_reinforcement_2018,
  title = {Reinforcement {{Learning}} and {{Control}} as {{Probabilistic Inference}}: {{Tutorial}} and {{Review}}},
  shorttitle = {Reinforcement {{Learning}} and {{Control}} as {{Probabilistic Inference}}},
  author = {Levine, Sergey},
  year = 2018,
  month = may,
  number = {arXiv:1805.00909},
  eprint = {1805.00909},
  primaryclass = {cs},
  publisher = {arXiv},
  doi = {10.48550/arXiv.1805.00909},
  url = {http://arxiv.org/abs/1805.00909},
  urldate = {2025-04-24},
  archiveprefix = {arXiv}
}

@article{loeliger_factor_2007,
  title = {The {{Factor Graph Approach}} to {{Model-Based Signal Processing}}},
  author = {Loeliger, Hans-Andrea and Dauwels, Justin and Hu, Junli and Korl, Sascha and Ping, Li and Kschischang, Frank R.},
  year = 2007,
  month = jun,
  journal = {Proceedings of the IEEE},
  volume = {95},
  number = {6},
  pages = {1295--1322},
  issn = {0018-9219},
  doi = {10.1109/JPROC.2007.896497},
  urldate = {2014-04-10}
}

@article{millidge_whence_2021,
  title = {Whence the {{Expected Free Energy}}?},
  author = {Millidge, Beren and Tschantz, Alexander and Buckley, Christopher L.},
  year = 2021,
  month = feb,
  journal = {Neural Computation},
  volume = {33},
  number = {2},
  pages = {447--482},
  issn = {0899-7667},
  doi = {10.1162/neco_a_01354},
  url = {https://ieeexplore.ieee.org/abstract/document/9346140},
  urldate = {2026-06-15}
}

@article{nuijten_expected_2026,
  title = {Expected Free Energy-Based Planning as Variational Inference},
  author = {Nuijten, Wouter W. L. and {van de Laar}, Thijs and {de Vries}, Bert},
  year = 2026,
  journal = {Transactions on Machine Learning Research},
  issn = {2835-8856},
  url = {https://openreview.net/forum?id=Kzm8I1oS1s}
}

@inproceedings{nuijten_message_2026,
  title = {A {{Message Passing Realization}} of~{{Expected Free Energy Minimization}}},
  booktitle = {Active {{Inference}}},
  author = {Nuijten, Wouter W. L. and Lukashchuk, Mykola and {van de Laar}, Thijs and {de Vries}, Bert},
  editor = {Albarracin, Mahault and Benrimoh, David and Buckley, Christopher L. and Lanillos, Pablo and Pitliya, Riddhi J. and Shimazaki, Hideaki and Stoianov, Ivilin Peev and Verbelen, Tim and Wisse, Martijn},
  year = 2026,
  pages = {75--98},
  publisher = {Springer Nature Switzerland},
  address = {Cham},
  doi = {10.1007/978-3-032-16955-6_5},
  isbn = {978-3-032-16955-6},
  langid = {english}
}

@inproceedings{odonoghue_making_2020,
  title = {Making Sense of Reinforcement Learning and Probabilistic Inference},
  booktitle = {International Conference on Learning Representations},
  author = {O'Donoghue, Brendan and Osband, Ian and Ionescu, Catalin},
  year = 2020,
  url = {https://openreview.net/forum?id=S1xitgHtvS}
}

@article{palmieri_unifying_2022,
  title = {A {{Unifying View}} of {{Estimation}} and {{Control Using Belief Propagation With Application}} to {{Path Planning}}},
  author = {Palmieri, Francesco A. N. and Pattipati, Krishna R. and Gennaro, Giovanni Di and Fioretti, Giovanni and Verolla, Francesco and Buonanno, Amedeo},
  year = 2022,
  journal = {IEEE Access},
  volume = {10},
  pages = {15193--15216},
  issn = {2169-3536},
  doi = {10.1109/ACCESS.2022.3148127}
}

@book{parr_active_2022,
  title = {Active {{Inference}}: {{The Free Energy Principle}} in {{Mind}}, {{Brain}}, and {{Behavior}}},
  shorttitle = {Active {{Inference}}},
  author = {Parr, Thomas and Pezzulo, Giovanni and Friston, Karl J.},
  year = 2022,
  month = mar,
  publisher = {The MIT Press},
  doi = {10.7551/mitpress/12441.001.0001},
  url = {https://direct.mit.edu/books/book/5299/Active-InferenceThe-Free-Energy-Principle-in-Mind},
  urldate = {2024-11-15},
  copyright = {https://creativecommons.org/licenses/by-nc-nd/4.0/},
  isbn = {978-0-262-36997-8},
  langid = {english}
}

@article{parr_generalised_2019,
  title = {Generalised Free Energy and Active Inference},
  author = {Parr, Thomas and Friston, Karl J.},
  year = 2019,
  month = dec,
  journal = {Biological Cybernetics},
  volume = {113},
  number = {5},
  pages = {495--513},
  issn = {1432-0770},
  doi = {10.1007/s00422-019-00805-w},
  url = {https://doi.org/10.1007/s00422-019-00805-w},
  urldate = {2024-07-31},
  langid = {english}
}

@article{paul_efficient_2024,
  title = {On Efficient Computation in Active Inference},
  author = {Paul, Aswin and Sajid, Noor and Costa, Lancelot Da and Razi, Adeel},
  year = 2024,
  month = nov,
  journal = {Expert Systems with Applications},
  volume = {253},
  eprint = {2307.00504},
  primaryclass = {cs},
  pages = {124315},
  issn = {0957-4174},
  doi = {10.1016/j.eswa.2024.124315},
  url = {http://arxiv.org/abs/2307.00504},
  urldate = {2025-05-01},
  archiveprefix = {arXiv}
}

@inproceedings{pearl_reverend_1982,
  title = {Reverend {{Bayes}} on {{Inference Engines}}: {{A Distributed Hierarchical Approach}}},
  shorttitle = {Reverend {{Bayes}} on {{Inference Engines}}},
  booktitle = {{{AAAI-82 Proceedings}}},
  author = {Pearl, Judea},
  year = 1982,
  pages = {133--136},
  publisher = {AAAI Press},
  address = {Carnegie Mellon University, Pittsburgh PA},
  url = {https://books.google.com/books?hl=nl&lr=&id=e59kEAAAQBAJ&oi=fnd&pg=PA129&ots=qrs53bNhtS&sig=auOq_v4YW1fTTgNOvsmydDN6RPY},
  urldate = {2025-05-06}
}

@article{rawlik_stochastic_2012,
  title = {On Stochastic Optimal Control and Reinforcement Learning by Approximate Inference},
  author = {Rawlik, Konrad and Toussaint, Marc and Vijayakumar, Sethu},
  year = 2012,
  journal = {Proceedings of Robotics: Science and Systems VIII},
  url = {https://books.google.com/books?hl=en&lr=&id=NOrxCwAAQBAJ&oi=fnd&pg=PA353&dq=On+stochastic+optimal+control+and+reinforcement+learning+by+approximate+inference&ots=dOLjhngmMZ&sig=k04S8PsFkrZCiTHocQskzN34wuk},
  urldate = {2025-05-04}
}

@book{russell_artificial_1995,
  title = {Artificial {{Intelligence}}: {{A}} Modern Approach},
  author = {Russell, Stuart and Norvig, Peter},
  year = 1995,
  publisher = {Prentice Hall},
  address = {Englewood Cliffs, NJ}
}

@phdthesis{senoz_message_2022,
  type = {Phd {{Thesis}} 1 ({{Research TU}}/e / {{Graduation TU}}/e)},
  title = {Message {{Passing Algorithms}} for {{Hierarchical Dynamical Models}}},
  author = {{\textcommabelow S}en{\"o}z, {\.I}smail},
  year = 2022,
  month = jun,
  address = {Eindhoven},
  isbn = {978-90-386-5532-1},
  school = {Eindhoven University of Technology}
}

@inproceedings{smith_heuristic_2004,
  title = {Heuristic Search Value Iteration for Pomdps},
  booktitle = {Proceedings of the Twentieth Conference Annual Conference on Uncertainty in Artificial Intelligence ({{UAI-04}}), Arlington, Virginia},
  author = {Smith, Trey and Simmons, Reid},
  year = 2004,
  pages = {520--527},
  publisher = {AUAI Press}
}

@book{sutton_reinforcement_2018,
  title = {Reinforcement Learning: {{An}} Introduction},
  shorttitle = {Reinforcement Learning},
  author = {Sutton, Richard S. and Barto, Andrew G.},
  year = 2018,
  publisher = {MIT press}
}

@inproceedings{tarbouriech_probabilistic_2023,
  title = {Probabilistic Inference in Reinforcement Learning Done Right},
  booktitle = {Advances in Neural Information Processing Systems},
  author = {Tarbouriech, Jean and Lattimore, Tor and O'Donoghue, Brendan},
  editor = {Oh, A. and Naumann, T. and Globerson, A. and Saenko, K. and Hardt, M. and Levine, S.},
  year = 2023,
  volume = {36},
  pages = {33687--33725},
  publisher = {Curran Associates, Inc.},
  url = {https://proceedings.neurips.cc/paper_files/paper/2023/file/6a6e010edde1b8f2812f558b67a1974e-Paper-Conference.pdf}
}

@inproceedings{todorov_general_2008,
  title = {General Duality between Optimal Control and Estimation},
  booktitle = {2008 47th {{IEEE Conference}} on {{Decision}} and {{Control}}},
  author = {Todorov, Emanuel},
  year = 2008,
  month = dec,
  pages = {4286--4292},
  issn = {0191-2216},
  doi = {10.1109/CDC.2008.4739438},
  url = {https://ieeexplore.ieee.org/abstract/document/4739438},
  urldate = {2025-05-04}
}

@inproceedings{todorov_linearlysolvable_2006,
  title = {Linearly-Solvable {{Markov}} Decision Problems},
  booktitle = {Advances in {{Neural Information Processing Systems}}},
  author = {Todorov, Emanuel},
  year = 2006,
  volume = {19},
  publisher = {MIT Press},
  url = {https://proceedings.neurips.cc/paper_files/paper/2006/hash/d806ca13ca3449af72a1ea5aedbed26a-Abstract.html},
  urldate = {2025-05-01}
}

@inproceedings{toussaint_robot_2009,
  title = {Robot Trajectory Optimization Using Approximate Inference},
  booktitle = {Proceedings of the 26th {{Annual International Conference}} on {{Machine Learning}}},
  author = {Toussaint, Marc},
  year = 2009,
  month = jun,
  pages = {1049--1056},
  publisher = {ACM},
  address = {Montreal Quebec Canada},
  doi = {10.1145/1553374.1553508},
  url = {https://dl.acm.org/doi/10.1145/1553374.1553508},
  urldate = {2025-05-04},
  isbn = {978-1-60558-516-1},
  langid = {english}
}

@inproceedings{towers_gymnasium_2025,
  title = {Gymnasium: A Standard Interface for Reinforcement Learning Environments},
  booktitle = {Advances in Neural Information Processing Systems},
  author = {Towers, Mark and Kwiatkowski, Ariel and Balis, John and De Cola, Gianluca and Deleu, Tristan and Goul{\~a}o, Manuel and Andreas, Kallinteris and Krimmel, Markus and KG, Arjun and {Perez-Vicente}, Rodrigo and Terry, J and Pierr{\'e}, Andrea and Schulhoff, Sander and Tai, Jun Jet and Tan, Hannah and Younis, Omar G.},
  editor = {Belgrave, D. and Zhang, C. and Lin, H. and Pascanu, R. and Koniusz, P. and Ghassemi, M. and Chen, N.},
  year = 2025,
  volume = {38},
  publisher = {Curran Associates, Inc.},
  url = {https://proceedings.neurips.cc/paper_files/paper/2025/file/d7ff1795e8527f6443371c3933bdb52b-Paper-Datasets_and_Benchmarks_Track.pdf}
}

@article{vandelaar_realizing_2024,
  title = {Realizing Synthetic Active Inference Agents, Part {{II}}: {{Variational}} Message Updates},
  author = {{van de Laar}, Thijs and Koudahl, Magnus and {de Vries}, Bert},
  year = 2024,
  journal = {Neural Computation},
  volume = {37},
  number = {1},
  pages = {38--75},
  publisher = {MIT Press 255 Main Street, 9th Floor, Cambridge, Massachusetts 02142, USA \dots}
}

@article{wainwright_graphical_2008,
  title = {Graphical Models, Exponential Families, and Variational Inference},
  author = {Wainwright, Martin J. and Jordan, Michael I.},
  year = 2008,
  journal = {Foundations and Trends\textregistered{} in Machine Learning},
  volume = {1},
  number = {1--2},
  pages = {1--305},
  publisher = {Now Publishers, Inc.},
  url = {https://www.nowpublishers.com/article/Details/MAL-001},
  urldate = {2025-05-06}
}

@article{yedidia_constructing_2005,
  title = {Constructing Free-Energy Approximations and Generalized Belief Propagation Algorithms},
  author = {Yedidia, Jonathan S. and Freeman, W.T. and Weiss, Y.},
  year = 2005,
  month = jul,
  journal = {IEEE Transactions on Information Theory},
  volume = {51},
  number = {7},
  pages = {2282--2312},
  issn = {0018-9448},
  doi = {10.1109/TIT.2005.850085}
}

\newpage
\onecolumn

\title{What Type of Inference is Active Inference?\\(Supplementary Material)}
\maketitle

\appendix
\numberwithin{equation}{section}

\section{Entropy Corrections}
\label{appx:entropy_corrections}

\subsection{Entropy Correction for Planning}
\label{appx:planning_entropy_correction}

We derive the entropy correction that distinguishes planning-as-inference from marginal inference.
\citet{lazaro-gredilla_what_2024} formulate planning-as-inference using a ``planning entropy'' that excludes action variables from the trajectory entropy.
Here we show that this formulation is equivalent to adding an entropy correction to the standard VFE, and derive the form of this correction.

\subsubsection{The Planning Entropy of \texorpdfstring{\citet{lazaro-gredilla_what_2024}}{Lazaro-Gredilla et al. (2024)}}

\citet{lazaro-gredilla_what_2024} define the planning entropy as:
\begin{equation}\label{eq:lazaro_planning_entropy}
    \ent{q(x_0)} + \sum_{t=1}^T \mathbb{H}_{q}[x_t | x_{t-1}, u_t]\,,
\end{equation}
where $\mathbb{H}_{q}[x_t | x_{t-1}, u_t] = -\int q(x_t, x_{t-1}, u_t) \log q(x_t | x_{t-1}, u_t) \dif x_t \dif x_{t-1} \dif u_t$ denotes the conditional entropy.
This differs from the full trajectory entropy $\ent{q(\bm{x}, \bm{u})}$ by excluding the action entropy.

\begin{remark}[Notation conventions]
    We use $u_t$ to denote the action that leads to state $x_t$, following the convention in this paper.
    \citet{lazaro-gredilla_what_2024} use a different indexing where $u_t$ leads to $x_{t+1}$.
    Additionally, they formulate their objective as a maximization problem (maximizing the variational bound), whereas we minimize the VFE; this flips the sign of entropy terms.
\end{remark}

\subsubsection{Derivation of the Entropy Correction}

\begin{proposition}[Planning entropy decomposition] \label{prop:planning_entropy_decomposition}
    The planning entropy~\eqref{eq:lazaro_planning_entropy} equals the full trajectory entropy plus an entropy correction:
    \begin{equation}
        \ent{q(x_0)} + \sum_{t=1}^T \mathbb{H}_{q}[x_t | x_{t-1}, u_t] = \ent{q(\bm{x}, \bm{u})} + \sum_{t=1}^T \ent{q(x_{t-1})} - \ent{q(x_{t-1}, u_t)}\,.
    \end{equation}
\end{proposition}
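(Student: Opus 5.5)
The plan is to expand the full trajectory entropy $\ent{q(\bm{x}, \bm{u})}$ with the chain rule along the rollout order $x_0, u_1, x_1, u_2, x_2, \ldots, u_T, x_T$, and then compare it term by term with the planning entropy~\eqref{eq:lazaro_planning_entropy}. The chain rule gives
\begin{equation*}
    \ent{q(\bm{x}, \bm{u})} = \ent{q(x_0)} + \sum_{t=1}^T \Bigl( \mathbb{H}_q[u_t \mid x_{0:t-1}, u_{1:t-1}] + \mathbb{H}_q[x_t \mid x_{0:t-1}, u_{1:t}] \Bigr)\,.
\end{equation*}
To match~\eqref{eq:lazaro_planning_entropy} I will use the Markov structure of $q$ in the planning setting. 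As in \citet{lazaro-gredilla_what_2024}, $q$ factorizes as $q(x_0)\prod_t q(u_t\mid x_{t-1})\, q(x_t\mid x_{t-1},u_t)$, that is, a policy together with a controlled Markov transition; on the chain this is exactly what the Bethe parameterization of \Cref{subsec:bethe} represents. Under this structure the conditioning sets reduce to $\mathbb{H}_q[u_t\mid x_{t-1}]$ and $\mathbb{H}_q[x_t\mid x_{t-1},u_t]$. Equivalently, one can write $\ent{q(\bm{x},\bm{u})}$ directly from $-\log q$ with this factorization and take expectations term by term.

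Next I rewrite the action terms with the identity $\mathbb{H}_q[u_t\mid x_{t-1}] = \ent{q(x_{t-1},u_t)} - \ent{q(x_{t-1})}$, which is the same identity as in~\eqref{eq:cross_entropy_correction}. Substituting gives
\begin{equation*}
    \ent{q(\bm{x},\bm{u})} = \ent{q(x_0)} + \sum_{t=1}^T \mathbb{H}_q[x_t\mid x_{t-1},u_t] + \sum_{t=1}^T \Bigl(\ent{q(x_{t-1},u_t)} - \ent{q(x_{t-1})}\Bigr)\,.
\end{equation*}
Moving the last sum to the other side gives exactly the claimed identity. This also confirms that the correction equals $-\sum_t \ent{q(u_t\mid x_{t-1})}$, consistent with~\eqref{eq:cross_entropy_correction}.

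The only step I expect to need care is justifying the conditional-independence reduction in the first step. It is where the choice of variational family enters, and without it the identity holds only with $\mathbb{H}_q[x_t\mid x_{t-1},u_t]$ replaced by the fully conditioned terms. My first approach is to state this assumption explicitly, as the same trajectory family over which \citet{lazaro-gredilla_what_2024} define their planning entropy. I would then add a remark that for a general $q$ this reduction amounts to comparing against the Markov (Bethe-on-a-chain) entropy, which is exact on the tree-structured rollout graph. The remaining steps are routine algebra with the definition of conditional entropy.
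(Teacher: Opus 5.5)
Your proof is correct and follows essentially the same route as the paper's. Both arguments combine the chain rule for $\ent{q(\bm{x},\bm{u})}$ with the identity $\mathbb{H}_q[u_t\mid x_{t-1}] = \ent{q(x_{t-1},u_t)} - \ent{q(x_{t-1})}$; the paper simply runs the algebra in the opposite direction, splitting $\log q(x_t\mid x_{t-1},u_t)$ into $\log q(x_t,u_t\mid x_{t-1}) - \log q(u_t\mid x_{t-1})$. Your explicit Markov assumption $q(\bm{x},\bm{u}) = q(x_0)\prod_t q(u_t\mid x_{t-1})\,q(x_t\mid x_{t-1},u_t)$ is exactly what the paper uses silently when it writes $\ent{q(x_0)} + \sum_t \mathbb{H}_q[x_t,u_t\mid x_{t-1}] = \ent{q(\bm{x},\bm{u})}$ ``by the chain rule,'' so stating it makes the argument more complete rather than changing it.
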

\begin{proof}
    Starting from the planning entropy and expanding the conditional entropy:
    \begin{align}
        \ent{q(x_0)} & + \sum_{t=1}^T \mathbb{H}_{q}[x_t | x_{t-1}, u_t] \notag                                                                                                               \\
                     & = \ent{q(x_0)} + \sum_{t=1}^T \left( -\iiint q(x_t, x_{t-1}, u_t) \log q(x_t | x_{t-1}, u_t) \dif x_t \dif x_{t-1} \dif u_t \right)\,,                                    \\
                     & = \ent{q(x_0)} + \sum_{t=1}^T \left( -\iiint q(x_t, x_{t-1}, u_t) \log \frac{q(x_t, x_{t-1}, u_t)}{q(u_t | x_{t-1}) q(x_{t-1})} \dif x_t \dif x_{t-1} \dif u_t \right)\,.
    \end{align}
    Splitting the logarithm:
    \begin{align}
         & = \ent{q(x_0)} + \sum_{t=1}^T \left( -\iiint q(x_t, x_{t-1}, u_t) \log \frac{q(x_t, x_{t-1}, u_t)}{q(x_{t-1})} \dif x_t \dif x_{t-1} \dif u_t \right. \notag                               \\
         & \qquad \left. + \iiint q(x_t, x_{t-1}, u_t) \log q(u_t | x_{t-1}) \dif x_t \dif x_{t-1} \dif u_t \right)\,,                                                                                   \\
         & = \ent{q(x_0)} + \sum_{t=1}^T \underbrace{\left( -\iiint q(x_t, x_{t-1}, u_t) \log q(x_t, u_t | x_{t-1}) \dif x_t \dif x_{t-1} \dif u_t \right)}_{\mathbb{H}_q[x_t, u_t | x_{t-1}]} \notag \\
         & \qquad + \sum_{t=1}^T \iint q(x_{t-1}, u_t) \log \frac{q(x_{t-1}, u_t)}{q(x_{t-1})} \dif x_{t-1} \dif u_t\,.
    \end{align}
    The first sum gives the trajectory entropy by the chain rule:
    \begin{equation}
        \ent{q(x_0)} + \sum_{t=1}^T \mathbb{H}_q[x_t, u_t | x_{t-1}] = \ent{q(\bm{x}, \bm{u})}\,.
    \end{equation}
    The second sum expands as:
    \begin{align}
        \sum_{t=1}^T \iint q(x_{t-1}, u_t) \log \frac{q(x_{t-1}, u_t)}{q(x_{t-1})} \dif x_{t-1} \dif u_t
         & = \sum_{t=1}^T \underbrace{\iint q(x_{t-1}, u_t) \log q(x_{t-1}, u_t) \dif x_{t-1} \dif u_t}_{-\ent{q(x_{t-1}, u_t)}} \notag \\
         & \qquad - \underbrace{\int q(x_{t-1}) \log q(x_{t-1}) \dif x_{t-1}}_{-\ent{q(x_{t-1})}}\,,                                       \\
         & = \sum_{t=1}^T \ent{q(x_{t-1})} - \ent{q(x_{t-1}, u_t)}\,.
    \end{align}
    Combining these results proves the proposition.
\end{proof}

\subsubsection{Interpretation}

Since we minimize the VFE (rather than maximize as in \citet{lazaro-gredilla_what_2024}), the planning entropy is \emph{subtracted} from the objective.
This means the entropy correction $\sum_t \ent{q(x_{t-1}, u_t)} - \ent{q(x_{t-1})} = \sum_t \ent{q(u_t | x_{t-1})}$ is \emph{added} to the VFE.

Adding this positive correction penalizes action uncertainty: since we minimize the objective, high $\ent{q(u_t | x_{t-1})}$ increases the cost, pushing the agent toward a deterministic policy.

\subsection{Proof of Theorem~\ref{thm:entropy_decomposition}}
\label{appx:entropy_decomposition_proof}

We prove that the VFE of the augmented model~\eqref{eq:augmented_model} decomposes into the original VFE plus entropy correction terms.
The proof requires three lemmas, each showing how one epistemic prior contributes to the entropy correction.

\begin{lemma}[State epistemic prior contribution] \label{lem:p_tilde_x}
    Let $q(\bm{y}, \bm{x}, \bm{u}, \theta)$ be a variational distribution over the generative model~\eqref{eq:generative_model}, and let the state epistemic prior be defined as in~\eqref{eq:epistemic_x}:
    \begin{equation}
        \tilde{p}(x_t) = \exp \bigl(\Ex{q(\theta | x_t)}{-\hent{q(y_t | x_t, \theta)}}\bigr)\,.
    \end{equation}
    Then:
    \begin{equation}
        -\int q(x_{t}) \log \tilde{p}(x_t) \dif x_t = \ent{q(y_t | x_t, \theta)}\,.
    \end{equation}
\end{lemma}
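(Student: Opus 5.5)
The lemma follows by unfolding definitions. Note that the statement gives $\tilde{p}(x_t)$ as an equality, with no normalizing constant; I adopt that convention, so the log-normalizer contributes nothing. If a normalizer $Z$ were kept, it would only add a $q$-independent constant $\log Z$ (after fixing the channel), and the identity would hold up to that constant.

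First I take the logarithm of the prior:
\begin{equation*}
-\log \tilde{p}(x_t) = \Ex{q(\theta | x_t)}{\hent{q(y_t | x_t, \theta)}} = \int q(\theta|x_t)\,\hent{q(y_t|x_t,\theta)}\,\dif\theta .
\end{equation*}
Next I integrate this against $q(x_t)$. By the product rule $q(x_t)\,q(\theta|x_t) = q(x_t,\theta)$, which gives
\begin{equation*}
-\int q(x_t)\log\tilde{p}(x_t)\,\dif x_t = \iint q(x_t,\theta)\,\hent{q(y_t|x_t,\theta)}\,\dif x_t\,\dif\theta .
\end{equation*}
This is exactly $\Ex{q(x_t,\theta)}{\hent{q(y_t|x_t,\theta)}}$. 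By the footnote convention in \Cref{subsec:epistemic_priors}, that expectation is the conditional entropy $\ent{q(y_t|x_t,\theta)}$. If one prefers, this can be checked explicitly by writing out $\hent{\cdot} = -\int q(y_t|x_t,\theta)\log q(y_t|x_t,\theta)\,\dif y_t$ and merging everything into the joint $q(y_t,x_t,\theta)$.

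There is no real obstacle here. The only points needing care are both bookkeeping. The first is that all the conditionals $q(\theta|x_t)$ and $q(y_t|x_t,\theta)$ are marginals and conditionals of the same joint $q$, so the product-rule merge is legitimate; in the discrete setting all integrals are finite sums. The second is the normalization convention discussed above.
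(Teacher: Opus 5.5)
Your proof is correct and matches the paper's argument: substitute the definition, merge $q(x_t)\,q(\theta|x_t)\,q(y_t|x_t,\theta)$ into the joint $q(y_t,x_t,\theta)$, and recognize $\ent{q(y_t|x_t,\theta)}$. One small caveat concerns your aside only. A normalizer $Z$ for $\tilde{p}(x_t)$ would itself depend on $q$, through $q(\theta|x_t)$ and $q(y_t|x_t,\theta)$, so it is not a $q$-independent constant in general. This does not affect your proof, because the lemma, like the paper's proof of \refthm{thm:entropy_decomposition}, uses the unnormalized form.
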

\begin{proof}
    Substituting the definition of $\tilde{p}(x_t)$ and expanding the conditional entropy:
    \begin{align}
        -\int q(x_{t}) \log \tilde{p}(x_t) \dif x_t
         & = -\int q(x_{t}) \int q(\theta | x_t) \int q(y_t | x_t, \theta) \log q(y_t | x_t, \theta) \dif y_t \dif \theta \dif x_t \\
         & = -\iiint q(y_t | x_t, \theta)\, q(\theta | x_t)\, q(x_{t}) \log q(y_t | x_t, \theta) \dif y_t \dif \theta \dif x_t     \\
         & = -\iiint q(y_t, x_{t}, \theta) \log q(y_t | x_t, \theta) \dif y_t \dif x_t \dif \theta                                 \\
         & =  \ent{q(y_t | x_t, \theta)}\,.
    \end{align}
\end{proof}

\begin{lemma}[Action epistemic prior contribution] \label{lem:p_tilde_u}
    Let $q(\bm{y}, \bm{x}, \bm{u}, \theta)$ be a variational distribution over the generative model~\eqref{eq:generative_model}, and let the action epistemic prior be defined as in~\eqref{eq:epistemic_u}:
    \begin{equation}
        \tilde{p}(u_t) = \exp\bigl(\hent{q(x_t, x_{t-1} | u_t)} - \hent{q(x_{t-1} | u_t)}\bigr)\,.
    \end{equation}
    Then:
    \begin{equation}
        -\int q(u_t) \log \tilde{p}(u_t) \dif u_t = -\ent{q(x_t | x_{t-1}, u_t)}\,.
    \end{equation}
\end{lemma}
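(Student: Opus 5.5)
The plan mirrors the proof of \lref{lem:p_tilde_x}: substitute the definition of $\tilde{p}(u_t)$, write each entropy of a conditional as an integral, and collapse the nested expectations into a single expectation under a joint marginal of $q$. Taking logs of the prior gives
\begin{equation*}
  -\int q(u_t)\log\tilde{p}(u_t)\,\dif u_t = -\int q(u_t)\,\hent{q(x_t,x_{t-1}\mid u_t)}\,\dif u_t + \int q(u_t)\,\hent{q(x_{t-1}\mid u_t)}\,\dif u_t\,.
\end{equation*}
Averaging an entropy of a conditional over its conditioning variable gives the corresponding conditional entropy (the footnote convention). So the first term is $-\ent{q(x_t,x_{t-1}\mid u_t)}$ and the second is $+\ent{q(x_{t-1}\mid u_t)}$.

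Next I will apply the chain rule for conditional entropy:
\begin{equation*}
  \ent{q(x_t,x_{t-1}\mid u_t)} = \ent{q(x_{t-1}\mid u_t)} + \ent{q(x_t\mid x_{t-1},u_t)}\,.
\end{equation*}
Concretely, I will write $q(x_t,x_{t-1}\mid u_t)=q(x_t\mid x_{t-1},u_t)\,q(x_{t-1}\mid u_t)$ inside the logarithm of the integral over $q(x_t,x_{t-1},u_t)$ and split it into two integrals. In the integral carrying $\log q(x_{t-1}\mid u_t)$, I marginalize out $x_t$. Substituting the chain rule, the two $\ent{q(x_{t-1}\mid u_t)}$ terms cancel and leave exactly $-\ent{q(x_t\mid x_{t-1},u_t)}$, as claimed.

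The only delicate point is the normalization of $\tilde{p}(u_t)$. In \eqref{eq:epistemic_u} it is defined up to proportionality, but the lemma states it with equality. I will follow the lemma and use the unnormalized form, as in \lref{lem:p_tilde_x}. The normalizing constant would only contribute an additive constant, absorbed into the $\propto$ of the augmented model \eqref{eq:augmented_model}. A second point to check is that $q$ has the same support in each step so the conditionals are well-defined. In the discrete setting this is harmless with the convention $0\log 0=0$, and beyond that the argument is routine bookkeeping.
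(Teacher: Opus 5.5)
Your proof is correct and follows essentially the same route as the paper. Both arguments substitute the definition, collapse the nested expectations into expectations under $q(x_t,x_{t-1},u_t)$, and use the chain rule. The paper writes the conditionals as joints over $q(u_t)$ and works with joint entropies, reaching $\ent{q(x_{t-1},u_t)} - \ent{q(x_t,x_{t-1},u_t)}$ once the $\ent{q(u_t)}$ terms cancel; you instead apply the conditional chain rule given $u_t$ directly.

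One small caveat concerns your side remark: a normalizer for $\tilde{p}(u_t)$ would depend on $q$, so it is constant in $u_t$ but not in $q$. This does not affect the lemma as stated, which, like the paper's proof, uses the unnormalized form.
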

\begin{proof}
    Substituting the definition of $\tilde{p}(u_t)$:
    \begin{align}
        -\int q(u_t) & \log \tilde{p}(u_t) \dif u_t \notag                                                                                                          \\
                     & = \int q(u_t) \left( \iint q(x_t, x_{t-1} | u_t) \log q(x_t, x_{t-1} | u_t) \dif x_t \dif x_{t-1} \right. \notag                             \\
                     & \qquad \left. - \int q(x_{t-1} | u_t) \log q(x_{t-1} | u_t) \dif x_{t-1} \right) \dif u_t                                                    \\
                     & = \int q(u_t) \left( \iint \frac{q(x_t, x_{t-1}, u_t)}{q(u_t)} \log \frac{q(x_t, x_{t-1}, u_t)}{q(u_t)} \dif x_t \dif x_{t-1} \right. \notag \\
                     & \qquad \left. - \int \frac{q(x_{t-1}, u_t)}{q(u_t)} \log \frac{q(x_{t-1}, u_t)}{q(u_t)} \dif x_{t-1} \right) \dif u_t                        \\
                     & = \iiint q(x_t, x_{t-1}, u_t) \log \frac{q(x_t, x_{t-1}, u_t)}{q(u_t)} \dif x_t \dif x_{t-1} \dif u_t \notag                                 \\
                     & \qquad - \iint q(x_{t-1}, u_t) \log \frac{q(x_{t-1}, u_t)}{q(u_t)} \dif x_{t-1} \dif u_t                                                     \\
                     & = -\ent{q(x_t, x_{t-1}, u_t)} + \ent{q(u_t)} + \ent{q(x_{t-1}, u_t)} - \ent{q(u_t)}                                                          \\
                     & = \ent{q(x_{t-1}, u_t)} - \ent{q(x_t, x_{t-1}, u_t)} = -\ent{q(x_t | x_{t-1}, u_t)}\,.
    \end{align}
\end{proof}

\begin{lemma}[Observation epistemic prior contribution] \label{lem:p_tilde_x_y}
    Let $q(\bm{y}, \bm{x}, \bm{u}, \theta)$ be a variational distribution over the generative model~\eqref{eq:generative_model}, and let the observation epistemic prior be defined as in~\eqref{eq:epistemic_xy}:
    \begin{equation}
        \tilde{p}(y_t, x_t) = \exp\bigl(\KL{q(\theta | y_t, x_t)}{q(\theta | x_t)}\bigr)\,.
    \end{equation}
    Then:
    \begin{equation}
        -\iint q(y_t, x_t) \log \tilde{p}(y_t, x_t) \dif y_t \dif x_t = \ent{q(y_t | x_t, \theta)} - \ent{q(y_t | x_t)}\,.
    \end{equation}
\end{lemma}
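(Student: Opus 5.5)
Substitute the definition of $\tilde{p}(y_t, x_t)$ into the left-hand side and write the KL divergence as an explicit integral over $\theta$:
\begin{equation*}
  -\iint q(y_t, x_t) \log \tilde{p}(y_t, x_t) \dif y_t \dif x_t = -\iiint q(y_t, x_t)\, q(\theta | y_t, x_t) \log \frac{q(\theta | y_t, x_t)}{q(\theta | x_t)} \dif \theta \dif y_t \dif x_t\,.
\end{equation*}
As in the proofs of \lref{lem:p_tilde_x} and \lref{lem:p_tilde_u}, the product $q(y_t, x_t)\, q(\theta | y_t, x_t)$ collapses to the joint $q(y_t, x_t, \theta)$. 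The outer integral then becomes a single expectation under $q(y_t, x_t, \theta)$ of the log-ratio $\log q(\theta | y_t, x_t) - \log q(\theta | x_t)$. This quantity is the (negated) conditional mutual information $\mathbb{I}_q[\theta; y_t | x_t]$.

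The key step is to rewrite the log-ratio by Bayes' rule so that the observation appears in place of the parameter:
\begin{equation*}
  \frac{q(\theta | y_t, x_t)}{q(\theta | x_t)} = \frac{q(y_t, \theta | x_t)}{q(y_t | x_t)\, q(\theta | x_t)} = \frac{q(y_t | x_t, \theta)}{q(y_t | x_t)}\,.
\end{equation*}
This is the symmetry of mutual information. It holds pointwise wherever $q(y_t, x_t, \theta) > 0$, and terms with zero mass contribute nothing in the discrete setting.

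Substituting this back, the integral splits into two terms:
\begin{itemize}
  \item $-\Ex{q(y_t,x_t,\theta)}{\log q(y_t | x_t, \theta)} = \ent{q(y_t | x_t, \theta)}$;
  \item $+\Ex{q(y_t,x_t,\theta)}{\log q(y_t | x_t)}$. Marginalizing $\theta$ gives $\Ex{q(y_t,x_t)}{\log q(y_t|x_t)} = -\ent{q(y_t | x_t)}$.
\end{itemize}
Their sum is exactly $\ent{q(y_t | x_t, \theta)} - \ent{q(y_t | x_t)}$, as claimed.

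The computation is routine. The only point needing care is the sign bookkeeping: the prior is $\exp(+\mathrm{KL})$, so the leading minus sign makes the left-hand side equal to minus the expected information gain. The main obstacle is therefore the Bayes-rule rewriting of the log-ratio. Everything else follows the same collapse-to-joint pattern used in the previous two lemmas.
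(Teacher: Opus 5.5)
Your proof is correct and follows essentially the paper's argument: both collapse $q(y_t,x_t)\,q(\theta|y_t,x_t)$ into the joint and then rewrite the log-ratio. The only difference is cosmetic. The paper expands the ratio into four joint entropies, $\ent{q(y_t,x_t,\theta)} - \ent{q(y_t,x_t)} - \ent{q(x_t,\theta)} + \ent{q(x_t)}$, and then regroups them into the two conditional entropies, whereas you apply Bayes' rule first and land directly on $\log q(y_t|x_t,\theta) - \log q(y_t|x_t)$, with the signs handled correctly.
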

\begin{proof}
    Substituting the definition of $\tilde{p}(y_t, x_t)$:
    \begin{align}
        -\iint q(y_t, x_t) & \log \tilde{p}(y_t, x_t) \dif y_t \dif x_t \notag                                                                                                                            \\
                           & = -\iint q(y_t, x_t) \left( \int q(\theta | y_t, x_t) \log \frac{q(\theta | y_t, x_t)}{q(\theta | x_t)} \dif\theta \right) \dif y_t \dif x_t                                 \\
                           & = -\iint q(y_t, x_t) \left( \int q(\theta | y_t, x_t) \log \frac{q(y_t, x_t, \theta)}{q(y_t, x_t)} - \log \frac{q(x_t, \theta)}{q(x_t)} \dif\theta \right) \dif y_t \dif x_t \\
                           & = -\iiint q(y_t, x_t, \theta) \left( \log q(y_t, x_t, \theta) - \log q(y_t, x_t) - \log q(x_t, \theta) + \log q(x_t) \right) \dif y_t \dif x_t \dif\theta                    \\
                           & = \underbrace{-\iiint q(y_t, x_t, \theta) \log q(y_t, x_t, \theta) \dif y_t \dif x_t \dif\theta}_{\ent{q(y_t, x_t, \theta)}} \notag                                          \\
                           & \qquad + \underbrace{\iiint q(y_t, x_t, \theta) \log q(y_t, x_t) \dif y_t \dif x_t \dif\theta}_{-\ent{q(y_t, x_t)}} \notag                                                   \\
                           & \qquad + \underbrace{\iiint q(y_t, x_t, \theta) \log q(x_t, \theta) \dif y_t \dif x_t \dif\theta}_{-\ent{q(x_t, \theta)}} \notag                                             \\
                           & \qquad - \underbrace{\iiint q(y_t, x_t, \theta) \log q(x_t) \dif y_t \dif x_t \dif\theta}_{\ent{q(x_t)}}                                                                     \\
                           & = \ent{q(y_t, x_t, \theta)} - \ent{q(y_t, x_t)} - \ent{q(x_t, \theta)} + \ent{q(x_t)}                                                                                        \\
                           & = \bigl(\ent{q(y_t, x_t, \theta)} - \ent{q(x_t, \theta)}\bigr) - \bigl(\ent{q(y_t, x_t)} - \ent{q(x_t)}\bigr)                                                                \\
                           & = \ent{q(y_t | x_t, \theta)} - \ent{q(y_t | x_t)}\,.
    \end{align}
\end{proof}

\begin{proof}[Proof of Theorem~\ref{thm:entropy_decomposition}]
    The VFE of the augmented model~\eqref{eq:augmented_model} is:
    \begin{align}
        F_{\tilde{p}}[q] & = \int q(\bm{y}, \bm{x}, \bm{u}, \theta) \log \frac{q(\bm{y}, \bm{x}, \bm{u}, \theta)}{\tilde{p}(\bm{y}, \bm{x}, \bm{u}, \theta)}                                                                                                                                   \\
                         & = \int q(\bm{y}, \bm{x}, \bm{u}, \theta) \log \frac{q(\bm{y}, \bm{x}, \bm{u}, \theta)}{p(\bm{y}, \bm{x}, \bm{u}, \theta) \prod_{t=1}^T \tilde{p}(x_t) \tilde{p}(u_t) \tilde{p}(y_t, x_t)}                                                                           \\
                         & = \underbrace{\int q(\bm{y}, \bm{x}, \bm{u}, \theta) \log \frac{q(\bm{y}, \bm{x}, \bm{u}, \theta)}{p(\bm{y}, \bm{x}, \bm{u}, \theta)}}_{F_{\hat{p}}[q]} + \notag                                                                                                    \\
                         & \qquad - \sum_{t=1}^T \Bigg( \iiiint q(\bm{y}, \bm{x}, \bm{u}, \theta) \log \tilde{p}(x_t) \dif \bm{y} \dif \bm{x} \dif \bm{u} \dif \theta + \iiiint q(\bm{y}, \bm{x}, \bm{u}, \theta) \log \tilde{p}(u_t) \dif \bm{y} \dif \bm{x} \dif \bm{u} \dif \theta + \notag \\
                         & \qquad + \iiiint q(\bm{y}, \bm{x}, \bm{u}, \theta) \log \tilde{p}(y_t, x_t) \dif \bm{y} \dif \bm{x} \dif \bm{u} \dif \theta \Bigg)                                                                                                                                  \\
                         & = F_{\hat{p}}[q] + \sum_{t=1}^T \left( -\int q(x_t) \log \tilde{p}(x_t) \dif x_t - \int q(u_t) \log \tilde{p}(u_t) \dif u_t - \iint q(y_t, x_t) \log \tilde{p}(y_t, x_t) \dif y_t \dif x_t \right)\,.
    \end{align}
    Applying Lemmas~\ref{lem:p_tilde_x}--\ref{lem:p_tilde_x_y}:
    \begin{align}
        F_{\tilde{p}}[q] & = F_{\hat{p}}[q] + \sum_{t=1}^{T} \Big( \underbrace{\ent{q(y_t | x_t, \theta)}}_{\text{\lref{lem:p_tilde_x}}} \underbrace{{}-\ent{q(x_t | x_{t-1}, u_t)}}_{\text{\lref{lem:p_tilde_u}}} + \underbrace{\ent{q(y_t | x_t, \theta)} - \ent{q(y_t | x_t)}}_{\text{\lref{lem:p_tilde_x_y}}} \Big) \\
                         & = F_{\hat{p}}[q] + \sum_{t=1}^{T} 2\ent{q(y_t | x_t, \theta)} - \ent{q(x_t | x_{t-1}, u_t)} - \ent{q(y_t | x_t)}\,.
    \end{align}
\end{proof}

\subsection{Cross-Entropy Interpretation}
\label{appx:planning_cross_entropy}

We show that planning-as-inference from \citet{lazaro-gredilla_what_2024} is equivalent to minimizing cross-entropy to preference distributions.

\subsubsection{Reward as Cross-Entropy}

\begin{proposition}[Reward as cross-entropy] \label{prop:reward_cross_entropy}
    For any $\lambda > 0$, define $\lambda$-scaled preference distributions $\hat{p}_\lambda(x_t) \propto \exp(\lambda R_t(x_t))$.
    Then for any distribution $q(\bm{x}, \bm{u})$:
    \begin{equation}
        \Ex{q(\bm{x}, \bm{u})}{\sum_{t=1}^T R_t(x_t)} = -\frac{1}{\lambda}\sum_{t=1}^T \xent{q(x_t)}{\hat{p}_\lambda(x_t)} + \text{const}\,.
    \end{equation}
\end{proposition}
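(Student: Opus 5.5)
The plan is to make the normalizing constant explicit and then compute the cross-entropy term by term. For each $t$, set $Z_{\lambda,t} = \int \exp(\lambda R_t(x_t)) \dif x_t$. This is a finite sum in the discrete regime we work in, so $Z_{\lambda,t} < \infty$ and $\hat{p}_\lambda(x_t) = \exp(\lambda R_t(x_t))/Z_{\lambda,t}$ is a genuine distribution. Hence $\log \hat{p}_\lambda(x_t) = \lambda R_t(x_t) - \log Z_{\lambda,t}$.

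Next, I would reduce the left-hand side to single-time marginals. By linearity of expectation,
$\Ex{q(\bm{x}, \bm{u})}{\sum_{t=1}^T R_t(x_t)} = \sum_{t=1}^T \Ex{q(x_t)}{R_t(x_t)}$. This holds because each summand depends only on $x_t$, so integrating out the remaining coordinates of $q(\bm{x},\bm{u})$ leaves the marginal $q(x_t)$. No structural assumption on $q$ is needed.

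I would then expand the cross-entropy using its definition $\xent{q}{\hat{p}} = -\Ex{q}{\log \hat{p}}$:
$\xent{q(x_t)}{\hat{p}_\lambda(x_t)} = -\lambda\,\Ex{q(x_t)}{R_t(x_t)} + \log Z_{\lambda,t}$.
Solving for the expected reward gives $\Ex{q(x_t)}{R_t(x_t)} = -\frac{1}{\lambda}\xent{q(x_t)}{\hat{p}_\lambda(x_t)} + \frac{1}{\lambda}\log Z_{\lambda,t}$, which uses $\lambda>0$. Summing over $t$ yields the claim with $\text{const} = \frac{1}{\lambda}\sum_t \log Z_{\lambda,t}$.

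There is no real obstacle here. The one point to make explicit is that this constant depends only on $R_t$ and $\lambda$, and not on $q$. That independence is what makes the identity meaningful for optimization: maximizing expected reward over $q$ is equivalent to minimizing $\sum_t \xent{q(x_t)}{\hat{p}_\lambda(x_t)}$, as used in \eqref{eq:cross_entropy_objective}.
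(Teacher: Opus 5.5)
Your proposal is correct and matches the paper's proof essentially step for step: write $\log \hat{p}_\lambda(x_t) = \lambda R_t(x_t) - \log Z_{t,\lambda}$, expand the cross-entropy, solve for $\mathbb{E}_{q(x_t)}[R_t(x_t)]$, and sum over $t$, which gives the $q$-independent constant $\frac{1}{\lambda}\sum_t \log Z_{t,\lambda}$. Your explicit linearity-of-expectation step, which reduces the expectation under $q(\bm{x},\bm{u})$ to expectations under the marginals $q(x_t)$, is a correct detail that the paper leaves implicit.
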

\begin{proof}
    With $\log \hat{p}_\lambda(x_t) = \lambda R_t(x_t) - \log Z_{t,\lambda}$ where $Z_{t,\lambda} = \int \exp(\lambda R_t(x_t)) \dif x_t$:
    \begin{align}
        \xent{q(x_t)}{\hat{p}_\lambda(x_t)} & = -\Ex{q(x_t)}{\log \hat{p}_\lambda(x_t)}               \\
                                            & = -\Ex{q(x_t)}{\lambda R_t(x_t) - \log Z_{t,\lambda}}   \\
                                            & = -\lambda\Ex{q(x_t)}{R_t(x_t)} + \log Z_{t,\lambda}\,.
    \end{align}
    Rearranging: $\Ex{q(x_t)}{R_t(x_t)} = -\frac{1}{\lambda}\xent{q(x_t)}{\hat{p}_\lambda(x_t)} + \frac{1}{\lambda}\log Z_{t,\lambda}$.
    Summing over $t$ gives the result, where $\frac{1}{\lambda}\sum_t \log Z_{t,\lambda}$ is constant with respect to $q$.
\end{proof}

\subsubsection{Connection to Planning-as-Inference}

\citet[Theorem 1]{lazaro-gredilla_what_2024} show that the best exponential utility
\begin{equation}
    F_\lambda^{\text{planning}} = \frac{1}{\lambda} \log \Ex{p(\bm{x}, \bm{u})}{\exp\left(\lambda \sum_{t=1}^T R_t\right)}
\end{equation}
can be expressed as the result of a variational optimization problem whose objective includes the expected sum of rewards $\Ex{q}{\sum_t R_t}$ as one of its terms.

By \refprop{prop:reward_cross_entropy} with $\hat{p}_\lambda(x_t) \propto \exp(\lambda R_t(x_t))$, this reward term equals (up to a constant) $-\frac{1}{\lambda}\sum_t \xent{q(x_t)}{\hat{p}_\lambda(x_t)}$.
Since the variational bound in \citet[Theorem 1]{lazaro-gredilla_what_2024} has an overall $\frac{1}{\lambda}$ scaling, this factor cancels, and maximizing $F_\lambda^{\text{planning}}$ is equivalent to minimizing $\sum_t \xent{q(x_t)}{\hat{p}_\lambda(x_t)}$ along with the dynamics and entropy terms.

This establishes that with $\hat{p}_\lambda(x_t) \propto \exp(\lambda R_t(x_t))$, the expected utility becomes cross-entropy to preference distributions.
Consequently, the inference procedure from \citet{lazaro-gredilla_what_2024} minimizes cross-entropy by optimizing its variational objective.
Different values of $\lambda$ yield different preference distributions (corresponding to different risk attitudes), but the underlying mechanism remains cross-entropy minimization.


\section{EFE-based Planning Inference}
\label{appx:efe_planning_inference}

\begin{theorem}[EFE-based planning inference]\label{thm:efe-planning}
    Consider the augmented model
    $\tilde{p}(\bm{y}, \bm{x}, \bm{u}, \theta)$
    from~\eqref{eq:augmented_model} and a set of reactive
    policies $\bm{\pi} = \{\pi_t(u_t | x_{t-1})\}_{t=1}^T$.
    Then:
    \begin{equation}\label{eq:efe-planning-result}
        \max_{\bm{\pi}}
        \log \sum_{\bm{y}, \bm{x}, \bm{u}, \theta}
        \tilde{p}(\bm{y}, \bm{x}, \bm{u}, \theta)
        \prod_{t=1}^T \pi_t(u_t | x_{t-1})
        = \max_q\;
        \bigl\langle \log \tilde{p} \bigr\rangle_{q}
        + \ent{q(\bm{y}, \bm{x}, \bm{u}, \theta)}
        - \sum_{t=1}^T \ent{q(u_t | x_{t-1})}\,,
    \end{equation}
    where $q(\bm{y}, \bm{x}, \bm{u}, \theta)$ is an arbitrary
    variational distribution and the optimal policy is
    $\pi_t^*(u_t | x_{t-1}) = q^*(u_t | x_{t-1})$.
    Equivalently, up to an additive constant independent of $q$:
    \begin{equation}\label{eq:efe-planning-vfe}
        \max_{\bm{\pi}}
        \log \sum_{\bm{y}, \bm{x}, \bm{u}, \theta}
        \tilde{p}(\bm{y}, \bm{x}, \bm{u}, \theta)
        \prod_{t=1}^T \pi_t(u_t | x_{t-1})
        =
        - \min_q\;
        F_{\tilde{p}}[q]
        + \sum_{t=1}^T \ent{q(u_t | x_{t-1})}\,,
    \end{equation}
    where $F_{\tilde{p}}[q] = \KL{q}{\tilde{p}}$.
\end{theorem}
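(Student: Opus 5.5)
The plan is to follow the argument behind Theorem~1 of \citet{lazaro-gredilla_what_2024}: combine the Gibbs variational principle with the variational form of conditional entropy \eqref{eq:gibbs_identity}, and then exchange the two maximizations. Throughout, the augmented model $\tilde{p}$ is treated as a fixed nonnegative function on the finite space of $(\bm{y}, \bm{x}, \bm{u}, \theta)$. Concretely, the epistemic priors \eqref{eq:epistemic_priors} are frozen at some reference $q$. This makes the statement an identity about a fixed unnormalized model, with the circularity deferred to the channel construction of \Cref{subsec:channel_reparam}.

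\emph{Step 1 (inner problem for fixed policy).} Fix $\bm{\pi}$ and write $g_{\bm{\pi}} = \tilde{p}\prod_t \pi_t(u_t|x_{t-1})$. By the Gibbs variational principle (nonnegativity of $\KL{q}{g_{\bm{\pi}}/Z_{\bm{\pi}}}$),
\begin{equation*}
\log Z_{\bm{\pi}} = \max_q\; \langle \log \tilde{p}\rangle_q + \sum_{t=1}^T \langle \log \pi_t(u_t|x_{t-1})\rangle_q + \ent{q(\bm{y},\bm{x},\bm{u},\theta)},
\end{equation*}
with maximizer $q = g_{\bm{\pi}}/Z_{\bm{\pi}}$. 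The convention $0\log 0 = 0$ handles entries where $\pi_t = 0$: any $q$ placing mass there gets value $-\infty$.

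\emph{Step 2 (swap and optimize the policy).} We have $\max_{\bm{\pi}}\max_q = \max_q\max_{\bm{\pi}}$. For fixed $q$, the only $\bm{\pi}$-dependent term is $\sum_t \langle \log \pi_t(u_t|x_{t-1})\rangle_q$. This separates over $t$ and, within each $t$, over $x_{t-1}$. By \eqref{eq:gibbs_identity}, or directly by Gibbs' inequality on each conditional, each term is maximized at $\pi_t = q(u_t|x_{t-1})$ with value $-\ent{q(u_t|x_{t-1})}$. Substituting gives \eqref{eq:efe-planning-result}. The optimal policy is read off as $\pi_t^* = q^*(u_t|x_{t-1})$, because at a joint maximizer the pair $(q^*,\bm{\pi}^*)$ must satisfy the inner optimality condition. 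Where $q^*(x_{t-1}) = 0$ the conditional is arbitrary, and I would note that it can be defined arbitrarily there.

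\emph{Step 3 (VFE form).} Let $\tilde{Z} = \sum \tilde{p}$ be the normalizer implicit in the $\propto$ of \eqref{eq:augmented_model}, so that $F_{\tilde{p}}[q] = \KL{q}{\tilde{p}/\tilde{Z}} = -\langle\log\tilde{p}\rangle_q - \ent{q} + \log\tilde{Z}$. Rewriting the right-hand side of \eqref{eq:efe-planning-result} as $-\bigl(F_{\tilde{p}}[q] + \sum_t \ent{q(u_t|x_{t-1})}\bigr) + \log\tilde{Z}$ and using $\max(-f) = -\min f$ gives \eqref{eq:efe-planning-vfe} up to the additive constant $\log\tilde{Z}$.

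The main obstacle is conceptual rather than computational: whether $\tilde{p}$ may legitimately be held fixed while $q$ varies. If the epistemic priors are allowed to move with $q$, neither the Gibbs principle nor the swap of maxima applies directly. I would therefore state explicitly that the identity holds for the frozen model. At a self-consistent point, where the frozen reference equals $q^*$, Theorem~\ref{thm:entropy_decomposition} turns the objective into \eqref{eq:combined_main_text}.
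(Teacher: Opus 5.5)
Your argument is correct, but it takes a genuinely different, self-contained route. The paper never carries out the variational argument itself. It rewrites $\sum \tilde p \prod_t \pi_t$ as the evidence of a preference-weighted rollout model $\rho_{\bm{\pi}}\prod_t \hat p(x_t)\hat p(y_t)$, so that $\log\hat p$ plays the role of a reward. It then invokes Appendix~A of \citet{lazaro-gredilla_what_2024}, adjusting the action indexing and using \refprop{prop:planning_entropy_decomposition} to convert their planning entropy into $\ent{q}-\sum_t\ent{q(u_t|x_{t-1})}$. You instead obtain the identity directly from three ingredients: the Gibbs variational principle, the exchange of the two maxima, and row-wise Gibbs' inequality on the policy tables. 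This lands on $\ent{q}-\sum_t\ent{q(u_t|x_{t-1})}$ without passing through the planning entropy at all.

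Your route shows that the identity holds for any nonnegative $\tilde p$ on a finite space with time-indexed conditional tables. It therefore sidesteps two checks the paper leaves implicit: that the cited result transfers to a model with a global latent $\theta$ and observations $y_t$ on which the policy does not condition, and that \refprop{prop:planning_entropy_decomposition}, stated for $q(\bm{x},\bm{u})$ only, extends to the full joint. It also pins the additive constant down as $\log\tilde Z$ and treats zeros and attainment explicitly. The paper's route buys an interpretation: the left-hand side becomes exponential-utility planning with log-preferences as rewards, which connects the theorem to \refappx{appx:planning_cross_entropy}.

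Freezing the epistemic priors is the only coherent reading of the left-hand side. It is also what the paper does tacitly by treating $\tilde p(u_t)\tilde p(x_t)\tilde p(y_t,x_t)$ as fixed factors of $\rho_{\bm{\pi}}$.

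One refinement to your closing remark: at a self-consistent point, \refthm{thm:entropy_decomposition} gives equality of objective values, not equivalence of the two minimization problems. The frozen terms and the $q$-dependent corrections have different first variations. For instance, $-\langle\log\tilde p_{q_0}(x_t)\rangle_q$ depends on $q$ only through $q(x_t)$, while $\ent{q(y_t|x_t,\theta)}$ has first variation $-\log q(y_t|x_t,\theta)$. This does not affect the theorem, only the passage to \eqref{eq:combined-correction}.
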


\begin{proof}
    This is the variational formulation of planning from
    \citet[Appendix~A]{lazaro-gredilla_what_2024}, rewritten in the
    notation of this paper. To connect our formulation to theirs, note
    that, up to a normalization constant independent of $\bm{\pi}$,
    \begin{align}
        & \sum_{\bm{y}, \bm{x}, \bm{u}, \theta}
        \tilde{p}(\bm{y}, \bm{x}, \bm{u}, \theta)
        \prod_{t=1}^T \pi_t(u_t | x_{t-1}) \notag \\
        & \qquad \propto
        \sum_{\bm{y}, \bm{x}, \bm{u}, \theta}
        \rho_{\bm{\pi}}(\bm{y}, \bm{x}, \bm{u}, \theta)
        \prod_{t=1}^T \hat{p}(x_t)\hat{p}(y_t)\,,
        \label{eq:efe-planning-bridge}
    \end{align}
    where
    \begin{equation}
        \rho_{\bm{\pi}}(\bm{y}, \bm{x}, \bm{u}, \theta)
        := p(\theta)\, p(x_0)\!
        \prod_{t=1}^T
        p(y_t | x_t, \theta)\,
        p(x_t | x_{t-1}, u_t, \theta)\,
        p(u_t)\,
        \tilde{p}(u_t)\,
        \tilde{p}(x_t)\,
        \tilde{p}(y_t, x_t)\,
        \pi_t(u_t | x_{t-1})\,.
    \end{equation}
    Thus the objective is the log evidence of a preference-weighted
    rollout model, with $\log \hat{p}(x_t)$ and $\log \hat{p}(y_t)$
    playing the role of rewards, exactly as in
    \refappx{appx:planning_cross_entropy}. The remaining changes
    relative to \citet{lazaro-gredilla_what_2024} are that: (i) actions are indexed by the
    state they lead to, so policies are written as $\pi_t(u_t|x_{t-1})$;
    and (ii) the planning entropy differs from the full trajectory
    entropy by the correction $\sum_t \ent{q(u_t | x_{t-1})}$ derived in
    \refappx{appx:planning_entropy_correction}. The optimal policy
    satisfies $\pi_t^*(u_t | x_{t-1}) = q^*(u_t | x_{t-1})$ by the same
    maximization over policies.
\end{proof}

\begin{corollary}[Combined entropy correction]\label{cor:combined-correction}
    Applying \refthm{thm:entropy_decomposition} to expand
    $F_{\tilde{p}}[q]$, the objective~\eqref{eq:efe-planning-vfe}
    becomes
    \begin{equation}\label{eq:combined-correction}
        \min_q\;
        F_{\hat{p}}[q]
        + \underbrace{
            \sum_{t=1}^T
            2\ent{q(y_t | x_t, \theta)}
            - \ent{q(x_t | x_{t-1}, u_t)}
            - \ent{q(y_t | x_t)}
        }_{\Delta^{\mathrm{AIF}}}
        + \underbrace{
            \sum_{t=1}^T \ent{q(u_t | x_{t-1})}
        }_{\Delta^{\mathrm{planning}}}\,.
    \end{equation}
\end{corollary}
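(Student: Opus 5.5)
The plan is to substitute the identity of \refthm{thm:entropy_decomposition} directly into the variational form~\eqref{eq:efe-planning-vfe} of \refthm{thm:efe-planning}. The corollary is then a pure rewriting, because both results hold for every variational distribution $q$. First I fix an arbitrary $q(\bm{y}, \bm{x}, \bm{u}, \theta)$ and write the bracketed objective of~\eqref{eq:efe-planning-vfe} as $J[q] = F_{\tilde{p}}[q] + \sum_{t=1}^T \ent{q(u_t | x_{t-1})}$. Here $F_{\tilde{p}}[q] = \KL{q}{\tilde{p}}$, with $\tilde{p}$ the augmented model~\eqref{eq:augmented_model}.

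Second, I apply \refthm{thm:entropy_decomposition} pointwise in $q$. This replaces $F_{\tilde{p}}[q]$ by
\begin{equation*}
F_{\hat{p}}[q] + \sum_{t=1}^T \bigl(2\ent{q(y_t|x_t,\theta)} - \ent{q(x_t|x_{t-1},u_t)} - \ent{q(y_t|x_t)}\bigr).
\end{equation*}
Because this is an exact algebraic identity, not a bound, it holds for every $q$ in the feasible set. Its three sums come from \lref{lem:p_tilde_x}, \lref{lem:p_tilde_u} and \lref{lem:p_tilde_x_y}, and I label them collectively as $\Delta^{\mathrm{AIF}}$. The planning term $\sum_t \ent{q(u_t|x_{t-1})}$ is left untouched and labeled $\Delta^{\mathrm{planning}}$.

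Third, I conclude that $J[q] = F_{\hat{p}}[q] + \Delta^{\mathrm{AIF}} + \Delta^{\mathrm{planning}}$ for all $q$. Two objectives that agree on the whole domain have identical minimizers and identical minimum values. So $\min_q J$ equals the minimum in~\eqref{eq:combined-correction}, and the optimal policy $\pi_t^* = q^*(u_t|x_{t-1})$ from \refthm{thm:efe-planning} carries over unchanged.

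The one subtlety, which I expect to be the main obstacle, is normalization. The augmented model is defined only up to proportionality, and the epistemic priors in~\eqref{eq:epistemic_priors} are themselves defined only up to proportionality. Meanwhile, \refthm{thm:entropy_decomposition} is proved with the unnormalized forms, as in the lemmas, where $\tilde{p}(x_t) = \exp(\cdot)$ exactly. I would handle this in two ways:
\begin{itemize}
\item I note that~\eqref{eq:efe-planning-vfe} already holds only up to an additive constant independent of $q$, and read the corollary in the same sense.
\item I take care that $\tilde{p}$ means the same unnormalized product in both theorems, so that no $q$-dependent normalizer is silently dropped.
\end{itemize}
The second point matters because the priors depend on $q$, so their normalizers could in principle depend on $q$ and would then not be harmless constants. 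Using the unnormalized exponentials, exactly as in the lemmas, avoids this issue, and the identity then holds verbatim.
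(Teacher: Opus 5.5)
Your proposal is correct and is essentially the paper's proof: the paper also just substitutes the identity of \refthm{thm:entropy_decomposition} into the objective of~\eqref{eq:efe-planning-vfe} pointwise in $q$, exactly as you do. Your remark about using the unnormalized epistemic priors consistently, so that no $q$-dependent normalizer is silently dropped, is a sensible clarification. The paper leaves it implicit, and it matches how the supporting lemmas are proved.
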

\begin{proof}
    Direct substitution of \refthm{thm:entropy_decomposition}
    into~\eqref{eq:efe-planning-vfe}.
\end{proof}

\section{Bethe Free Energy Details}
\label{appx:bethe}

This appendix reviews the standard derivation of the Bethe Free Energy (BFE) from the Variational Free Energy (VFE) on factor graphs, following \citet{yedidia_constructing_2005} and \citet{wainwright_graphical_2008}.
The material is collected here for self-containedness and to establish the notation used in \Cref{subsec:bethe} and subsequent appendices.

\subsection{Factorized Generative Model on an FFG}
\label{appx:bethe_ffg}

\begin{figure}[t]
    \centering
    \includegraphics[width=0.3\columnwidth]{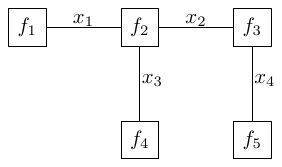}
    \caption{A Forney-style factor graph representing the factorization $p(\bm{x}) = f_1(x_1)\, f_2(x_1, x_2, x_3)\, f_3(x_2, x_4)\, f_4(x_3)\, f_5(x_4)$. Square nodes denote factors; edges denote variables.}
    \label{fig:ffg_example}
\end{figure}

Consider a generative model that factorizes as
\begin{equation}\label{eq:appx-model-factorization}
    p(\bm{s}) = \prod_{a \in \mathcal{V}} f_a(\bm{s}_a)\,,
\end{equation}
where each factor $f_a$ depends on a subset of variables $\bm{s}_a \subseteq \bm{s}$.
This factorization is represented as a Forney-style factor graph (FFG) $\mathcal{G} = (\mathcal{V}, \mathcal{E})$~\citep{forney_codes_2001,loeliger_factor_2007} (see \Cref{fig:ffg_example}), where $\mathcal{V}$ denotes the set of factor nodes and $\mathcal{E}$ the set of edges (variables).
An edge $i \in \mathcal{E}$ connects to a node $a \in \mathcal{V}$ whenever variable $s_i$ appears in the scope of $f_a$.
We write $\mathcal{E}(a)$ for the edges adjacent to node $a$ and $\mathcal{V}(i)$ for the nodes adjacent to edge $i$.
The \emph{degree} of edge $i$ is $d_i = |\mathcal{V}(i)|$, counting the number of factors in which $s_i$ participates.

\subsection{The Bethe Variational Family}
\label{appx:bethe_family}

The Bethe approximation~\citep{yedidia_constructing_2005} parameterizes the variational distribution in terms of \emph{local beliefs}: a factor belief $q_a(\bm{s}_a)$ for each node $a \in \mathcal{V}$ and a singleton belief $q_i(s_i)$ for each edge $i \in \mathcal{E}$.
These beliefs define a pseudo-distribution via the \emph{Bethe factorization}:
\begin{equation}\label{eq:appx-bethe-factorization}
    q_{\mathrm{Bethe}}(\bm{s}) = \frac{\prod_{a \in \mathcal{V}} q_a(\bm{s}_a)}{\prod_{i \in \mathcal{E}} q_i(s_i)^{d_i - 1}}\,.
\end{equation}
The denominator corrects the over-counting: since each variable $s_i$ appears in $d_i$ factor beliefs, naively multiplying all $q_a$ would count $q_i$ a total of $d_i$ times.
Dividing by $q_i^{d_i - 1}$ reduces this to a single effective copy.
For a variable that appears in only one factor ($d_i = 1$), no correction is needed.

This parameterization is valid only when the beliefs satisfy \emph{local consistency constraints}:
\begin{subequations}\label{eq:appx-local-consistency}
    \begin{align}
        \int q_a(\bm{s}_a) \, \dif \bm{s}_{a \setminus i} & = q_i(s_i) \quad \text{for all } a \in \mathcal{V},\; i \in \mathcal{E}(a)\,, \label{eq:appx-marginalization} \\
        \int q_a(\bm{s}_a) \, \dif \bm{s}_a               & = 1 \quad \text{for all } a \in \mathcal{V}\,. \label{eq:appx-bethe-normalization}
    \end{align}
\end{subequations}
The marginalization constraint~\eqref{eq:appx-marginalization} requires that each factor belief, when marginalized over all variables except $s_i$, agrees with the singleton belief $q_i$.
Together with normalization~\eqref{eq:appx-bethe-normalization}, these constraints define the \emph{local polytope} $\mathcal{L}_\mathcal{G}$.

\subsection{Derivation: VFE to BFE}
\label{appx:vfe_to_bfe}

Substituting the model factorization~\eqref{eq:appx-model-factorization} and the Bethe factorization~\eqref{eq:appx-bethe-factorization} into the VFE yields the BFE.

\paragraph{Step 1: Expand the log terms.}
Taking the logarithm of the Bethe factorization:
\begin{equation}\label{eq:appx-log-q}
    \log q_{\mathrm{Bethe}}(\bm{s}) = \sum_{a \in \mathcal{V}} \log q_a(\bm{s}_a) - \sum_{i \in \mathcal{E}} (d_i - 1) \log q_i(s_i)\,.
\end{equation}
Similarly, the log model decomposes as:
\begin{equation}\label{eq:appx-log-p}
    \log p(\bm{s}) = \sum_{a \in \mathcal{V}} \log f_a(\bm{s}_a)\,.
\end{equation}

\paragraph{Step 2: Substitute into the VFE.}
The VFE is $F[q] = \int q(\bm{s}) \log \frac{q(\bm{s})}{p(\bm{s})} \, \dif \bm{s}$.
Substituting~\eqref{eq:appx-log-q} and~\eqref{eq:appx-log-p}:
\begin{equation}\label{eq:appx-vfe-expanded}
    F[q] = \int q(\bm{s}) \left[ \sum_{a \in \mathcal{V}} \log q_a(\bm{s}_a) - \sum_{i \in \mathcal{E}} (d_i - 1) \log q_i(s_i) - \sum_{a \in \mathcal{V}} \log f_a(\bm{s}_a) \right] \dif \bm{s}\,.
\end{equation}

\paragraph{Step 3: Localize expectations using consistency.}
The key step exploits the local consistency constraints.
For any function $g(\bm{s}_a)$ that depends only on the variables in the scope of factor $a$:
\begin{equation}\label{eq:appx-localization-factor}
    \int q(\bm{s}) \, g(\bm{s}_a) \, \dif \bm{s} = \int q_a(\bm{s}_a) \, g(\bm{s}_a) \, \dif \bm{s}_a\,,
\end{equation}
since $q$ marginalizes to $q_a$ over $\bm{s}_a$ by consistency.
Similarly, for any function $h(s_i)$ of a single variable:
\begin{equation}\label{eq:appx-localization-singleton}
    \int q(\bm{s}) \, h(s_i) \, \dif \bm{s} = \int q_i(s_i) \, h(s_i) \, \dif s_i\,.
\end{equation}

Applying these identities to~\eqref{eq:appx-vfe-expanded}:
\begin{equation}\label{eq:appx-bfe-intermediate}
    F[q] = \sum_{a \in \mathcal{V}} \int q_a(\bm{s}_a) \log \frac{q_a(\bm{s}_a)}{f_a(\bm{s}_a)} \, \dif \bm{s}_a - \sum_{i \in \mathcal{E}} (d_i - 1) \int q_i(s_i) \log q_i(s_i) \, \dif s_i\,.
\end{equation}

\paragraph{Step 4: Identify the BFE.}
Recognizing the KL divergence and entropy, one obtains the \emph{Bethe Free Energy}:
\begin{equation}\label{eq:appx-bfe}
    F_{\text{Bethe}}[q] = \sum_{a \in \mathcal{V}} \KL{q_a(\bm{s}_a)}{f_a(\bm{s}_a)} + \sum_{i \in \mathcal{E}} (d_i - 1) \, \ent{q_i(s_i)}\,,
\end{equation}
which is \eqref{eq:bethe_free_energy} in the main text.
The first sum penalizes each factor belief for deviating from its corresponding factor, while the second sum adds back the entropy of shared variables to correct for the over-counting inherent in the Bethe factorization.

\subsection{Constrained Optimization and Belief Propagation}
\label{appx:bethe_bp}

The BFE gives rise to the constrained optimization problem
\begin{equation}\label{eq:appx-cbfe}
    \min_{\{q_a, q_i\} \in \mathcal{L}_\mathcal{G}} F_{\text{Bethe}}[q]\,,
\end{equation}
where $\mathcal{L}_\mathcal{G}$ is the local polytope defined by the constraints~\eqref{eq:appx-local-consistency}.
\citet{yedidia_constructing_2005} showed that the stationary points of this constrained problem correspond exactly to the fixed points of the \emph{belief propagation} (BP) algorithm.

On \emph{tree-structured} graphs, the BFE equals the exact VFE, and BP converges to the exact posterior marginals~\citep{pearl_reverend_1982}.
In this case, the Bethe factorization~\eqref{eq:appx-bethe-factorization} is an exact representation of the global posterior, and the local consistency constraints are sufficient to characterize it.

On graphs with \emph{loops}, the BFE is an approximation: the Bethe factorization does not generally correspond to a valid probability distribution, and BP may not converge.
Nevertheless, when BP does converge, its fixed points remain stationary points of the BFE~\citep{yedidia_constructing_2005,heskes_convexity_2006}.

\begin{remark}
    The Bethe approximation is one member of a broader family.
    Manipulating the local constraints (for instance, imposing full factorization $q(\bm{s}) = \prod_i q_i(s_i)$) recovers the mean-field approximation.
    Other constraint choices yield generalizations such as the Kikuchi free energy~\citep{yedidia_constructing_2005}.
    Different constraint manipulations on the factor graph unify variational message passing, belief propagation, and expectation propagation within a single framework \citep{senoz_message_2022}.
\end{remark}

In models with shared parameters, such as a temporal chain where $\theta$ appears in both the dynamics and observation factors at every time step, the resulting loops make the BFE an approximation rather than an exact decomposition.
Entropic corrections to the BFE yield modified objectives whose stationarity conditions are derived in \refappx{appx:combined_detailed_derivation}.

\section{Detailed Message Passing Derivation for the Combined Objective}
\label{appx:combined_detailed_derivation}

This appendix gives a self-contained $T=1$ derivation of the message-passing scheme for the combined objective. The objective includes both the cross-entropy planning correction and the entropy corrections required for planning-as-inference in active inference. Its message-passing structure is organized around four channels, with $r_{u|x}(u_1|x_0)$ appearing in the numerator of the dynamics kernel.

\subsection{Model, Coordinates, and Constraints}
\label{appx:combined_detailed_setup}

We consider the biased generative model
\begin{equation}\label{eq:comb-detail-generative-model}
    p(y_1, x_1, x_0, \theta, u_1)
    =
    p(\theta)\, p(x_0)\, p(u_1)\, p(x_1 | x_0, \theta, u_1)\, p(y_1 | x_1, \theta)\, \hat{p}_x(x_1)\, \hat{p}_y(y_1).
\end{equation}
The corresponding non-singleton factors are the observation factor $f_{\mathrm{obs}}(y_1,x_1,\theta)=p(y_1|x_1,\theta)$ and the dynamics factor $f_{\mathrm{dyn}}(x_1,x_0,\theta,u_1)=p(x_1|x_0,\theta,u_1)$, together with singleton prior and goal factors on $\theta$, $x_0$, $u_1$, $x_1$, and $y_1$.

The Bethe coordinates consist of the factor beliefs
\begin{equation}
    q_{\mathrm{obs}}(y_1, x_1, \theta), \qquad q_{\mathrm{dyn}}(x_1, x_0, \theta, u_1),
\end{equation}
the singleton beliefs are
\begin{equation}
    q_\theta(\theta), \quad q_{x_0}(x_0), \quad q_{x_1}(x_1), \quad q_{y_1}(y_1), \quad q_{u_1}(u_1),
\end{equation}
and the local-polytope constraints consist of factor normalization,
\begin{subequations}\label{eq:comb-detail-factor-normalization}
    \begin{align}
        \int q_{\mathrm{obs}}(y_1, x_1, \theta)\, \dif y_1\, \dif x_1\, \dif \theta                 & = 1, \\
        \int q_{\mathrm{dyn}}(x_1, x_0, \theta, u_1)\, \dif x_1\, \dif x_0\, \dif \theta\, \dif u_1 & = 1,
    \end{align}
\end{subequations}
and factor-to-singleton consistency,
\begin{subequations}\label{eq:comb-detail-consistency}
    \begin{align}
        \int q_{\mathrm{obs}}(y_1, x_1, \theta)\, \dif x_1\, \dif \theta                 & = q_{y_1}(y_1),     \\
        \int q_{\mathrm{obs}}(y_1, x_1, \theta)\, \dif y_1\, \dif \theta                 & = q_{x_1}(x_1),     \\
        \int q_{\mathrm{obs}}(y_1, x_1, \theta)\, \dif y_1\, \dif x_1                    & = q_\theta(\theta), \\
        \int q_{\mathrm{dyn}}(x_1, x_0, \theta, u_1)\, \dif x_0\, \dif \theta\, \dif u_1 & = q_{x_1}(x_1),     \\
        \int q_{\mathrm{dyn}}(x_1, x_0, \theta, u_1)\, \dif x_1\, \dif \theta\, \dif u_1 & = q_{x_0}(x_0),     \\
        \int q_{\mathrm{dyn}}(x_1, x_0, \theta, u_1)\, \dif x_1\, \dif x_0\, \dif u_1    & = q_\theta(\theta), \\
        \int q_{\mathrm{dyn}}(x_1, x_0, \theta, u_1)\, \dif x_1\, \dif x_0\, \dif \theta & = q_{u_1}(u_1).
    \end{align}
\end{subequations}

The combined objective uses four normalized channels:
\begin{subequations}\label{eq:comb-detail-channels}
    \begin{align}
        r_{y|x\theta}(y_1|x_1,\theta), \qquad
        r_{y|x}(y_1|x_1), \qquad
        r_{x|xu}(x_1|x_0,u_1), \qquad
        r_{u|x}(u_1|x_0),
    \end{align}
\end{subequations}
with normalization constraints
\begin{subequations}\label{eq:comb-detail-channel-norms}
    \begin{align}
        \int r_{y|x\theta}(y_1|x_1,\theta)\, \dif y_1 & = 1 \quad \forall (x_1,\theta), \\
        \int r_{y|x}(y_1|x_1)\, \dif y_1              & = 1 \quad \forall x_1,          \\
        \int r_{x|xu}(x_1|x_0,u_1)\, \dif x_1         & = 1 \quad \forall (x_0,u_1),    \\
        \int r_{u|x}(u_1|x_0)\, \dif u_1              & = 1 \quad \forall x_0.
    \end{align}
\end{subequations}

We also use the derived marginals
\begin{subequations}\label{eq:comb-detail-derived}
    \begin{align}
        q_{\mathrm{sep}}(x_1,\theta)   & := \int q_{\mathrm{obs}}(y_1,x_1,\theta)\, \dif y_1,                   \\
        q_{yx}(y_1,x_1)                & := \int q_{\mathrm{obs}}(y_1,x_1,\theta)\, \dif \theta,                \\
        q_{\mathrm{trip}}(x_1,x_0,u_1) & := \int q_{\mathrm{dyn}}(x_1,x_0,\theta,u_1)\, \dif \theta,            \\
        q_{\mathrm{pair}}(x_0,u_1)     & := \int q_{\mathrm{trip}}(x_1,x_0,u_1)\, \dif x_1,                     \\
        q_{ux}(u_1,x_0)                & := \int q_{\mathrm{dyn}}(x_1,x_0,\theta,u_1)\, \dif x_1\, \dif \theta.
    \end{align}
\end{subequations}
The last quantity is just $q_{ux}(u_1,x_0) = q_{\mathrm{pair}}(x_0,u_1)$, but it is convenient to name it separately when deriving the policy channel.

\subsection{Combined Objective and Modified Kernels}
\label{appx:combined_detailed_objective}

For $T=1$, the correction added to the Bethe free energy is
\begin{equation}\label{eq:comb-detail-correction}
    \Delta F_{\mathrm{comb}}
    =
    \underbrace{
        2\ent{q(y_1|x_1,\theta)}
        - \ent{q(x_1|x_0,u_1)}
        - \ent{q(y_1|x_1)}
    }_{\Delta^{\mathrm{AIF}}}
    +
    \underbrace{
        \ent{q(u_1|x_0)}
    }_{\Delta^{\mathrm{planning}}}.
\end{equation}

Using the variational characterization of conditional entropy, each conditional entropy is reparameterized by a channel. The signs matter:
\begin{itemize}
    \item $+\ent{q(u_1|x_0)}$ contributes $-\Ex{q_{\mathrm{dyn}}}{\log r_{u|x}(u_1|x_0)}$, so $r_{u|x}$ enters in the numerator of the dynamics kernel.
    \item $-\ent{q(x_1|x_0,u_1)}$ contributes $+\Ex{q_{\mathrm{dyn}}}{\log r_{x|xu}(x_1|x_0,u_1)}$, so $r_{x|xu}$ enters in the denominator of the dynamics kernel.
    \item $+2\ent{q(y_1|x_1,\theta)}$ contributes $-2\Ex{q_{\mathrm{obs}}}{\log r_{y|x\theta}(y_1|x_1,\theta)}$, so $r_{y|x\theta}^2$ enters in the numerator of the observation kernel.
    \item $-\ent{q(y_1|x_1)}$ contributes $+\Ex{q_{yx}}{\log r_{y|x}(y_1|x_1)}$, so $r_{y|x}$ enters in the denominator of the observation kernel.
\end{itemize}

The resulting objective is
\begin{equation}\label{eq:comb-detail-full-objective}
    \begin{aligned}
        F_{\mathrm{comb}}[q,r]
         & = \int q_{\mathrm{obs}} \log \frac{q_{\mathrm{obs}}}{p(y_1|x_1,\theta)}\, \dif y_1\, \dif x_1\, \dif \theta
        - 2 \int q_{\mathrm{obs}} \log r_{y|x\theta}\, \dif y_1\, \dif x_1\, \dif \theta                                                    \\
         & \quad + \int q_{yx}(y_1,x_1) \log r_{y|x}(y_1|x_1)\, \dif y_1\, \dif x_1                                                         \\
         & \quad + \int q_{\mathrm{dyn}} \log \frac{q_{\mathrm{dyn}}}{p(x_1|x_0,\theta,u_1)}\, \dif x_1\, \dif x_0\, \dif \theta\, \dif u_1 \\
         & \quad + \int q_{\mathrm{dyn}} \log r_{x|xu}(x_1|x_0,u_1)\, \dif x_1\, \dif x_0\, \dif \theta\, \dif u_1                          \\
         & \quad - \int q_{\mathrm{dyn}} \log r_{u|x}(u_1|x_0)\, \dif x_1\, \dif x_0\, \dif \theta\, \dif u_1                               \\
         & \quad - \int q_{x_0} \log p(x_0)\, \dif x_0
        - \int q_{u_1} \log p(u_1)\, \dif u_1
        - \int q_{y_1} \log \hat{p}_y(y_1)\, \dif y_1                                                                                       \\
         & \quad + (d_\theta - 1)\ent{q_\theta} - \int q_\theta \log p(\theta)\, \dif \theta                                                \\
         & \quad + (d_{x_1} - 1)\ent{q_{x_1}} - \int q_{x_1} \log \hat{p}_x(x_1)\, \dif x_1.
    \end{aligned}
\end{equation}

The variable degrees are
\begin{equation}\label{eq:comb-detail-degrees}
    d_\theta = 3, \qquad d_{x_0} = 2, \qquad d_{x_1} = 3, \qquad d_{y_1} = 2, \qquad d_{u_1} = 2.
\end{equation}

\begin{remark}[Combined kernels]\label{rem:comb-detail-kernels}
    The two non-singleton factors are reweighted by
    \begin{subequations}\label{eq:comb-detail-kernels}
        \begin{align}
            \tilde{f}_{\mathrm{obs}}(y_1,x_1,\theta)
             & = \frac{p(y_1|x_1,\theta)\, r_{y|x\theta}^2(y_1|x_1,\theta)}{r_{y|x}(y_1|x_1)}, \label{eq:comb-detail-obs-kernel} \\
            \tilde{f}_{\mathrm{dyn}}^{\mathrm{comb}}(x_1,x_0,\theta,u_1)
             & = \frac{p(x_1|x_0,\theta,u_1)\, r_{u|x}(u_1|x_0)}{r_{x|xu}(x_1|x_0,u_1)}. \label{eq:comb-detail-dyn-kernel}
        \end{align}
    \end{subequations}
    The observation kernel is
    \begin{equation}
        \tilde{f}_{\mathrm{obs}}(y_1,x_1,\theta)
        =
        \frac{p(y_1|x_1,\theta)\, r_{y|x\theta}^2(y_1|x_1,\theta)}{r_{y|x}(y_1|x_1)},
    \end{equation}
    while the dynamics kernel is a genuine ratio in which $r_{u|x}$ sharpens action selection and $r_{x|xu}$ spreads mass over predicted futures.
\end{remark}

\subsection{Lagrangian}
\label{appx:combined_detailed_lagrangian}

We introduce Lagrange multipliers for all normalization and marginalization constraints. The factor and consistency multipliers are
\begin{equation}
    \lambda_{\mathrm{obs}}, \quad
    \lambda_{\mathrm{dyn}}, \quad
    \lambda_{y_1}(y_1), \quad
    \lambda_\theta^{(\mathrm{obs})}(\theta), \quad
    \lambda_{x_1}^{(\mathrm{obs})}(x_1), \quad
    \lambda_{x_1}^{(\mathrm{dyn})}(x_1), \quad
    \lambda_{x_0}(x_0), \quad
    \lambda_{u_1}(u_1), \quad
    \lambda_\theta^{(\mathrm{dyn})}(\theta),
\end{equation}
and the channel multipliers are
\begin{equation}
    \nu_{\mathrm{obs}}(x_1,\theta), \qquad \nu_{y|x}(x_1), \qquad \nu_x(x_0,u_1), \qquad \nu_{u|x}(x_0).
\end{equation}

The full Lagrangian is
\begin{equation}\label{eq:comb-detail-lagrangian}
    \begin{aligned}
        \mathcal{L}_{\mathrm{comb}}
         & = F_{\mathrm{comb}}[q,r]                                                                                                                                                       \\
         & \quad + \lambda_{\mathrm{obs}} \left(\int q_{\mathrm{obs}}(y_1, x_1, \theta)\, \dif y_1\, \dif x_1\, \dif \theta - 1\right)                                                    \\
         & \quad + \lambda_{\mathrm{dyn}} \left(\int q_{\mathrm{dyn}}(x_1, x_0, \theta, u_1)\, \dif x_1\, \dif x_0\, \dif \theta\, \dif u_1 - 1\right)                                    \\
         & \quad + \int \lambda_{y_1}(y_1) \left(\int q_{\mathrm{obs}}(y_1, x_1, \theta)\, \dif x_1\, \dif \theta - q_{y_1}(y_1)\right) \dif y_1                                          \\
         & \quad + \int \lambda_\theta^{(\mathrm{obs})}(\theta) \left(\int q_{\mathrm{obs}}(y_1, x_1, \theta)\, \dif y_1\, \dif x_1 - q_\theta(\theta)\right) \dif \theta                 \\
         & \quad + \int \lambda_{x_1}^{(\mathrm{obs})}(x_1) \left(\int q_{\mathrm{obs}}(y_1, x_1, \theta)\, \dif y_1\, \dif \theta - q_{x_1}(x_1)\right) \dif x_1                         \\
         & \quad + \int \lambda_{x_1}^{(\mathrm{dyn})}(x_1) \left(\int q_{\mathrm{dyn}}(x_1, x_0, \theta, u_1)\, \dif x_0\, \dif \theta\, \dif u_1 - q_{x_1}(x_1)\right) \dif x_1         \\
         & \quad + \int \lambda_{x_0}(x_0) \left(\int q_{\mathrm{dyn}}(x_1, x_0, \theta, u_1)\, \dif x_1\, \dif \theta\, \dif u_1 - q_{x_0}(x_0)\right) \dif x_0                          \\
         & \quad + \int \lambda_{u_1}(u_1) \left(\int q_{\mathrm{dyn}}(x_1, x_0, \theta, u_1)\, \dif x_1\, \dif x_0\, \dif \theta - q_{u_1}(u_1)\right) \dif u_1                          \\
         & \quad + \int \lambda_\theta^{(\mathrm{dyn})}(\theta) \left(\int q_{\mathrm{dyn}}(x_1, x_0, \theta, u_1)\, \dif x_1\, \dif x_0\, \dif u_1 - q_\theta(\theta)\right) \dif \theta \\
         & \quad + \iint \nu_{\mathrm{obs}}(x_1, \theta) \left(\int r_{y|x\theta}(y_1|x_1,\theta)\, \dif y_1 - 1\right) \dif x_1\, \dif \theta                                            \\
         & \quad + \iint \nu_{y|x}(x_1) \left(\int r_{y|x}(y_1|x_1)\, \dif y_1 - 1\right) \dif x_1                                                                                        \\
         & \quad + \iint \nu_x(x_0, u_1) \left(\int r_{x|xu}(x_1|x_0,u_1)\, \dif x_1 - 1\right) \dif x_0\, \dif u_1                                                                       \\
         & \quad + \int \nu_{u|x}(x_0) \left(\int r_{u|x}(u_1|x_0)\, \dif u_1 - 1\right) \dif x_0.
    \end{aligned}
\end{equation}

\subsection{Stationarity Conditions}
\label{appx:combined_detailed_stationarity}

\subsubsection{Variation with respect to \texorpdfstring{$q_{\mathrm{obs}}$}{qobs}}

The stationarity equation for the observation factor is
\begin{proposition}[Observation factor belief for the combined objective]\label{prop:comb-detail-qobs}
    At stationarity,
    \begin{equation}\label{eq:comb-detail-qobs}
        q_{\mathrm{obs}}^*(y_1,x_1,\theta)
        \propto
        \frac{p(y_1|x_1,\theta)\, r_{y|x\theta}^2(y_1|x_1,\theta)}{r_{y|x}(y_1|x_1)}
        e^{-\lambda_{y_1}(y_1)}
        e^{-\lambda_\theta^{(\mathrm{obs})}(\theta)}
        e^{-\lambda_{x_1}^{(\mathrm{obs})}(x_1)}.
    \end{equation}
\end{proposition}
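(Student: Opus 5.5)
The plan is to treat the Lagrangian \eqref{eq:comb-detail-lagrangian} as a function of the finite collection of numbers $q_{\mathrm{obs}}(y_1,x_1,\theta)$, with the channels $r$, the other beliefs, and all multipliers held fixed. We then set its partial derivative with respect to each entry $q_{\mathrm{obs}}(y_1,x_1,\theta)$ to zero and solve for $q_{\mathrm{obs}}$. We are in the discrete regime, so integrals are finite sums and the variation is an ordinary partial derivative. We assume $q_{\mathrm{obs}}$ is strictly positive at stationarity so that its logarithm is defined; this is automatic once the resulting exponential form is obtained, wherever $p(y_1|x_1,\theta)>0$.

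First, we identify which terms of $F_{\mathrm{comb}}[q,r]$ in \eqref{eq:comb-detail-full-objective} depend on $q_{\mathrm{obs}}$. There are three.
\begin{itemize}
\item The local KL term $\int q_{\mathrm{obs}}\log(q_{\mathrm{obs}}/p)$ has derivative $\log q_{\mathrm{obs}} + 1 - \log p(y_1|x_1,\theta)$.
\item The channel term $-2\int q_{\mathrm{obs}}\log r_{y|x\theta}$ has derivative $-2\log r_{y|x\theta}(y_1|x_1,\theta)$.
\item The term $\int q_{yx}\log r_{y|x}$ depends on $q_{\mathrm{obs}}$ only through the derived marginal $q_{yx}$ in \eqref{eq:comb-detail-derived}.
\end{itemize}
The singleton terms (entropies and priors on $q_\theta, q_{x_1}, q_{y_1}$) do not involve $q_{\mathrm{obs}}$ directly; they couple to it only through the consistency constraints.

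Second, we differentiate the constraint terms. The normalization constraint contributes $\lambda_{\mathrm{obs}}$. The three consistency constraints \eqref{eq:comb-detail-consistency} that marginalize $q_{\mathrm{obs}}$ contribute $\lambda_{y_1}(y_1)+\lambda_\theta^{(\mathrm{obs})}(\theta)+\lambda_{x_1}^{(\mathrm{obs})}(x_1)$. The channel normalization multipliers $\nu$ do not involve $q_{\mathrm{obs}}$ at all. Collecting everything, the stationarity condition reads
\begin{equation*}
\log q_{\mathrm{obs}} + 1 - \log p - 2\log r_{y|x\theta} + \log r_{y|x} + \lambda_{\mathrm{obs}} + \lambda_{y_1} + \lambda_\theta^{(\mathrm{obs})} + \lambda_{x_1}^{(\mathrm{obs})} = 0.
\end{equation*}
Exponentiating and absorbing $e^{-1-\lambda_{\mathrm{obs}}}$ into the proportionality constant gives \eqref{eq:comb-detail-qobs}. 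The prefactor is exactly the kernel $\tilde{f}_{\mathrm{obs}}$ of \Cref{rem:comb-detail-kernels}.

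The only step needing care is the $r_{y|x}$ term. Since $q_{yx}(y_1,x_1)=\sum_\theta q_{\mathrm{obs}}(y_1,x_1,\theta)$, the chain rule gives
\begin{equation*}
\partial q_{yx}(y_1',x_1')/\partial q_{\mathrm{obs}}(y_1,x_1,\theta)=\mathbb{1}[y_1'=y_1,\,x_1'=x_1].
\end{equation*}
So this term contributes $+\log r_{y|x}(y_1|x_1)$, constant in $\theta$. This yields the denominator $r_{y|x}$ rather than some $\theta$-averaged quantity.

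We also stress that $r$ is a free coordinate. Because of the Gibbs reparameterization \eqref{eq:gibbs_identity}, the $q$-dependence of the conditional entropies has been moved entirely into the linear terms $\mathbb{E}_q[\log r]$. Hence no derivative of $\log q(y_1|x_1,\theta)$ or of $\log q(y_1|x_1)$ with respect to $q_{\mathrm{obs}}$ appears. This is precisely what makes the result a closed-form Gibbs-type expression.
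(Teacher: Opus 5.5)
Your proposal is correct and takes the same route as the paper, which leaves this derivation implicit (by analogy with its explicit $q_{\mathrm{dyn}}$ case): differentiate the Lagrangian \eqref{eq:comb-detail-lagrangian} with respect to $q_{\mathrm{obs}}$ with the channels and multipliers held fixed, collect the KL, channel, and constraint contributions, and exponentiate, absorbing $e^{-1-\lambda_{\mathrm{obs}}}$ into the normalizer. Your explicit chain-rule step through the derived marginal $q_{yx}$, giving the $\theta$-independent denominator $r_{y|x}(y_1|x_1)$, is a useful detail that the paper does not spell out.
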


\subsubsection{Variation with respect to \texorpdfstring{$q_{\mathrm{dyn}}$}{qdyn}}

The dynamics-side terms now contain both channel contributions:
\begin{equation}
    \begin{aligned}
         & \int q_{\mathrm{dyn}} \log q_{\mathrm{dyn}}
        - \int q_{\mathrm{dyn}} \log p(x_1|x_0,\theta,u_1)
        + \int q_{\mathrm{dyn}} \log r_{x|xu}(x_1|x_0,u_1)     \\
         & \quad - \int q_{\mathrm{dyn}} \log r_{u|x}(u_1|x_0)
        + \text{(multiplier terms)}.
    \end{aligned}
\end{equation}
Taking $\frac{\delta \mathcal{L}_{\mathrm{comb}}}{\delta q_{\mathrm{dyn}}}=0$ gives
\begin{equation}
    \log q_{\mathrm{dyn}} + 1 - \log p(x_1|x_0,\theta,u_1) + \log r_{x|xu}(x_1|x_0,u_1) - \log r_{u|x}(u_1|x_0)
    + \lambda_{\mathrm{dyn}} + \lambda_{x_1}^{(\mathrm{dyn})} + \lambda_{x_0} + \lambda_{u_1} + \lambda_\theta^{(\mathrm{dyn})} = 0.
\end{equation}

\begin{proposition}[Combined dynamics factor belief]\label{prop:comb-detail-qdyn}
    At stationarity,
    \begin{equation}\label{eq:comb-detail-qdyn}
        q_{\mathrm{dyn}}^*(x_1,x_0,\theta,u_1)
        \propto
        \frac{p(x_1|x_0,\theta,u_1)\, r_{u|x}(u_1|x_0)}{r_{x|xu}(x_1|x_0,u_1)}
        e^{-\lambda_{x_1}^{(\mathrm{dyn})}(x_1)}
        e^{-\lambda_{x_0}(x_0)}
        e^{-\lambda_{u_1}(u_1)}
        e^{-\lambda_\theta^{(\mathrm{dyn})}(\theta)}.
    \end{equation}
    The dynamics factor is therefore reweighted by a policy-dependent numerator together with a predictive denominator.
\end{proposition}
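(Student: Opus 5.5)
The plan is to read the result directly off the stationarity equation displayed just before the statement: exponentiate it and absorb the constant terms into a normalizer. Since the setting is discrete, the functional derivative of $\mathcal{L}_{\mathrm{comb}}$ with respect to $q_{\mathrm{dyn}}$ is an ordinary partial derivative in each coordinate $q_{\mathrm{dyn}}(x_1,x_0,\theta,u_1)$. I will first verify that displayed equation by collecting every term of $\mathcal{L}_{\mathrm{comb}}$ in \eqref{eq:comb-detail-lagrangian} that depends on $q_{\mathrm{dyn}}$. These terms come from three sources.

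\begin{enumerate}
\item \emph{The objective.} The term $\int q_{\mathrm{dyn}}\log(q_{\mathrm{dyn}}/p)$ contributes $\log q_{\mathrm{dyn}} + 1 - \log p(x_1|x_0,\theta,u_1)$. The channel terms contribute $+\log r_{x|xu}(x_1|x_0,u_1)$ and $-\log r_{u|x}(u_1|x_0)$. These channel terms are linear in $q_{\mathrm{dyn}}$ because the channels are free variables held fixed in this variation.
\item \emph{The normalization multiplier}, which contributes $\lambda_{\mathrm{dyn}}$.
\item \emph{The four consistency multipliers.} Each contributes $\lambda_{x_1}^{(\mathrm{dyn})}(x_1)$, $\lambda_{x_0}(x_0)$, $\lambda_{u_1}(u_1)$ and $\lambda_\theta^{(\mathrm{dyn})}(\theta)$ respectively, evaluated at the corresponding argument.
\end{enumerate}

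One point needs care. In \eqref{eq:comb-detail-full-objective} the singleton-entropy and singleton-prior terms are written in terms of $q_{x_0}, q_{u_1}, q_\theta, q_{x_1}$ as separate coordinates. These couple to $q_{\mathrm{dyn}}$ only through the multipliers, so they contribute nothing further here.

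Next I solve for $\log q_{\mathrm{dyn}}$ and exponentiate:
\[
q_{\mathrm{dyn}}^* = e^{-1-\lambda_{\mathrm{dyn}}}\,\frac{p\,r_{u|x}}{r_{x|xu}}\,e^{-\lambda_{x_1}^{(\mathrm{dyn})}}e^{-\lambda_{x_0}}e^{-\lambda_{u_1}}e^{-\lambda_\theta^{(\mathrm{dyn})}}.
\]
The factor $e^{-1-\lambda_{\mathrm{dyn}}}$ does not depend on $(x_1,x_0,\theta,u_1)$. It is fixed by the normalization constraint \eqref{eq:comb-detail-factor-normalization}, which gives the stated proportionality.

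The main subtlety is well-posedness. The stationary point must be interior, and the kernel must be finite and positive wherever $q_{\mathrm{dyn}}$ has support, so that the entropy derivative $\log q_{\mathrm{dyn}}+1$ is legitimate. I would handle this by assuming the channels are strictly positive, which holds at initialization (uniform) and is preserved by the damped geometric updates \eqref{eq:damping}. Where $p=0$, I take $q_{\mathrm{dyn}}=0$ by the usual convention.

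A second point is that this is stationarity in $q_{\mathrm{dyn}}$ alone, for fixed channels. I do not claim a minimum, because the $+\log r_{x|xu}$ term makes the full problem min-max. The statement only asserts the stationary form, so this suffices. Finally, I note the interpretation: $r_{u|x}$ appears in the numerator and $r_{x|xu}$ in the denominator, matching the kernel \eqref{eq:comb-detail-dyn-kernel}.
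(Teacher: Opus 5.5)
Your proposal is correct and follows the paper's own argument. You collect the $q_{\mathrm{dyn}}$-dependent terms of $\mathcal{L}_{\mathrm{comb}}$ (the KL term, the two linear channel terms, $\lambda_{\mathrm{dyn}}$, and the four consistency multipliers), set the derivative to zero, and exponentiate, absorbing $e^{-1-\lambda_{\mathrm{dyn}}}$ into the normalizer. One minor side remark: geometric damping \eqref{eq:damping} does not by itself keep the channels strictly positive, because for $\lambda>0$ the factor $(r_c^*)^{\lambda}$ vanishes wherever $r_c^*$ does (e.g.\ under deterministic dynamics), so positivity should be stated as an assumption rather than derived from the update rule.
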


\subsubsection{Variation with respect to the observation channels}

The observation-side channels satisfy
\begin{subequations}\label{eq:comb-detail-obs-channel-stationary}
    \begin{align}
        r_{y|x\theta}^*(y_1|x_1,\theta) & = \frac{q_{\mathrm{obs}}(y_1,x_1,\theta)}{q_{\mathrm{sep}}(x_1,\theta)} = q(y_1|x_1,\theta), \label{eq:comb-detail-r-yxt} \\
        r_{y|x}^*(y_1|x_1)              & = \frac{q_{yx}(y_1,x_1)}{q_{x_1}(x_1)} = q(y_1|x_1). \label{eq:comb-detail-r-yx}
    \end{align}
\end{subequations}

\subsubsection{Variation with respect to \texorpdfstring{$r_{x|xu}$}{r(x|xu)}}

The predictive dynamics channel is obtained from
\begin{equation}
    \frac{\delta \mathcal{L}_{\mathrm{comb}}}{\delta r_{x|xu}(x_1|x_0,u_1)}
    =
    \frac{q_{\mathrm{trip}}(x_1,x_0,u_1)}{r_{x|xu}(x_1|x_0,u_1)} + \nu_x(x_0,u_1) = 0.
\end{equation}

\begin{proposition}[Predictive dynamics channel]\label{prop:comb-detail-rx}
    At stationarity,
    \begin{equation}\label{eq:comb-detail-rx}
        r_{x|xu}^*(x_1|x_0,u_1)
        =
        \frac{q_{\mathrm{trip}}(x_1,x_0,u_1)}{q_{\mathrm{pair}}(x_0,u_1)}
        =
        q(x_1|x_0,u_1).
    \end{equation}
\end{proposition}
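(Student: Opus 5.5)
The plan is to treat the Lagrangian \eqref{eq:comb-detail-lagrangian} as a function of the channel $r_{x|xu}$ alone, holding all beliefs and the remaining channels fixed. We then solve the pointwise stationarity condition displayed just before the proposition, and use the normalization constraint on $r_{x|xu}$ to identify the multiplier $\nu_x(x_0,u_1)$.

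\textbf{Step 1: isolate the dependence on $r_{x|xu}$.} Only two terms of $\mathcal{L}_{\mathrm{comb}}$ involve this channel. The first is the term $+\int q_{\mathrm{dyn}} \log r_{x|xu}(x_1|x_0,u_1)$ from $F_{\mathrm{comb}}$. Since $r_{x|xu}$ does not depend on $\theta$, integrating out $\theta$ reduces it to $\int q_{\mathrm{trip}}(x_1,x_0,u_1)\log r_{x|xu}(x_1|x_0,u_1)$, using the derived marginal $q_{\mathrm{trip}}$ from \eqref{eq:comb-detail-derived}. The second is the normalization term $\iint \nu_x(x_0,u_1)\bigl(\int r_{x|xu}\,\dif x_1 - 1\bigr)$. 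Differentiating pointwise in the discrete setting gives
\[
q_{\mathrm{trip}}(x_1,x_0,u_1)/r_{x|xu}(x_1|x_0,u_1) + \nu_x(x_0,u_1) = 0,
\]
that is, $r_{x|xu}(x_1|x_0,u_1) = -q_{\mathrm{trip}}(x_1,x_0,u_1)/\nu_x(x_0,u_1)$.

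\textbf{Step 2: fix the multiplier by normalization.} Summing the expression for $r_{x|xu}$ over $x_1$ and imposing $\sum_{x_1} r_{x|xu}=1$ gives $-\nu_x(x_0,u_1) = \sum_{x_1} q_{\mathrm{trip}}(x_1,x_0,u_1) = q_{\mathrm{pair}}(x_0,u_1)$. Substituting back yields
\[
r_{x|xu}^* = q_{\mathrm{trip}}/q_{\mathrm{pair}},
\]
which by definition is the conditional $q(x_1|x_0,u_1)$ induced by the dynamics factor belief.

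\textbf{Step 3: confirm this is the right kind of stationary point.} As a consistency check, the same answer follows from Gibbs' identity \eqref{eq:gibbs_identity}. The $r_{x|xu}$-dependent part of the objective is $\sum q_{\mathrm{trip}}\log r_{x|xu}$. Because this channel enters through $-\ent{q(x_1|x_0,u_1)}$, the objective is concave in $r_{x|xu}$, so the stationary point is the unique \emph{maximizer} over normalized channels. This is the max side of the min-max structure noted in \Cref{subsec:convergence}, and the maximal value recovers exactly $-\ent{q(x_1|x_0,u_1)}$.

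\textbf{Expected obstacle.} The only delicate point is where $q_{\mathrm{pair}}(x_0,u_1)=0$. There the stationarity condition leaves $r_{x|xu}(\cdot|x_0,u_1)$ undetermined, and the conditional is defined by convention, for example as uniform. This does not affect the objective, since such pairs carry zero weight. I would state this convention and otherwise assume positive support.
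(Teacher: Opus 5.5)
Your proposal is correct and follows the paper's route. You isolate the $q_{\mathrm{trip}}$-weighted $\log r_{x|xu}$ term together with its normalization multiplier, solve the pointwise condition $q_{\mathrm{trip}}/r_{x|xu} + \nu_x = 0$, and fix $\nu_x = -q_{\mathrm{pair}}$ by normalization; your Gibbs/concavity check showing this stationary point is a maximizer, and your remark on pairs with $q_{\mathrm{pair}}=0$, are sound additions that the paper leaves implicit.
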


\subsubsection{Variation with respect to \texorpdfstring{$r_{u|x}$}{r(u|x)}}

Since $r_{u|x}$ does not depend on $x_1$ or $\theta$, only the marginal $q_{ux}(u_1,x_0)$ matters:
\begin{equation}
    - \int q_{ux}(u_1,x_0)\log r_{u|x}(u_1|x_0)\, \dif u_1\, \dif x_0
    + \int \nu_{u|x}(x_0)\left(\int r_{u|x}(u_1|x_0)\, \dif u_1 - 1\right)\dif x_0.
\end{equation}

Taking a pointwise derivative yields
\begin{equation}
    -\frac{q_{ux}(u_1,x_0)}{r_{u|x}(u_1|x_0)} + \nu_{u|x}(x_0) = 0.
\end{equation}
Normalization implies $\nu_{u|x}(x_0)=q_{x_0}(x_0)$.

\begin{proposition}[Policy channel]\label{prop:comb-detail-rux}
    At stationarity,
    \begin{equation}\label{eq:comb-detail-rux}
        r_{u|x}^*(u_1|x_0)
        =
        \frac{q_{ux}(u_1,x_0)}{q_{x_0}(x_0)}
        =
        q(u_1|x_0).
    \end{equation}
\end{proposition}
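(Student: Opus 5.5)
The plan is to isolate the terms of $\mathcal{L}_{\mathrm{comb}}$ in \eqref{eq:comb-detail-lagrangian} that depend on $r_{u|x}$, solve the pointwise stationarity equation, and fix the multiplier by normalization. The derivation mirrors \refprop{prop:comb-detail-rx}.

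First, the only place $r_{u|x}$ enters $F_{\mathrm{comb}}[q,r]$ in \eqref{eq:comb-detail-full-objective} is the term $-\int q_{\mathrm{dyn}} \log r_{u|x}(u_1|x_0)$. The only constraint term involving it is the one multiplied by $\nu_{u|x}(x_0)$. Because the integrand $\log r_{u|x}(u_1|x_0)$ does not depend on $x_1$ or $\theta$, I integrate these variables out of $q_{\mathrm{dyn}}$. This replaces the term by $-\int q_{ux}(u_1,x_0)\log r_{u|x}(u_1|x_0)\,\dif u_1\,\dif x_0$, with $q_{ux}$ as in \eqref{eq:comb-detail-derived}. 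In the discrete setting the functional derivative is an ordinary partial derivative, so differentiating with respect to $r_{u|x}(u_1|x_0)$ gives
\begin{equation*}
-\frac{q_{ux}(u_1,x_0)}{r_{u|x}(u_1|x_0)} + \nu_{u|x}(x_0) = 0,
\end{equation*}
hence $r_{u|x}(u_1|x_0) = q_{ux}(u_1,x_0)/\nu_{u|x}(x_0)$.

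Second, I impose the normalization $\sum_{u_1} r_{u|x}(u_1|x_0) = 1$. This forces $\nu_{u|x}(x_0) = \sum_{u_1} q_{ux}(u_1,x_0)$. By the consistency constraints \eqref{eq:comb-detail-consistency} for $q_{\mathrm{dyn}}$, marginalizing $q_{\mathrm{dyn}}$ over $(x_1,\theta,u_1)$ gives $q_{x_0}(x_0)$, so the sum equals $q_{x_0}(x_0)$. Substituting back yields $r^*_{u|x}(u_1|x_0) = q_{ux}(u_1,x_0)/q_{x_0}(x_0) = q(u_1|x_0)$.

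Third, I check that this stationary point is the relevant optimum. The map $r \mapsto -\sum q_{ux}\log r$ is strictly convex on the simplex, and $r_{u|x}$ enters with a positive sign, so the point is a minimum. This is exactly Gibbs' inequality \eqref{eq:gibbs_identity}, and the reparameterization is therefore exact.

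The main obstacle is a minor technicality: states $x_0$ with $q_{x_0}(x_0)=0$. For those states the conditional is undefined, and any normalized $r_{u|x}(\cdot|x_0)$ is stationary. I would note this and adopt the convention of taking the uniform distribution there. Otherwise the argument is routine.
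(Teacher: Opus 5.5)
Your proposal is correct and matches the paper's own proof: you restrict $\mathcal{L}_{\mathrm{comb}}$ to the marginal $q_{ux}(u_1,x_0)$, take the pointwise derivative $-q_{ux}/r_{u|x}+\nu_{u|x}(x_0)=0$, and use normalization to get $\nu_{u|x}(x_0)=q_{x_0}(x_0)$. Your additional remarks, that Gibbs' inequality makes this stationary point a minimum and that states with $q_{x_0}(x_0)=0$ admit any normalized channel, are sound refinements the paper leaves implicit.
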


\subsubsection{Variation with respect to singleton beliefs}

The singleton calculations are elementary because they depend only on degree counting. Since
\begin{equation}
    d_{y_1}=d_{x_0}=d_{u_1}=2, \qquad d_{\theta}=d_{x_1}=3,
\end{equation}
the degree-2 and degree-3 variables behave differently. In particular:
\begin{itemize}
    \item for degree-2 variables $y_1$, $x_0$, and $u_1$, the KL term and the singleton entropy cancel, so
          \begin{equation}\label{eq:comb-detail-deg2}
              \lambda_{y_1}(y_1) = -\log \hat{p}_y(y_1), \qquad
              \lambda_{x_0}(x_0) = -\log p(x_0), \qquad
              \lambda_{u_1}(u_1) = -\log p(u_1);
          \end{equation}
    \item for degree-3 variables $\theta$ and $x_1$, one obtains only constraints on sums of multipliers,
          \begin{equation}\label{eq:comb-detail-deg3}
              \lambda_\theta^{(\mathrm{obs})}(\theta) + \lambda_\theta^{(\mathrm{dyn})}(\theta)
              = -\log q_\theta^*(\theta) - 1 - \log p(\theta),
          \end{equation}
          \begin{equation}\label{eq:comb-detail-deg3-x1}
              \lambda_{x_1}^{(\mathrm{obs})}(x_1) + \lambda_{x_1}^{(\mathrm{dyn})}(x_1)
              = -\log q_{x_1}^*(x_1) - 1 - \log \hat{p}_x(x_1).
          \end{equation}
\end{itemize}
The extra policy channel does not alter these facts, because it modifies only the non-singleton dynamics factor and does not introduce any new local-consistency constraint involving singletons.

\subsection{Solving the Stationarity Equations as Message Passing}
\label{appx:combined_detailed_messages}

Once the kernels \eqref{eq:comb-detail-kernels} are identified, the rest of the derivation follows the standard Bethe logic. For a factor $a$ with kernel $\tilde{f}_a$ and neighboring variables $\mathcal{E}(a)$, the factor-to-variable update is
\begin{equation}\label{eq:comb-detail-factor-to-var}
    \mu_{a\to j}(s_j)
    \propto
    \int \tilde{f}_a(\bm{s}_a)
    \prod_{i \in \mathcal{E}(a)\setminus j} \mu_{i\to a}(s_i)\,
    \dif \bm{s}_{a\setminus j}.
\end{equation}

\subsubsection{Observation-factor messages}

Using the observation kernel in \eqref{eq:comb-detail-obs-kernel}, the observation-factor messages are
\begin{subequations}\label{eq:comb-detail-msgs-obs}
    \begin{align}
        \mu_{\mathrm{obs}\to\theta}(\theta)
         & =
        \iint \frac{p(y_1|x_1,\theta)\, r_{y|x\theta}^2(y_1|x_1,\theta)}{r_{y|x}(y_1|x_1)}
        \mu_{y_1\to\mathrm{obs}}(y_1)\mu_{x_1\to\mathrm{obs}}(x_1)\,
        \dif y_1\, \dif x_1,    \\
        \mu_{\mathrm{obs}\to x_1}(x_1)
         & =
        \iint \frac{p(y_1|x_1,\theta)\, r_{y|x\theta}^2(y_1|x_1,\theta)}{r_{y|x}(y_1|x_1)}
        \mu_{y_1\to\mathrm{obs}}(y_1)\mu_{\theta\to\mathrm{obs}}(\theta)\,
        \dif y_1\, \dif \theta, \\
        \mu_{\mathrm{obs}\to y_1}(y_1)
         & =
        \iint \frac{p(y_1|x_1,\theta)\, r_{y|x\theta}^2(y_1|x_1,\theta)}{r_{y|x}(y_1|x_1)}
        \mu_{x_1\to\mathrm{obs}}(x_1)\mu_{\theta\to\mathrm{obs}}(\theta)\,
        \dif x_1\, \dif \theta.
    \end{align}
\end{subequations}

\subsubsection{Dynamics-factor messages}

The combined dynamics kernel yields
\begin{subequations}\label{eq:comb-detail-msgs-dyn}
    \begin{align}
        \mu_{\mathrm{dyn}\to\theta}(\theta)
         & =
        \iiint \frac{p(x_1|x_0,\theta,u_1)\, r_{u|x}(u_1|x_0)}{r_{x|xu}(x_1|x_0,u_1)}
        \mu_{x_1\to\mathrm{dyn}}(x_1)\mu_{x_0\to\mathrm{dyn}}(x_0)\mu_{u_1\to\mathrm{dyn}}(u_1)\,
        \dif x_1\, \dif x_0\, \dif u_1,    \\
        \mu_{\mathrm{dyn}\to x_1}(x_1)
         & =
        \iiint \frac{p(x_1|x_0,\theta,u_1)\, r_{u|x}(u_1|x_0)}{r_{x|xu}(x_1|x_0,u_1)}
        \mu_{x_0\to\mathrm{dyn}}(x_0)\mu_{u_1\to\mathrm{dyn}}(u_1)\mu_{\theta\to\mathrm{dyn}}(\theta)\,
        \dif x_0\, \dif u_1\, \dif \theta, \\
        \mu_{\mathrm{dyn}\to x_0}(x_0)
         & =
        \iiint \frac{p(x_1|x_0,\theta,u_1)\, r_{u|x}(u_1|x_0)}{r_{x|xu}(x_1|x_0,u_1)}
        \mu_{x_1\to\mathrm{dyn}}(x_1)\mu_{u_1\to\mathrm{dyn}}(u_1)\mu_{\theta\to\mathrm{dyn}}(\theta)\,
        \dif x_1\, \dif u_1\, \dif \theta, \\
        \mu_{\mathrm{dyn}\to u_1}(u_1)
         & =
        \iiint \frac{p(x_1|x_0,\theta,u_1)\, r_{u|x}(u_1|x_0)}{r_{x|xu}(x_1|x_0,u_1)}
        \mu_{x_1\to\mathrm{dyn}}(x_1)\mu_{x_0\to\mathrm{dyn}}(x_0)\mu_{\theta\to\mathrm{dyn}}(\theta)\,
        \dif x_1\, \dif x_0\, \dif \theta.
    \end{align}
\end{subequations}

\subsubsection{Factor beliefs and singleton beliefs}

The factor beliefs are kernel times incoming messages:
\begin{subequations}\label{eq:comb-detail-factor-beliefs}
    \begin{align}
        q_{\mathrm{obs}}^*(y_1,x_1,\theta)
         & \propto \tilde{f}_{\mathrm{obs}}(y_1,x_1,\theta)\,
        \mu_{y_1\to\mathrm{obs}}(y_1)\mu_{x_1\to\mathrm{obs}}(x_1)\mu_{\theta\to\mathrm{obs}}(\theta), \\
        q_{\mathrm{dyn}}^*(x_1,x_0,\theta,u_1)
         & \propto \tilde{f}_{\mathrm{dyn}}^{\mathrm{comb}}(x_1,x_0,\theta,u_1)\,
        \mu_{x_1\to\mathrm{dyn}}(x_1)\mu_{x_0\to\mathrm{dyn}}(x_0)\mu_{\theta\to\mathrm{dyn}}(\theta)\mu_{u_1\to\mathrm{dyn}}(u_1).
    \end{align}
\end{subequations}

Singleton beliefs keep the usual sum-product form:
\begin{subequations}\label{eq:comb-detail-singletons}
    \begin{align}
        q_\theta^*(\theta) & \propto p(\theta)\, \mu_{\mathrm{obs}\to\theta}(\theta)\, \mu_{\mathrm{dyn}\to\theta}(\theta), \\
        q_{x_1}^*(x_1)     & \propto \hat{p}_x(x_1)\, \mu_{\mathrm{obs}\to x_1}(x_1)\, \mu_{\mathrm{dyn}\to x_1}(x_1),      \\
        q_{x_0}^*(x_0)     & \propto p(x_0)\, \mu_{\mathrm{dyn}\to x_0}(x_0),                                               \\
        q_{u_1}^*(u_1)     & \propto p(u_1)\, \mu_{\mathrm{dyn}\to u_1}(u_1),                                               \\
        q_{y_1}^*(y_1)     & \propto \hat{p}_y(y_1)\, \mu_{\mathrm{obs}\to y_1}(y_1).
    \end{align}
\end{subequations}

\subsubsection{Channel updates}

At a fixed point, all four channels recover the corresponding conditionals under the current factor beliefs:
\begin{subequations}\label{eq:comb-detail-channel-updates}
    \begin{align}
        r_{y|x\theta}^*(y_1|x_1,\theta) & = q(y_1|x_1,\theta), \\
        r_{y|x}^*(y_1|x_1)              & = q(y_1|x_1),        \\
        r_{x|xu}^*(x_1|x_0,u_1)         & = q(x_1|x_0,u_1),    \\
        r_{u|x}^*(u_1|x_0)              & = q(u_1|x_0).
    \end{align}
\end{subequations}

\begin{remark}[Dynamics-side min-max structure]
    The two dynamics channels act on different conditionals and with opposite signs. The denominator $r_{x|xu}$ maximizes $\ent{q(x_1|x_0,u_1)}$, while the numerator $r_{u|x}$ minimizes $\ent{q(u_1|x_0)}$. The combined dynamics factor therefore interpolates between commitment over actions and spreading over future states.
\end{remark}

\subsection{Generic scheme for arbitrary \texorpdfstring{$T$}{T}}
\label{appx:combined_detailed_generic}

The passage from $T=1$ to arbitrary horizons is immediate because all entropy corrections are additive across time, so each time step receives its own local channels
\begin{equation}
    r_{y|x\theta,t}(y_t|x_t,\theta), \quad
    r_{y|x,t}(y_t|x_t), \quad
    r_{x|xu,t}(x_t|x_{t-1},u_t), \quad
    r_{u|x,t}(u_t|x_{t-1}).
\end{equation}

The per-time-step kernels are
\begin{subequations}\label{eq:comb-detail-generic-kernels}
    \begin{align}
        \tilde{f}_{\mathrm{obs}_t}(y_t,x_t,\theta)
         & = \frac{p(y_t|x_t,\theta)\, r_{y|x\theta,t}^2(y_t|x_t,\theta)}{r_{y|x,t}(y_t|x_t)},       \\
        \tilde{f}_{\mathrm{dyn}_t}^{\mathrm{comb}}(x_t,x_{t-1},\theta,u_t)
         & = \frac{p(x_t|x_{t-1},\theta,u_t)\, r_{u|x,t}(u_t|x_{t-1})}{r_{x|xu,t}(x_t|x_{t-1},u_t)}.
    \end{align}
\end{subequations}

The full multi-step scheme is obtained by applying the usual sum-product updates with the kernels in \eqref{eq:comb-detail-generic-kernels}.

\subsubsection{Messages from Observation Factor \texorpdfstring{$f_{\mathrm{obs}_t}$}{fobs,t}}

\begin{subequations}\label{eq:comb-detail-gen-msgs-obs}
    \begin{align}
        \mu_{\mathrm{obs}_t\to\theta}(\theta)
         & =
        \iint \frac{p(y_t|x_t,\theta)\, r_{y|x\theta,t}^2(y_t|x_t,\theta)}{r_{y|x,t}(y_t|x_t)}
        \mu_{y_t\to\mathrm{obs}_t}(y_t)\mu_{x_t\to\mathrm{obs}_t}(x_t)\,
        \dif y_t\, \dif x_t,    \\
        \mu_{\mathrm{obs}_t\to x_t}(x_t)
         & =
        \iint \frac{p(y_t|x_t,\theta)\, r_{y|x\theta,t}^2(y_t|x_t,\theta)}{r_{y|x,t}(y_t|x_t)}
        \mu_{y_t\to\mathrm{obs}_t}(y_t)\mu_{\theta\to\mathrm{obs}_t}(\theta)\,
        \dif y_t\, \dif \theta, \\
        \mu_{\mathrm{obs}_t\to y_t}(y_t)
         & =
        \iint \frac{p(y_t|x_t,\theta)\, r_{y|x\theta,t}^2(y_t|x_t,\theta)}{r_{y|x,t}(y_t|x_t)}
        \mu_{x_t\to\mathrm{obs}_t}(x_t)\mu_{\theta\to\mathrm{obs}_t}(\theta)\,
        \dif x_t\, \dif \theta.
    \end{align}
\end{subequations}

\subsubsection{Messages from Dynamics Factor \texorpdfstring{$f_{\mathrm{dyn}_t}$}{fdyn,t}}

\begin{subequations}\label{eq:comb-detail-gen-msgs-dyn}
    \begin{align}
        \mu_{\mathrm{dyn}_t\to\theta}(\theta)
         & =
        \iiint \frac{p(x_t|x_{t-1},\theta,u_t)\, r_{u|x,t}(u_t|x_{t-1})}{r_{x|xu,t}(x_t|x_{t-1},u_t)} \notag \\
         & \qquad \times
        \mu_{x_t\to\mathrm{dyn}_t}(x_t)\mu_{x_{t-1}\to\mathrm{dyn}_t}(x_{t-1})\mu_{u_t\to\mathrm{dyn}_t}(u_t)\,
        \dif x_t\, \dif x_{t-1}\, \dif u_t,    \\
        \mu_{\mathrm{dyn}_t\to x_t}(x_t)
         & =
        \iiint \frac{p(x_t|x_{t-1},\theta,u_t)\, r_{u|x,t}(u_t|x_{t-1})}{r_{x|xu,t}(x_t|x_{t-1},u_t)}
        \mu_{x_{t-1}\to\mathrm{dyn}_t}(x_{t-1})\mu_{u_t\to\mathrm{dyn}_t}(u_t)\mu_{\theta\to\mathrm{dyn}_t}(\theta)\,
        \dif x_{t-1}\, \dif u_t\, \dif \theta, \\
        \mu_{\mathrm{dyn}_t\to x_{t-1}}(x_{t-1})
         & =
        \iiint \frac{p(x_t|x_{t-1},\theta,u_t)\, r_{u|x,t}(u_t|x_{t-1})}{r_{x|xu,t}(x_t|x_{t-1},u_t)}
        \mu_{x_t\to\mathrm{dyn}_t}(x_t)\mu_{u_t\to\mathrm{dyn}_t}(u_t)\mu_{\theta\to\mathrm{dyn}_t}(\theta)\,
        \dif x_t\, \dif u_t\, \dif \theta,     \\
        \mu_{\mathrm{dyn}_t\to u_t}(u_t)
         & =
        \iiint \frac{p(x_t|x_{t-1},\theta,u_t)\, r_{u|x,t}(u_t|x_{t-1})}{r_{x|xu,t}(x_t|x_{t-1},u_t)}
        \mu_{x_t\to\mathrm{dyn}_t}(x_t)\mu_{x_{t-1}\to\mathrm{dyn}_t}(x_{t-1})\mu_{\theta\to\mathrm{dyn}_t}(\theta)\,
        \dif x_t\, \dif x_{t-1}\, \dif \theta.
    \end{align}
\end{subequations}

\subsubsection{Factor Beliefs}

\begin{subequations}\label{eq:comb-detail-gen-factor-beliefs}
    \begin{align}
        q_{\mathrm{obs},t}^*(y_t,x_t,\theta)
         & \propto \tilde{f}_{\mathrm{obs}_t}(y_t,x_t,\theta)\,
        \mu_{y_t\to\mathrm{obs}_t}(y_t)\mu_{x_t\to\mathrm{obs}_t}(x_t)\mu_{\theta\to\mathrm{obs}_t}(\theta), \\
        q_{\mathrm{dyn},t}^*(x_t,x_{t-1},\theta,u_t)
         & \propto \tilde{f}_{\mathrm{dyn}_t}^{\mathrm{comb}}(x_t,x_{t-1},\theta,u_t)\,
        \mu_{x_t\to\mathrm{dyn}_t}(x_t)\mu_{x_{t-1}\to\mathrm{dyn}_t}(x_{t-1})\mu_{\theta\to\mathrm{dyn}_t}(\theta)\mu_{u_t\to\mathrm{dyn}_t}(u_t).
    \end{align}
\end{subequations}

\subsubsection{Channel Updates}

At a fixed point, each time step has the four local channel updates
\begin{subequations}\label{eq:comb-detail-gen-channel-updates}
    \begin{align}
        r_{y|x\theta,t}^*(y_t|x_t,\theta) & = q_t(y_t|x_t,\theta),  \\
        r_{y|x,t}^*(y_t|x_t)              & = q_t(y_t|x_t),         \\
        r_{x|xu,t}^*(x_t|x_{t-1},u_t)     & = q_t(x_t|x_{t-1},u_t), \\
        r_{u|x,t}^*(u_t|x_{t-1})          & = q_t(u_t|x_{t-1}).
    \end{align}
\end{subequations}

\subsubsection{Singleton Beliefs}

\begin{subequations}\label{eq:comb-detail-gen-singletons}
    \begin{align}
        q_{x_t}^*(x_t)
         & \propto \hat{p}_x(x_t)\, \mu_{\mathrm{obs}_t\to x_t}(x_t)\, \mu_{\mathrm{dyn}_t\to x_t}(x_t)\, \mu_{\mathrm{dyn}_{t+1}\to x_t}(x_t), \\
        q_\theta^*(\theta)
         & \propto p(\theta)\prod_{\tau=1}^{T} \mu_{\mathrm{obs}_\tau\to\theta}(\theta)\, \mu_{\mathrm{dyn}_\tau\to\theta}(\theta),             \\
        q_{u_t}^*(u_t)
         & \propto p(u_t)\, \mu_{\mathrm{dyn}_t\to u_t}(u_t),                                                                                   \\
        q_{y_t}^*(y_t)
         & \propto \hat{p}_y(y_t)\, \mu_{\mathrm{obs}_t\to y_t}(y_t).
    \end{align}
\end{subequations}

The boundary conditions follow the usual temporal-edge conventions: at $t=1$, $\mu_{x_0\to\mathrm{dyn}_1}(x_0)=p(x_0)$; at $t=T$, the message $\mu_{\mathrm{dyn}_{T+1}\to x_T}$ is absent.

\subsection{Interpretation}
\label{appx:combined_detailed_interpretation}

At a fixed point,
\begin{equation}\label{eq:comb-detail-fixed-point-kernel}
    \tilde{f}_{\mathrm{dyn}_t}^{\mathrm{comb}}(x_t,x_{t-1},\theta,u_t)
    =
    \frac{p(x_t|x_{t-1},\theta,u_t)\, q(u_t|x_{t-1})}{q(x_t|x_{t-1},u_t)}.
\end{equation}
This exposes the objective transparently: the numerator implements policy commitment, while the denominator implements the risk-sensitive correction over predicted futures. The observation factor simultaneously balances the two observation-side entropy terms through $r_{y|x\theta}$ and $r_{y|x}$.

\section{Message Passing Derivation for VBP}
\label{appx:vbp_derivation}

This appendix derives the message-passing equations for the Value Belief Propagation (VBP) scheme that implements cross-entropy planning (\Cref{subsec:cross_entropy_planning}). The factor graph and Bethe approximation are identical to the combined-objective derivation (\refappx{appx:combined_detailed_derivation}); only the entropy correction differs. VBP requires a single channel reparameterization, making the derivation substantially simpler.

\subsection{Coordinate System}
\label{appx:vbp_coordinates}

We use the same generative model, factor graph, Bethe coordinates, and local-polytope constraints as in \refappx{appx:combined_detailed_derivation}: factor beliefs $q_{\mathrm{obs}}(y_1, x_1, \theta)$ and $q_{\mathrm{dyn}}(x_1, x_0, \theta, u_1)$, plus singleton beliefs $q_\theta, q_{x_0}, q_{x_1}, q_{y_1}, q_{u_1}$.

The single difference is the channel set. VBP introduces one channel:
\begin{equation}\label{eq:vbp-channel}
    r_{u|x}(u_1 | x_0), \quad \text{subject to} \quad \int r_{u|x}(u_1 | x_0)\, \dif u_1 = 1 \quad \forall\, x_0.
\end{equation}
This channel parameterizes the conditional entropy of actions given states via:
\begin{equation}\label{eq:vbp-channel-identity}
    \ent{q(u_1 | x_0)} = \min_{r_{u|x}} \Ex{q_{\mathrm{pair}}(x_0, u_1)}{-\log r_{u|x}(u_1 | x_0)},
\end{equation}
where the minimum is attained at $r_{u|x}(u_1 | x_0) = q(u_1 | x_0)$, and
\begin{equation}\label{eq:vbp-q-pair}
    q_{\mathrm{pair}}(x_0, u_1) := \iint q_{\mathrm{dyn}}(x_1, x_0, \theta, u_1)\, \dif x_1\, \dif \theta
\end{equation}
is the marginal of the dynamics factor belief over $(x_0, u_1)$.

\subsection{Objective Function}
\label{appx:vbp_objective}

The VBP objective adds the cross-entropy planning correction~\eqref{eq:cross_entropy_correction} to the usual Bethe free energy:
\begin{equation}\label{eq:vbp-correction}
    \Delta F_{\mathrm{VBP}} = +\ent{q(u_1 | x_0)}.
\end{equation}

After channel reparameterization via~\eqref{eq:vbp-channel-identity}:
\begin{equation}\label{eq:vbp-full-objective}
    \begin{aligned}
        F_{\mathrm{VBP}}[q, r_{u|x}] & = \int q_{\mathrm{obs}} \log \frac{q_{\mathrm{obs}}}{p(y_1 | x_1, \theta)}\, \dif y_1\, \dif x_1\, \dif \theta                       \\
                                     & \quad + \int q_{\mathrm{dyn}} \log \frac{q_{\mathrm{dyn}}}{p(x_1 | x_0, \theta, u_1)}\, \dif x_1\, \dif x_0\, \dif \theta\, \dif u_1 \\
                                     & \quad - \int q_{\mathrm{dyn}} \log r_{u|x}(u_1 | x_0)\, \dif x_1\, \dif x_0\, \dif \theta\, \dif u_1                                 \\
                                     & \quad + \text{(the usual singleton Bethe terms)}.
    \end{aligned}
\end{equation}

\begin{remark}
    Since $+\ent{q(u_1|x_0)}$ carries a positive sign, the channel reparameterization contributes $-\Ex{q}{\log r_{u|x}}$ to the objective. Consequently, $r_{u|x}$ appears in the \emph{numerator} of the dynamics kernel (as a multiplicative factor), in contrast to the AIF dynamics channel $r_{x|xu}$ which appears in the denominator~\eqref{eq:modified_dyn}.
\end{remark}

\subsection{Stationarity Conditions}
\label{appx:vbp_stationarity}

We form the Lagrangian with the same normalization and consistency multipliers as in \refappx{appx:combined_detailed_lagrangian}, except that the three observation and predictive-dynamics channel multipliers are replaced by a single multiplier $\nu_{u|x}(x_0)$ for the channel normalization constraint~\eqref{eq:vbp-channel}.

\subsubsection{Variation with respect to \texorpdfstring{$q_{\mathrm{obs}}$}{qobs}}

The observation factor receives no channel correction. The stationarity condition is identical to standard Bethe:
\begin{equation}\label{eq:vbp-qobs-stationary}
    q_{\mathrm{obs}}^*(y_1, x_1, \theta) \propto p(y_1 | x_1, \theta)\, e^{-\lambda_{y_1}(y_1)}\, e^{-\lambda_\theta^{(\mathrm{obs})}(\theta)}\, e^{-\lambda_{x_1}^{(\mathrm{obs})}(x_1)}.
\end{equation}
The observation kernel is simply $p(y_1 | x_1, \theta)$, as in standard belief propagation.

\subsubsection{Variation with respect to \texorpdfstring{$q_{\mathrm{dyn}}$}{qdyn}}

The $q_{\mathrm{dyn}}$-dependent terms include the channel correction $-\int q_{\mathrm{dyn}} \log r_{u|x}(u_1 | x_0)$. Taking $\frac{\delta \mathcal{L}}{\delta q_{\mathrm{dyn}}} = 0$:
\begin{equation}\label{eq:vbp-qdyn-stationarity-eq}
    \log q_{\mathrm{dyn}} + 1 - \log p(x_1 | x_0, \theta, u_1) - \log r_{u|x}(u_1 | x_0) + \lambda_{\mathrm{dyn}} + \lambda_{x_1}^{(\mathrm{dyn})} + \lambda_{x_0} + \lambda_{u_1} + \lambda_\theta^{(\mathrm{dyn})} = 0.
\end{equation}

\begin{proposition}[VBP dynamics factor belief]\label{prop:vbp-qdyn}
    At stationarity:
    \begin{equation}\label{eq:vbp-qdyn-stationary}
        q_{\mathrm{dyn}}^*(x_1, x_0, \theta, u_1) \propto p(x_1 | x_0, \theta, u_1)\, r_{u|x}(u_1 | x_0)\, e^{-\lambda_{x_1}^{(\mathrm{dyn})}(x_1)}\, e^{-\lambda_{x_0}(x_0)}\, e^{-\lambda_{u_1}(u_1)}\, e^{-\lambda_\theta^{(\mathrm{dyn})}(\theta)}.
    \end{equation}
    The product $\tilde{f}_{\mathrm{dyn}}(x_1, x_0, \theta, u_1) := p(x_1 | x_0, \theta, u_1)\, r_{u|x}(u_1 | x_0)$ is the VBP dynamics kernel.
\end{proposition}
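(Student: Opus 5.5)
The plan is to take the stationarity equation \eqref{eq:vbp-qdyn-stationarity-eq} and solve it pointwise for $q_{\mathrm{dyn}}$. First I will confirm that equation from the Lagrangian. The Lagrangian is the one of \refappx{appx:combined_detailed_lagrangian}, with $F_{\mathrm{comb}}$ replaced by $F_{\mathrm{VBP}}$ from \eqref{eq:vbp-full-objective}. In the discrete regime the functional derivative is an ordinary partial derivative with respect to each entry $q_{\mathrm{dyn}}(x_1,x_0,\theta,u_1)$.

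Only a few terms of the Lagrangian depend on $q_{\mathrm{dyn}}$:
\begin{itemize}
\item the local KL term $\int q_{\mathrm{dyn}}\log(q_{\mathrm{dyn}}/p)$, whose derivative is $\log q_{\mathrm{dyn}}+1-\log p(x_1|x_0,\theta,u_1)$;
\item the channel term $-\int q_{\mathrm{dyn}}\log r_{u|x}$, which is linear in $q_{\mathrm{dyn}}$ with $r_{u|x}$ held fixed as an independent coordinate, so its derivative is $-\log r_{u|x}(u_1|x_0)$;
\item the normalization multiplier $\lambda_{\mathrm{dyn}}$;
\item the four consistency multipliers $\lambda^{(\mathrm{dyn})}_{x_1}$, $\lambda_{x_0}$, $\lambda_{u_1}$, $\lambda^{(\mathrm{dyn})}_\theta$, each contributing its value at the corresponding coordinate.
\end{itemize}
The singleton Bethe terms and the constraint on $r_{u|x}$ do not involve $q_{\mathrm{dyn}}$, so they drop out. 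Summing these contributions reproduces \eqref{eq:vbp-qdyn-stationarity-eq}.

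Next I exponentiate. This gives
\[
q_{\mathrm{dyn}} = e^{-1-\lambda_{\mathrm{dyn}}}\, p(x_1|x_0,\theta,u_1)\, r_{u|x}(u_1|x_0)\, e^{-\lambda^{(\mathrm{dyn})}_{x_1}(x_1)}\, e^{-\lambda_{x_0}(x_0)}\, e^{-\lambda_{u_1}(u_1)}\, e^{-\lambda^{(\mathrm{dyn})}_\theta(\theta)}.
\]
The prefactor $e^{-1-\lambda_{\mathrm{dyn}}}$ is constant in all arguments and is fixed by the normalization constraint, which yields the proportionality. The kernel identification $\tilde f_{\mathrm{dyn}} = p\, r_{u|x}$ is then a definition: it is the part of the expression not carried by the multipliers. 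This matches \Cref{prop:comb-detail-qdyn} with $r_{x|xu}\equiv 1$, which is a useful consistency check.

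The only delicate point is justifying pointwise stationarity. The minimizer must lie in the interior, so that the entropy term keeps $q_{\mathrm{dyn}}>0$ wherever $p\,r_{u|x}>0$, and entries where $p\,r_{u|x}=0$ must be handled by the convention $q_{\mathrm{dyn}}=0$ there. I would note that $\log q_{\mathrm{dyn}}\to-\infty$ at the boundary of the support, which rules out boundary stationary points. Beyond this, the argument is a routine calculation.
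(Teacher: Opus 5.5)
Your proposal is correct and follows the paper's route: differentiate the Lagrangian with respect to $q_{\mathrm{dyn}}$ (with $r_{u|x}$ held fixed) to obtain \eqref{eq:vbp-qdyn-stationarity-eq}, exponentiate, and absorb $e^{-1-\lambda_{\mathrm{dyn}}}$ into the normalization. Your remarks on interiority and zero-support entries are extra care the paper omits, but they do not change the argument.
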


\subsubsection{Variation with respect to \texorpdfstring{$r_{u|x}$}{r(u|x)}}

The $r_{u|x}$-dependent terms in the Lagrangian are:
\begin{equation}
    -\int q_{\mathrm{pair}}(x_0, u_1) \log r_{u|x}(u_1 | x_0)\, \dif u_1\, \dif x_0 + \int \nu_{u|x}(x_0) \left(\int r_{u|x}(u_1 | x_0)\, \dif u_1 - 1\right) \dif x_0.
\end{equation}

Pointwise derivative:
\begin{equation}
    -\frac{q_{\mathrm{pair}}(x_0, u_1)}{r_{u|x}(u_1 | x_0)} + \nu_{u|x}(x_0) = 0.
\end{equation}

Imposing normalization $\int r_{u|x}\, \dif u_1 = 1$ yields $\nu_{u|x}(x_0) = q_{x_0}(x_0)$, where $q_{x_0}(x_0) = \int q_{\mathrm{pair}}(x_0, u_1)\, \dif u_1$.

\begin{proposition}[Action channel]\label{prop:vbp-r-ux}
    At stationarity:
    \begin{equation}\label{eq:vbp-r-ux-stationary}
        r_{u|x}^*(u_1 | x_0) = \frac{q_{\mathrm{pair}}(x_0, u_1)}{q_{x_0}(x_0)} = q(u_1 | x_0),
    \end{equation}
    the conditional from the dynamics factor belief.
\end{proposition}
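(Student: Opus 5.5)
The plan is a single first-order stationarity calculation. In the Lagrangian for the VBP objective~\eqref{eq:vbp-full-objective}, only two terms involve the channel $r_{u|x}$: the reparameterized correction $-\int q_{\mathrm{dyn}}\log r_{u|x}(u_1|x_0)$ and the normalization multiplier term with $\nu_{u|x}(x_0)$.

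First I remove the variables the channel does not depend on. Since $r_{u|x}$ is independent of $x_1$ and $\theta$, I integrate them out of $q_{\mathrm{dyn}}$. This replaces the first term by $-\int q_{\mathrm{pair}}(x_0,u_1)\log r_{u|x}(u_1|x_0)$, with $q_{\mathrm{pair}}$ as defined in~\eqref{eq:vbp-q-pair}.

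Next I take the pointwise derivative in $r_{u|x}(u_1|x_0)$ and set it to zero:
\begin{equation*}
-\frac{q_{\mathrm{pair}}(x_0,u_1)}{r_{u|x}(u_1|x_0)}+\nu_{u|x}(x_0)=0,
\end{equation*}
which gives $r_{u|x}(u_1|x_0)=q_{\mathrm{pair}}(x_0,u_1)/\nu_{u|x}(x_0)$.

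To fix the multiplier, I integrate this over $u_1$ and impose the normalization constraint in~\eqref{eq:vbp-channel}. The result is $\nu_{u|x}(x_0)=\int q_{\mathrm{pair}}(x_0,u_1)\,\dif u_1$. By the consistency constraint~\eqref{eq:comb-detail-consistency} between $q_{\mathrm{dyn}}$ and the singleton on $x_0$, this integral equals $q_{x_0}(x_0)$. Substituting back gives $r^*_{u|x}=q_{\mathrm{pair}}/q_{x_0}=q(u_1|x_0)$, which is the claimed formula.

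Finally, I confirm that this stationary point is the minimizer. For each fixed $x_0$ with $q_{x_0}(x_0)>0$, the map $r\mapsto-\int q_{\mathrm{pair}}\log r$ is strictly convex on the simplex. Equivalently, Gibbs' inequality~\eqref{eq:vbp-channel-identity} shows that the minimum is attained exactly at this conditional.

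The only delicate point is the case $q_{x_0}(x_0)=0$, where the conditional is undefined. There the objective does not depend on $r_{u|x}(\cdot|x_0)$, so any normalized choice is stationary, and I will note this convention explicitly.
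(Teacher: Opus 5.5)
Your proposal is correct and follows the paper's proof essentially step for step. Both arguments keep only the two $r_{u|x}$-dependent terms, marginalize $x_1,\theta$ out to get $q_{\mathrm{pair}}$, set $-q_{\mathrm{pair}}/r_{u|x}+\nu_{u|x}(x_0)=0$, and fix $\nu_{u|x}(x_0)=q_{x_0}(x_0)$ by normalization. Your minimality check and the $q_{x_0}(x_0)=0$ convention are sound additions the paper omits, with one caveat: $r\mapsto-\sum_{u_1}q_{\mathrm{pair}}(x_0,u_1)\log r(u_1)$ is strictly convex only on the support of $q_{\mathrm{pair}}(x_0,\cdot)$, but Gibbs' inequality still gives the unique minimizer.
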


\subsubsection{Singleton stationarity}

The singleton stationarity conditions are identical to the corresponding degree-counting argument in \refappx{appx:combined_detailed_stationarity}: degree-2 multipliers are identified algebraically, while degree-3 multipliers satisfy the same constraint equations on sums of multipliers.

\subsection{Message-Passing Equations}
\label{appx:vbp_messages}

Using the observation kernel $\tilde{f}_{\mathrm{obs}} = p(y_1 | x_1, \theta)$ and the VBP dynamics kernel $\tilde{f}_{\mathrm{dyn}} = p(x_1 | x_0, \theta, u_1)\, r_{u|x}(u_1 | x_0)$, the sum-product messages follow from the standard factor-to-variable update. Variable-to-factor messages $\mu_{i \to a}$ are the usual products of all incoming factor-to-variable messages except the one from the recipient factor.

\subsubsection{Messages from Observation Factor}

Standard sum-product (no channel modification):
\begin{subequations}\label{eq:vbp-msgs-obs}
    \begin{align}
        \mu_{\mathrm{obs} \to \theta}(\theta) & = \iint p(y_1 | x_1, \theta)\, \mu_{y_1 \to \mathrm{obs}}(y_1)\, \mu_{x_1 \to \mathrm{obs}}(x_1)\, \dif y_1\, \dif x_1, \label{eq:vbp-obs-theta}       \\
        \mu_{\mathrm{obs} \to x_1}(x_1)       & = \iint p(y_1 | x_1, \theta)\, \mu_{y_1 \to \mathrm{obs}}(y_1)\, \mu_{\theta \to \mathrm{obs}}(\theta)\, \dif y_1\, \dif \theta, \label{eq:vbp-obs-x1} \\
        \mu_{\mathrm{obs} \to y_1}(y_1)       & = \iint p(y_1 | x_1, \theta)\, \mu_{\theta \to \mathrm{obs}}(\theta)\, \mu_{x_1 \to \mathrm{obs}}(x_1)\, \dif x_1\, \dif \theta. \label{eq:vbp-obs-y1}
    \end{align}
\end{subequations}

\subsubsection{Messages from Dynamics Factor}

Using kernel $p(x_1 | x_0, \theta, u_1)\, r_{u|x}(u_1 | x_0)$:
\begin{subequations}\label{eq:vbp-msgs-dyn}
    \begin{align}
        \mu_{\mathrm{dyn} \to \theta}(\theta) & = \iiint p(x_1 | x_0, \theta, u_1)\, r_{u|x}(u_1 | x_0)\, \mu_{x_0 \to \mathrm{dyn}}(x_0)\, \mu_{u_1 \to \mathrm{dyn}}(u_1)\, \mu_{x_1 \to \mathrm{dyn}}(x_1)\, \dif x_1\, \dif x_0\, \dif u_1, \label{eq:vbp-dyn-theta}       \\
        \mu_{\mathrm{dyn} \to x_1}(x_1)       & = \iiint p(x_1 | x_0, \theta, u_1)\, r_{u|x}(u_1 | x_0)\, \mu_{x_0 \to \mathrm{dyn}}(x_0)\, \mu_{u_1 \to \mathrm{dyn}}(u_1)\, \mu_{\theta \to \mathrm{dyn}}(\theta)\, \dif x_0\, \dif u_1\, \dif \theta, \label{eq:vbp-dyn-x1} \\
        \mu_{\mathrm{dyn} \to x_0}(x_0)       & = \iiint p(x_1 | x_0, \theta, u_1)\, r_{u|x}(u_1 | x_0)\, \mu_{x_1 \to \mathrm{dyn}}(x_1)\, \mu_{u_1 \to \mathrm{dyn}}(u_1)\, \mu_{\theta \to \mathrm{dyn}}(\theta)\, \dif x_1\, \dif u_1\, \dif \theta, \label{eq:vbp-dyn-x0} \\
        \mu_{\mathrm{dyn} \to u_1}(u_1)       & = \iiint p(x_1 | x_0, \theta, u_1)\, r_{u|x}(u_1 | x_0)\, \mu_{x_0 \to \mathrm{dyn}}(x_0)\, \mu_{\theta \to \mathrm{dyn}}(\theta)\, \mu_{x_1 \to \mathrm{dyn}}(x_1)\, \dif x_1\, \dif x_0\, \dif \theta. \label{eq:vbp-dyn-u1}
    \end{align}
\end{subequations}

\subsubsection{Factor Beliefs and Channel Update}

The factor beliefs are the kernel times all incoming variable-to-factor messages:
\begin{subequations}\label{eq:vbp-factor-beliefs}
    \begin{align}
        q_{\mathrm{obs}}^*(y_1, x_1, \theta)      & \propto p(y_1 | x_1, \theta)\, \mu_{y_1 \to \mathrm{obs}}(y_1)\, \mu_{x_1 \to \mathrm{obs}}(x_1)\, \mu_{\theta \to \mathrm{obs}}(\theta), \label{eq:vbp-q-obs}                                                             \\
        q_{\mathrm{dyn}}^*(x_1, x_0, \theta, u_1) & \propto p(x_1 | x_0, \theta, u_1)\, r_{u|x}(u_1 | x_0)\, \mu_{x_1 \to \mathrm{dyn}}(x_1)\, \mu_{x_0 \to \mathrm{dyn}}(x_0)\, \mu_{\theta \to \mathrm{dyn}}(\theta)\, \mu_{u_1 \to \mathrm{dyn}}(u_1). \label{eq:vbp-q-dyn}
    \end{align}
\end{subequations}

The channel update follows from \refprop{prop:vbp-r-ux}:
\begin{equation}\label{eq:vbp-channel-update}
    r_{u|x}^*(u_1 | x_0) = \frac{q_{\mathrm{pair}}(x_0, u_1)}{q_{x_0}(x_0)} = q(u_1 | x_0), \quad \text{where } q_{\mathrm{pair}}(x_0, u_1) = \iint q_{\mathrm{dyn}}^*\, \dif x_1\, \dif \theta.
\end{equation}

\subsubsection{Singleton Beliefs}

Singleton beliefs follow the standard sum-product rule: the product of all incoming factor-to-variable messages with the prior (or goal prior):
\begin{subequations}\label{eq:vbp-singletons}
    \begin{align}
        q_\theta^*(\theta) & \propto p(\theta)\, \mu_{\mathrm{obs} \to \theta}(\theta)\, \mu_{\mathrm{dyn} \to \theta}(\theta), \label{eq:vbp-q-theta} \\
        q_{x_1}^*(x_1)     & \propto \hat{p}_x(x_1)\, \mu_{\mathrm{obs} \to x_1}(x_1)\, \mu_{\mathrm{dyn} \to x_1}(x_1), \label{eq:vbp-q-x1}           \\
        q_{x_0}^*(x_0)     & \propto p(x_0)\, \mu_{\mathrm{dyn} \to x_0}(x_0), \label{eq:vbp-q-x0}                                                     \\
        q_{u_1}^*(u_1)     & \propto p(u_1)\, \mu_{\mathrm{dyn} \to u_1}(u_1), \label{eq:vbp-q-u1}                                                     \\
        q_{y_1}^*(y_1)     & \propto \hat{p}_y(y_1)\, \mu_{\mathrm{obs} \to y_1}(y_1). \label{eq:vbp-q-y1}
    \end{align}
\end{subequations}

\subsection{Generic Scheme for Arbitrary \texorpdfstring{$T$}{T}}
\label{appx:vbp_generic_scheme}

The VBP scheme generalizes to arbitrary time horizons by introducing a time-local channel $r_{u|x,t}(u_t | x_{t-1})$ at each time step $t = 1, \ldots, T$. As in \refappx{appx:combined_detailed_generic}, the per-timestep additivity of the Lagrangian and the entropy correction $+\sum_t \ent{q(u_t | x_{t-1})}$ yields the multi-step scheme directly from the $T{=}1$ derivation.

\subsubsection{Messages from Observation Factor \texorpdfstring{$f_{\mathrm{obs}_t}$}{fobs,t}}

Standard sum-product (no channel modification):
\begin{subequations}\label{eq:vbp-gen-msgs-obs}
    \begin{align}
        \mu_{\mathrm{obs}_t \to \theta}(\theta) & = \iint p(y_t | x_t, \theta)\, \mu_{y_t \to \mathrm{obs}_t}(y_t)\, \mu_{x_t \to \mathrm{obs}_t}(x_t)\, \dif y_t\, \dif x_t, \label{eq:vbp-gen-obs-theta}       \\
        \mu_{\mathrm{obs}_t \to x_t}(x_t)       & = \iint p(y_t | x_t, \theta)\, \mu_{y_t \to \mathrm{obs}_t}(y_t)\, \mu_{\theta \to \mathrm{obs}_t}(\theta)\, \dif y_t\, \dif \theta, \label{eq:vbp-gen-obs-xt} \\
        \mu_{\mathrm{obs}_t \to y_t}(y_t)       & = \iint p(y_t | x_t, \theta)\, \mu_{\theta \to \mathrm{obs}_t}(\theta)\, \mu_{x_t \to \mathrm{obs}_t}(x_t)\, \dif x_t\, \dif \theta. \label{eq:vbp-gen-obs-yt}
    \end{align}
\end{subequations}

\subsubsection{Messages from Dynamics Factor \texorpdfstring{$f_{\mathrm{dyn}_t}$}{fdyn,t}}

Using kernel $p(x_t | x_{t-1}, \theta, u_t)\, r_{u|x,t}(u_t | x_{t-1})$:
\begin{subequations}\label{eq:vbp-gen-msgs-dyn}
    \begin{align}
        \mu_{\mathrm{dyn}_t \to \theta}(\theta)   & = \iiint p(x_t | x_{t-1}, \theta, u_t)\, r_{u|x,t}(u_t | x_{t-1})\, \mu_{x_{t-1} \to \mathrm{dyn}_t}(x_{t-1}) \notag                                                                                                                                          \\
                                                  & \qquad \times \mu_{u_t \to \mathrm{dyn}_t}(u_t)\, \mu_{x_t \to \mathrm{dyn}_t}(x_t)\, \dif x_t\, \dif x_{t-1}\, \dif u_t, \label{eq:vbp-gen-dyn-theta}                                                                                                         \\
        \mu_{\mathrm{dyn}_t \to x_t}(x_t)         & = \iiint p(x_t | x_{t-1}, \theta, u_t)\, r_{u|x,t}(u_t | x_{t-1})\, \mu_{x_{t-1} \to \mathrm{dyn}_t}(x_{t-1})\, \mu_{u_t \to \mathrm{dyn}_t}(u_t)\, \mu_{\theta \to \mathrm{dyn}_t}(\theta)\, \dif x_{t-1}\, \dif u_t\, \dif \theta, \label{eq:vbp-gen-dyn-xt} \\
        \mu_{\mathrm{dyn}_t \to x_{t-1}}(x_{t-1}) & = \iiint p(x_t | x_{t-1}, \theta, u_t)\, r_{u|x,t}(u_t | x_{t-1})\, \mu_{x_t \to \mathrm{dyn}_t}(x_t)\, \mu_{u_t \to \mathrm{dyn}_t}(u_t)\, \mu_{\theta \to \mathrm{dyn}_t}(\theta)\, \dif x_t\, \dif u_t\, \dif \theta, \label{eq:vbp-gen-dyn-xtm1}           \\
        \mu_{\mathrm{dyn}_t \to u_t}(u_t)         & = \iiint p(x_t | x_{t-1}, \theta, u_t)\, r_{u|x,t}(u_t | x_{t-1})\, \mu_{x_{t-1} \to \mathrm{dyn}_t}(x_{t-1})\, \mu_{\theta \to \mathrm{dyn}_t}(\theta)\, \mu_{x_t \to \mathrm{dyn}_t}(x_t)\, \dif x_t\, \dif x_{t-1}\, \dif \theta. \label{eq:vbp-gen-dyn-ut}
    \end{align}
\end{subequations}

\subsubsection{Factor Beliefs}

\begin{subequations}\label{eq:vbp-gen-factor-beliefs}
    \begin{align}
        q_{\mathrm{obs},t}^*(y_t, x_t, \theta)          & \propto p(y_t | x_t, \theta)\, \mu_{y_t \to \mathrm{obs}_t}(y_t)\, \mu_{x_t \to \mathrm{obs}_t}(x_t)\, \mu_{\theta \to \mathrm{obs}_t}(\theta), \label{eq:vbp-gen-q-obs}                                                                                 \\
        q_{\mathrm{dyn},t}^*(x_t, x_{t-1}, \theta, u_t) & \propto p(x_t | x_{t-1}, \theta, u_t)\, r_{u|x,t}(u_t | x_{t-1})\, \mu_{x_t \to \mathrm{dyn}_t}(x_t)\, \mu_{x_{t-1} \to \mathrm{dyn}_t}(x_{t-1})\, \mu_{\theta \to \mathrm{dyn}_t}(\theta)\, \mu_{u_t \to \mathrm{dyn}_t}(u_t). \label{eq:vbp-gen-q-dyn}
    \end{align}
\end{subequations}

\subsubsection{Channel Updates}

Each time step has its own channel update:
\begin{equation}\label{eq:vbp-gen-channel-update}
    r_{u|x,t}^*(u_t | x_{t-1}) = q_t(u_t | x_{t-1}), \quad \text{where } q_t(u_t | x_{t-1}) = \frac{q_{\mathrm{pair},t}(x_{t-1}, u_t)}{q_{x_{t-1}}(x_{t-1})}\,,
\end{equation}
with $q_{\mathrm{pair},t}(x_{t-1}, u_t) = \iint q_{\mathrm{dyn},t}^*\, \dif x_t\, \dif \theta$.

\subsubsection{Singleton Beliefs}

\begin{subequations}\label{eq:vbp-gen-singletons}
    \begin{align}
        q_{x_t}^*(x_t)     & \propto \hat{p}_x(x_t)\, \mu_{\mathrm{obs}_t \to x_t}(x_t)\, \mu_{\mathrm{dyn}_t \to x_t}(x_t)\, \mu_{\mathrm{dyn}_{t+1} \to x_t}(x_t), \label{eq:vbp-gen-q-xt} \\
        q_\theta^*(\theta) & \propto p(\theta) \prod_{\tau=1}^{T} \mu_{\mathrm{obs}_\tau \to \theta}(\theta)\, \mu_{\mathrm{dyn}_\tau \to \theta}(\theta), \label{eq:vbp-gen-q-theta}        \\
        q_{u_t}^*(u_t)     & \propto p(u_t)\, \mu_{\mathrm{dyn}_t \to u_t}(u_t), \label{eq:vbp-gen-q-ut}                                                                                     \\
        q_{y_t}^*(y_t)     & \propto \hat{p}_y(y_t)\, \mu_{\mathrm{obs}_t \to y_t}(y_t). \label{eq:vbp-gen-q-yt}
    \end{align}
\end{subequations}

The boundary conditions follow the usual temporal-edge conventions: at $t=1$, $\mu_{x_0 \to \mathrm{dyn}_1}(x_0) = p(x_0)$; at $t=T$, the message $\mu_{\mathrm{dyn}_{T+1} \to x_T}$ is absent.

\subsection{Properties}
\label{appx:vbp_properties}

\paragraph{No min-max structure.}
Unlike AIF, where opposing channels create a min-max optimization over beliefs, VBP has a single channel that enters only the dynamics kernel numerator. The joint optimization over $(q, r_{u|x})$ is a pure minimization problem, avoiding the convergence difficulties of AIF's saddle-point structure.

\paragraph{Convergence.}
The observation factor uses standard sum-product messages with no channel modification, so standard BP convergence guarantees apply to the observation side. The dynamics channel provides a single-sided correction. While damping may improve convergence in practice, the lack of opposing channels makes it less critical than for AIF.

\paragraph{Reduction.}
Setting $r_{u|x,t}$ to uniform for all $t$ removes the entropy correction and recovers standard belief propagation (marginal inference). The VBP scheme thus continuously interpolates between marginal inference (uniform channel) and full cross-entropy planning (converged channel).

\paragraph{Fixed-point interpretation.}
At convergence, $r_{u|x,t}^*(u_t | x_{t-1}) = q(u_t | x_{t-1})$, so the VBP dynamics kernel becomes $p(x_t | x_{t-1}, \theta, u_t)\, q(u_t | x_{t-1})$. This reweights the state transition by the policy conditioned on the previous state, implementing the action commitment that cross-entropy planning encodes.

\section{Experiment Details}
\label{appx:experiments}

This appendix provides full details for the experiments in \capsecref{sec:experiments}.

\subsection{Frozen Lake Environment}
\label{appx:frozen_lake}

\paragraph{Grid layout.}
We adapt the classic Frozen Lake environment~\citep{brockman_openai_2016,towers_gymnasium_2025} to include epistemic uncertainty by treating the hole layout as unknown.
A $4{\times}4$ grid where the agent starts at the top-left cell and must reach the goal at the bottom-right cell.
A subset of the remaining cells are holes; stepping into a hole terminates the episode with failure.
The ice surface makes transitions stochastic: with probability $1 - p_\text{slip}$ the agent moves in the intended direction, and with probability $p_\text{slip}/3$ it slips to each of the three remaining directions, where $p_\text{slip} = 0.1$.

\paragraph{Configurations.}
The hole layout is the unknown parameter~$\theta$.
We sample $15$ hole configurations uniformly at random (with fixed seeds for reproducibility), each placing $2$ holes on the grid (a fraction $0.2$ of the $14$ non-start/goal cells, truncated to an integer).
The agent does not know the hole locations and must infer them from observations.

\paragraph{Observation model.}
The observation model has two modalities.
First, $2 n_\text{pos}$ position channels observe the agent's position and scan mode with near-deterministic precision ($0.999/0.001$), so the agent always knows where it is.
Second, $n_\text{pos}$ grid-cell channels each provide a binary ``hole/safe'' reading for the corresponding cell.
Unscanned grid-cell observations are corrupted by distance-dependent noise: $\text{noise} = \alpha_\text{base} + \alpha_\text{range} \cdot d / d_\text{max}$, where $d$ is the Manhattan distance from the agent to the cell.
A low-noise reading directly constrains which configurations~$\theta$ are consistent, making the observation model approximately unambiguous.
A SCAN action switches all grid-cell observations to near-deterministic ($0.999/0.001$) at the cost of one time step.

\paragraph{State, action, and observation spaces.}
States encode position and scan mode: $2 \times n_\text{pos}$ states total (e.g., $32$ for a $4{\times}4$ grid).
Actions are the four cardinal directions plus SCAN ($|\mathcal{U}| = 5$).
Observations consist of $2 n_\text{pos}$ position channels and $n_\text{pos}$ grid-cell channels.

\paragraph{Priors.}
The goal prior $\hat{p}(x_T)$ is a softmax preference peaking at the bottom-right cell, with penalties for hole positions (varying per~$\theta$).
The parameter prior is uniform over the $15$ configurations: $p(\theta) = 1/15$.
The initial state prior $p(x_0)$ places all mass on the top-left cell in unscanned mode.
The action prior assigns weight~$1$ to each movement action and weight $c_\text{scan} = 0.1$ to SCAN, then normalizes: $p(u_t) = w_{u_t} / \sum_u w_u$, giving $p(\text{move}) \approx 0.244$ and $p(\text{SCAN}) \approx 0.024$.

\paragraph{Planning parameters.}
Planning horizon $T = 15$, fixed across all decision steps.
All methods use $400$ iterations.
We run $1000$ episodes per method with a maximum of $15$ steps per episode.
Episode~$i$ uses seed $i$ for reproducibility.

\subsection{RockSample Environment}
\label{appx:rocksample}

\paragraph{Grid layout.}
We use the canonical RockSample~$(4,3)$ environment~\citep{smith_heuristic_2004} in the epistemic planning framework, treating rock quality as the unknown parameter~$\theta$.
A $4{\times}4$ grid where the agent starts at the left edge and can exit via the right edge.
Three rocks are placed at known grid positions; their quality (good or bad) is unknown.

\paragraph{Configurations.}
Rock quality defines the unknown parameter~$\theta$.
With $3$ rocks each having binary quality, there are $n_\theta = 8$ configurations.
The agent does not know rock quality and must infer it from CHECK readings.

\paragraph{Observation model.}
The observation model has two components.
First, position channels observe the agent's position with noise parameter $\alpha_\text{pos} = 0.1$.
Second, rock-quality information is available only through CHECK actions: a CHECK of rock~$i$ returns a binary ``good/bad'' reading whose accuracy depends on the Euclidean distance~$d$ from the agent to that rock:
$p(\text{correct} \mid d) = \tfrac{1}{2}\bigl(1 + 2^{-d/d_{1/2}}\bigr)$,
where $d_{1/2} = 2$ is the half-efficiency distance.
At $d = 0$ the reading is deterministic; at $d = d_{1/2}$ accuracy is $75\%$; as $d \to \infty$ the reading approaches chance.
Outside CHECK actions the agent receives no rock information, so no action changes the observation model at future steps and novelty is the only source of epistemic value.

\paragraph{Actions and rewards.}
The agent has nine actions: four cardinal movements, one CHECK per rock, SAMPLE, and EXIT (move off the right edge).
CHECK$_i$ senses rock~$i$ at the cost of one time step, providing the explicit epistemic actions.
SAMPLE collects the rock at the current position, yielding reward $+2$ (good rock) or penalty $-3$ (bad rock).
EXIT gives a fixed reward $+1$.
Movement is deterministic ($p_\text{slip} = 0$).

\paragraph{State, action, and observation spaces.}
States encode position: $n_\text{pos} = 16$ states for the $4{\times}4$ grid.
Actions: four cardinal directions, three CHECK actions, SAMPLE, and EXIT ($|\mathcal{U}| = 9$).
Observations consist of $n_\text{pos}$ position channels and $3$ binary rock-quality channels.

\paragraph{Priors.}
The goal prior $\hat{p}(x_T)$ is a softmax preference with temperature $\tau = 1$ peaking at EXIT cells, with penalties for remaining on the grid.
The parameter prior is uniform over the $8$ configurations: $p(\theta) = 1/8$.
The action prior assigns weight~$1$ to each movement action and weight $c_\text{exit} = 0.5$ to EXIT, then normalizes.

\paragraph{Planning parameters.}
Planning horizon $T = 12$, fixed across all decision steps.
All methods use $100$ iterations.
We run $1000$ episodes per method with a maximum of $25$ steps per episode.
Episode~$i$ uses seed $i$ for reproducibility.

\paragraph{Full results with confidence intervals.}
\Cref{tab:rocksample_full} reports RockSample results with 95\% confidence intervals.

\begin{table}[ht]
    \centering
    \caption{RockSample results with 95\% confidence intervals, averaged over 1000 episodes.}
    \label{tab:rocksample_full}
    \setlength{\tabcolsep}{4pt}
    \begin{tabular}{l c c c}
        \toprule
        \textbf{Method} & \textbf{Avg.\ reward} & \textbf{Retrieval (\%)} & \textbf{Avg.\ steps} \\
        \midrule
        BP              & $1.00\;[1.00, 1.00]$  & $0.0$                   & $3.00$               \\
        VBP             & $1.00\;[1.00, 1.00]$  & $0.0$                   & $3.00$               \\
        RM-MP           & $1.00\;[1.00, 1.00]$  & $0.0$                   & $3.00$               \\
        Nuijten-MP      & $1.00\;[1.00, 1.00]$  & $0.0$                   & $3.00$               \\
        AIF-MP          & $4.01\;[3.90, 4.12]$  & $98.7$                  & $8.56$               \\
        \bottomrule
    \end{tabular}
\end{table}

\subsection{Wumpus World Environment}
\label{appx:wumpus_world}

\paragraph{Grid layout.}
We adapt the classic Wumpus World environment~\citep{russell_artificial_1995} to include epistemic uncertainty by treating the layout as unknown.
We simplify the classic dynamics to isolate the epistemic challenge: the agent has no orientation or inventory and navigates by cardinal movement.
Transitions slip to one of the three unintended directions with probability $p_\text{slip} = 0.01$, as in Frozen Lake.
The agent starts at cell~$0$ and must reach the gold cell.
The grid contains pits and a wumpus (both absorbing hazards that terminate the episode) and a single gold cell.

\paragraph{Configurations.}
The locations of pits, wumpus, and gold define the unknown parameter~$\theta$.
We use $25$ configurations sampled with fixed seeds.
Each configuration places $4$ pits, one wumpus, and one gold on the grid, excluding the agent's starting cell.

\paragraph{Observation model.}
The observation model has two components, both noisy when unscanned.
Three binary feature channels detect adjacency to hazards: \emph{breeze} fires if adjacent to a pit, \emph{stench} fires if adjacent to the wumpus, and \emph{glitter} fires if on the gold cell.
Unscanned feature channels have true-positive probability $p_\text{tp} = 1 - \alpha_\text{obs}$ and false-positive probability $p_\text{fp} = 0.1 \, \alpha_\text{obs}$.
Additionally, $n_\text{pos}$ position channels encode the agent's position, also noisy when unscanned.
Observations are ambiguous: a breeze indicates a nearby pit but not which neighbor, and position uncertainty compounds this ambiguity.
The agent must integrate evidence across multiple positions to triangulate hazard locations.
A SCAN action switches the three feature channels to near-deterministic ($0.999/0.001$) for a single step, at the cost of one time step; the scan mode decays after one step, and position channels are unaffected.
Because the improved precision lasts only one step, scanning does not change the observation model at future steps: the value of a scan lies entirely in the information the resulting reading carries about~$\theta$.

\paragraph{State, action, and observation spaces.}
States encode position and a transient scan mode that decays after one step: $2 \times n_\text{pos}$ states total (e.g., $50$ for a $5{\times}5$ grid).
Actions are the four cardinal directions plus SCAN ($|\mathcal{U}| = 5$).
Observations consist of $3$ binary feature channels (breeze, stench, glitter) and $n_\text{pos}$ position channels.

\paragraph{Priors.}
The goal prior $\hat{p}(x_T)$ is a softmax preference peaking at the gold cell for each~$\theta$, with penalties for pits and the wumpus.
The parameter prior is uniform over the $25$ configurations.
The initial state prior $p(x_0)$ places all mass on position~$0$ in unscanned mode.
The action prior assigns weight~$1$ to each movement action and weight $c_\text{scan} = 0.7$ to SCAN, then normalizes: $p(u_t) = w_{u_t} / \sum_u w_u$, giving $p(\text{move}) \approx 0.213$ and $p(\text{SCAN}) \approx 0.149$.

\paragraph{Planning parameters.}
Planning horizon $T = 8$, fixed across all decision steps.
All methods use $150$ iterations.
SCAN costs one time step (same as Frozen Lake).
We run $1000$ episodes per method with a maximum of $16$ steps per episode.
Episode~$i$ uses seed $i$ for reproducibility.

\paragraph{Representative trajectories.}
\label{appx:wumpus_trajectories}
\Cref{fig:wumpus_trajectory_16,fig:wumpus_trajectory_02} show two episodes that illustrate the behavioral signatures behind the aggregate Wumpus gap reported in \capsecref{subsec:results}.
Each panel shows one method on a fixed layout, with the agent's path shaded by step order (light early, dark late) and SCAN actions marked at the cell where they were taken.
The line below each panel reports the terminal reward, the number of steps, and the outcome.

BP and VBP never scan and stall near the start in both episodes.
RM-MP moves more but without direction: a single early scan does not redirect it, and it times out mid-grid (\Cref{fig:wumpus_trajectory_16}) or falls into a pit (\Cref{fig:wumpus_trajectory_02}).
Nuijten-MP scans but does not convert the evidence into safe progress: it advances only one cell before timing out (\Cref{fig:wumpus_trajectory_16}) or walks into the pit beside the start (\Cref{fig:wumpus_trajectory_02}).
AIF-MP interleaves single scans with movement and is the only method that reaches the gold in either episode.
Aggregate success rates nevertheless remain below half (\Cref{tab:results}); these episodes illustrate the qualitative separation, not uniform success.

\begin{figure}[ht]
    \centering
    \includegraphics[width=\textwidth]{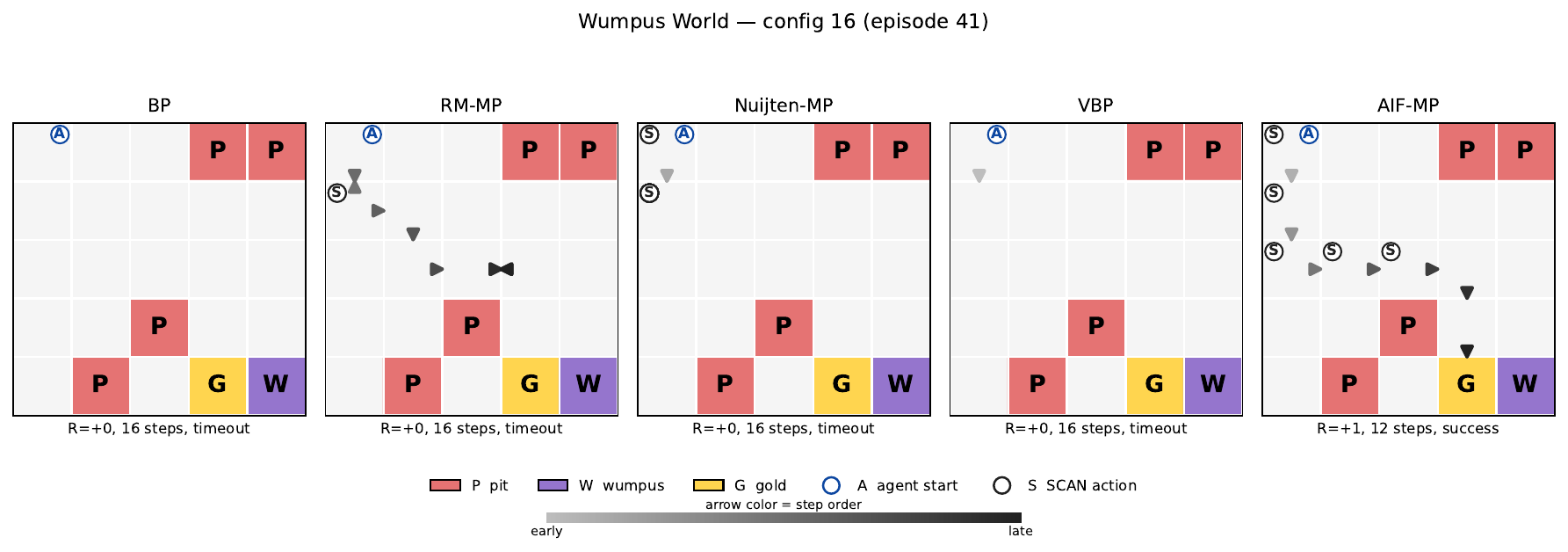}
    \caption{Wumpus World trajectories for all five methods on configuration~$16$, episode~$41$. Symbols: P pit, W wumpus, G gold, A agent start, S SCAN action; arrow shade encodes step order. BP and VBP never scan and stall near the start. RM-MP scans once but wanders to the middle of the grid and times out. Nuijten-MP scans twice yet advances only one cell. AIF-MP interleaves scans with movement and reaches the gold in $12$ steps.}
    \label{fig:wumpus_trajectory_16}
\end{figure}

\begin{figure}[ht]
    \centering
    \includegraphics[width=\textwidth]{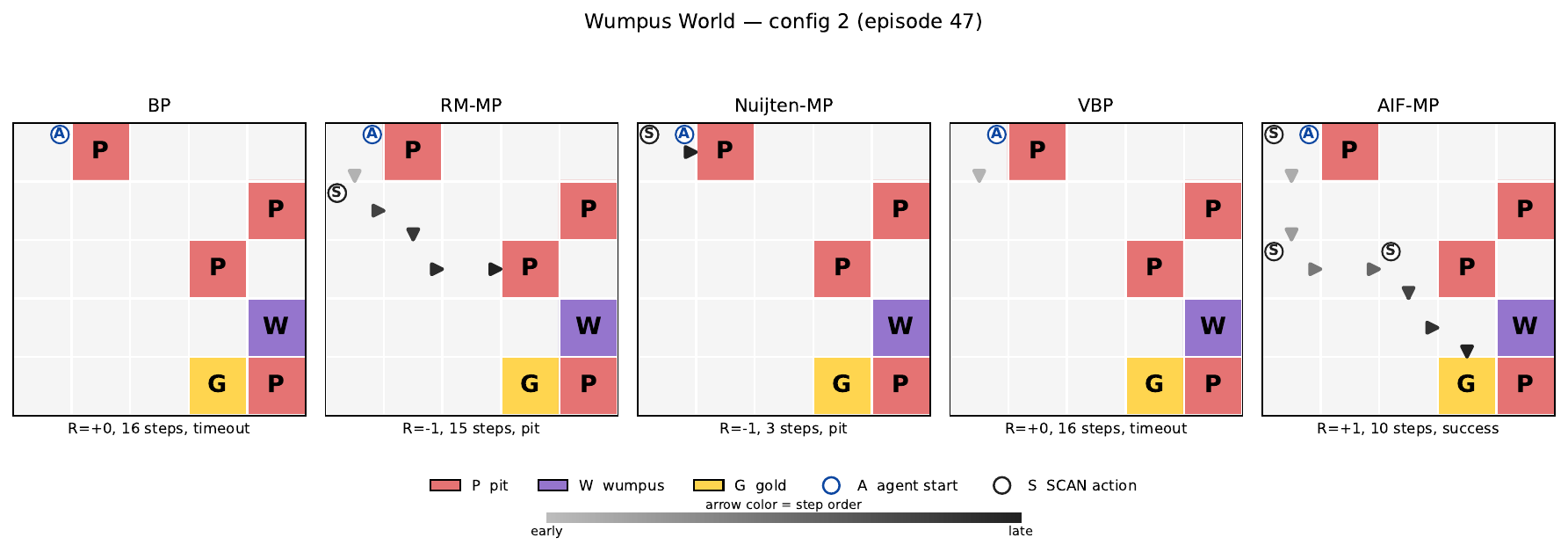}
    \caption{Wumpus World trajectories on configuration~$2$, episode~$47$, where the gold lies in the opposite corner of the grid. BP and VBP stall near the start. RM-MP wanders and falls into a pit after $15$ steps. Nuijten-MP scans once and then walks into the pit beside the start. AIF-MP scans its way across the grid and reaches the gold in $10$ steps.}
    \label{fig:wumpus_trajectory_02}
\end{figure}

\subsection{Common Implementation Details}
\label{appx:implementation}

\paragraph{Software framework.}
All tensor operations and message-passing routines use JAX \citep{bradbury_jax_2018} with JIT compilation.
All planning and inference functions are decorated with \texttt{jax.jit}, with the planning horizon and number of iterations as compile-time constants.

\paragraph{Log-space computation.}
All internal messages, beliefs, and channels are stored as log-probabilities.
A sentinel value of $-10^{12}$ replaces $-\infty$ for numerical stability, and a safe logarithm floors its argument at $10^{-30}$ before taking the log.
Conversion to probability space occurs only at final output via softmax.

\paragraph{Message damping.}
Channel-based methods (VBP, RM-MP, and AIF-MP) apply geometric damping~\eqref{eq:damping}, implemented as a linear interpolation in log-space followed by per-condition renormalization.
Damping is applied to every channel the method uses (policy, dynamics, observation, and marginal observation) after each BP iteration.
Structural zeros are preserved: a position remains at $-10^{12}$ only if both old and new values are $-10^{12}$.
For VBP, this reduces to damping the single policy channel; BP and Nuijten-MP do not use damping ($\lambda = 1.0$).
The damping parameter~$\lambda$ is selected per method and environment from the convergence sweep described in \Cref{appx:convergence}.
\Cref{tab:damping_values} reports the selected values.

\begin{table}[ht]
    \centering
    \caption{Damping parameter $\lambda$ per method and environment, selected from the convergence sweep (\Cref{appx:convergence}).}
    \label{tab:damping_values}
    \setlength{\tabcolsep}{4pt}
    \begin{tabular}{l c c c}
        \toprule
        \textbf{Method} & \textbf{Frozen Lake} & \textbf{RockSample} & \textbf{Wumpus World} \\
        \midrule
        BP              & $1.0$                & $1.0$               & $1.0$                 \\
        VBP             & $0.9$                & $0.9$               & $0.9$                 \\
        RM-MP           & $0.25$               & $0.25$              & $0.25$                \\
        Nuijten-MP      & $1.0$                & $1.0$               & $1.0$                 \\
        AIF-MP          & $0.9$                & $0.9$               & $0.75$                \\
        \bottomrule
    \end{tabular}
\end{table}

\paragraph{Message initialization.}
All channels (dynamics, observation, and marginal observation) are initialized to uniform distributions over their respective domains.
Parameter cavity beliefs are initialized to the prior $p(\theta)$; state beliefs are initialized to uniform over valid states.

\paragraph{Action selection.}
Actions are selected by greedy argmax over the action marginal at $t = 0$ (deterministic, no sampling).

\subsection{Convergence Behavior}
\label{appx:convergence}

To select damping parameters and characterize convergence, we run a systematic sweep over $\lambda \in \{0.25, 0.4, 0.5, 0.6, 0.75, 0.9\}$ for each channel-based method (RM-MP, VBP, AIF-MP) and $\lambda = 1.0$ for loopy BP, across all three environments.
Each configuration is run with $5$ random seeds and $1{,}000$ BP iterations.
Convergence is declared when the maximum absolute change in any channel falls below $10^{-4}$.

\paragraph{Convergence rate.}
\Cref{fig:convergence_heatmaps} reports the fraction of seeds that converge (color) and the median number of iterations to convergence (in parentheses) for each method--damping combination.
The three environments exhibit qualitatively different convergence profiles.

On RockSample (\Cref{fig:conv_heatmap_rs}), all methods converge at all damping values.
The deterministic dynamics make the dynamics channel update exact: it converges within two iterations regardless of~$\lambda$, so the damping choice is immaterial for RM-MP here.

On Frozen Lake (\Cref{fig:conv_heatmap_fl}), the dynamics channel (RM-MP) is the most fragile: it converges only at conservative damping ($80\%$ for $\lambda \leq 0.4$) and not at all for $\lambda \geq 0.5$, with large VFE oscillations.
This is consistent with the min-max structure identified in \capsecref{sec:message_passing}: the stochastic dynamics activate the opposing-sign dynamics channel, which amplifies update steps when damping is insufficient.
VBP is stable at nearly all damping values ($60$--$100\%$), since it uses only the planning channel (no opposing signs).
AIF-MP converges reliably at higher damping ($100\%$ at $\lambda = 0.9$) but slowly at conservative settings ($20\%$ at $\lambda = 0.25$).

On Wumpus World (\Cref{fig:conv_heatmap_ww}), VBP and AIF-MP converge reliably: VBP at $100\%$ for every damping value, AIF-MP at $100\%$ for $\lambda \leq 0.75$ and $80\%$ at $\lambda = 0.9$.
The dynamics channel is again the most fragile, converging on at most $40\%$ of seeds and on none for $\lambda \geq 0.6$.
The one-step scan and the local adjacency signals keep the observation-side channels active, yet the joint scheme remains stable.

\begin{figure}[ht]
    \centering
    \begin{subfigure}[t]{0.48\textwidth}
        \centering
        \includegraphics[width=\textwidth]{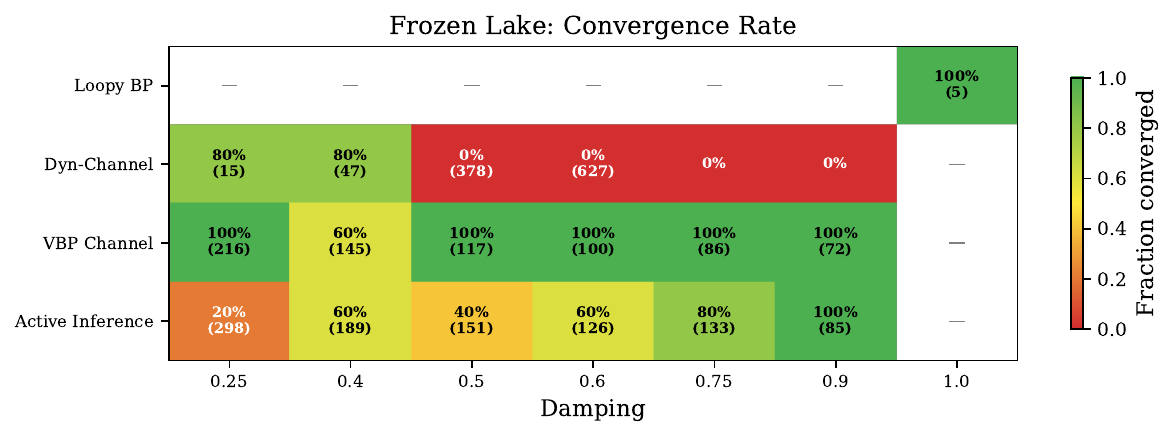}
        \caption{Frozen Lake}
        \label{fig:conv_heatmap_fl}
    \end{subfigure}
    \hfill
    \begin{subfigure}[t]{0.48\textwidth}
        \centering
        \includegraphics[width=\textwidth]{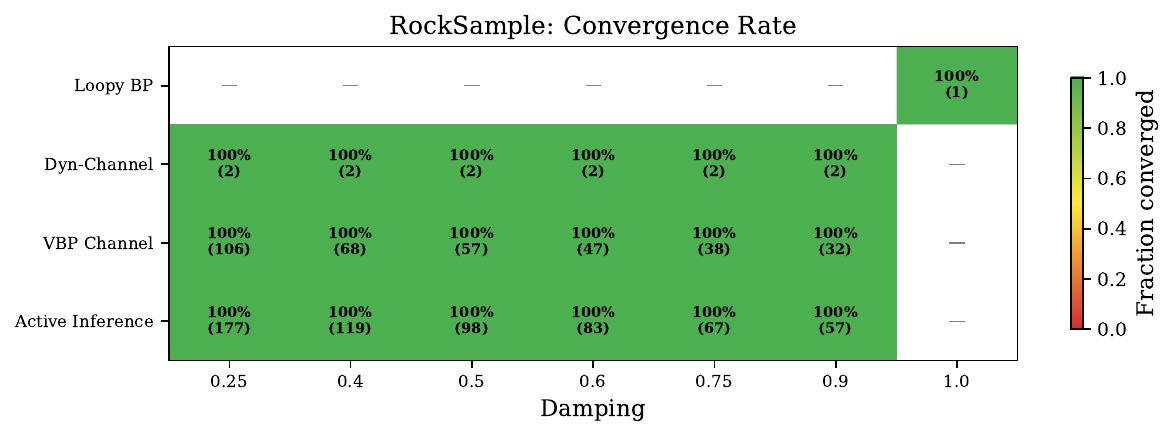}
        \caption{RockSample}
        \label{fig:conv_heatmap_rs}
    \end{subfigure}
    \\[1em]
    \begin{subfigure}[t]{0.48\textwidth}
        \centering
        \includegraphics[width=\textwidth]{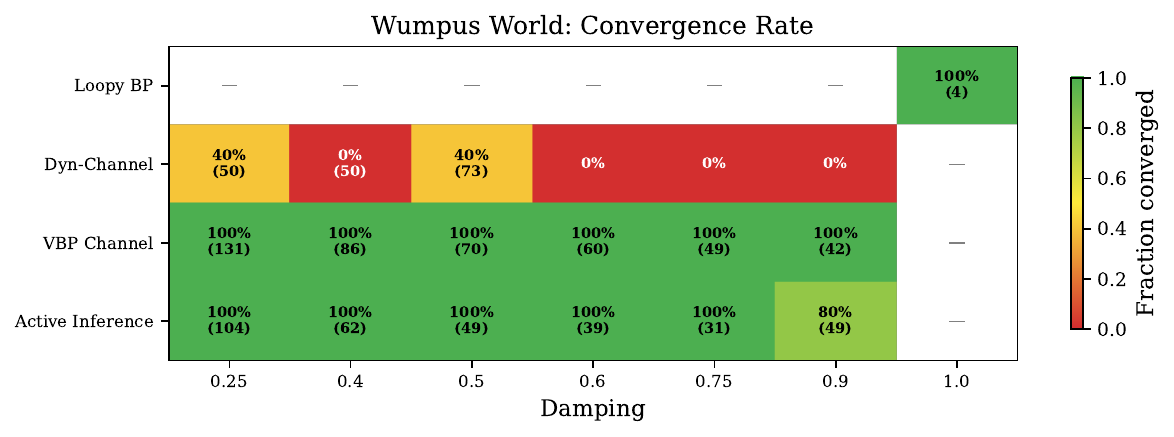}
        \caption{Wumpus World}
        \label{fig:conv_heatmap_ww}
    \end{subfigure}
    \caption{Convergence rate (color) and median iterations to convergence (in parentheses) for each method and damping value~$\lambda$, averaged over $5$ seeds with $1{,}000$ iterations each. Dashes indicate that the method does not use damping at that value.}
    \label{fig:convergence_heatmaps}
\end{figure}

\paragraph{VFE convergence dynamics.}
\Cref{fig:vfe_method_comparison} shows VFE traces for all methods at their best damping on Frozen Lake.
All four methods reach a stationary plateau within the iteration budget: loopy BP within ${\sim}5$ iterations, and the channel-augmented methods within $150$ iterations, depending on the number of active channels.
The absolute VFE values at the plateau are not directly comparable across methods because each method optimizes a different functional (\Cref{tab:method_comparison}).

\begin{figure}[ht]
    \centering
    \includegraphics[width=0.9\textwidth]{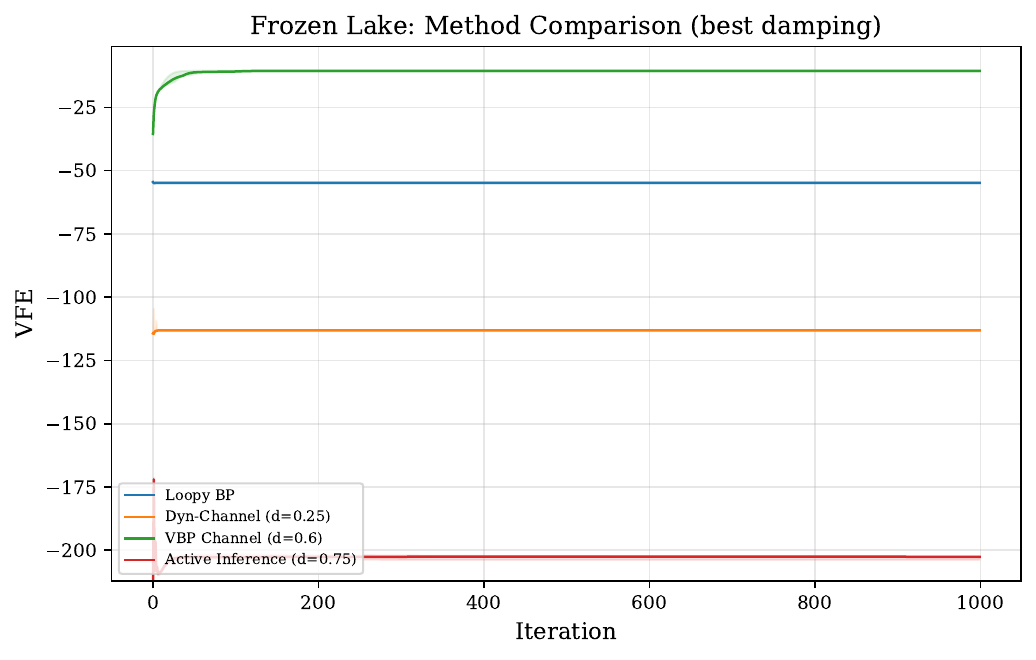}
    \caption{VFE traces on Frozen Lake for all methods at their best damping (seed-averaged with $1\sigma$ bands). All four methods reach a stationary plateau within the iteration budget.}
    \label{fig:vfe_method_comparison}
\end{figure}

\paragraph{Damping selection.}
The damping parameter~$\lambda$ is selected per method and environment as the value with the highest convergence rate; ties are broken by fewest median iterations.
The selected values are reported in \Cref{tab:damping_values}.
The pattern is consistent across environments: RM-MP requires conservative damping ($\lambda = 0.25$) due to the opposing dynamics channel, VBP tolerates aggressive damping ($\lambda = 0.9$), and AIF-MP sits in between ($\lambda = 0.75$ on Wumpus World, $0.9$ elsewhere).
The worst observed case at the selected values is a single AIF-MP seed on Wumpus World that needs ${\sim}320$ iterations; all other runs converge within $150$.
Developing convergence theory for the channel-augmented scheme remains an open problem.

\end{document}